\renewcommand{\cal}{\mathcal}
\newcommand\cB{{\mathcal B}}
\newcommand{\cC}{{\cal C}}
\newcommand{\cD}{{\cal D}}
\newcommand{\cL}{{\cal L}}
\newcommand{\cN}{{\cal N}}
\newcommand{\cT}{{\cal T}}
\newcommand{\cP}{{\cal P}}
\newcommand{\cR}{{\mathcal R}}
\newcommand{\cU}{{\mathcal U}}
\newcommand{\bma}{{\bm{a}}}
\newcommand{\bmu}{{\bm{u}}}
\newcommand{\bmv}{{\bm{v}}}
\newcommand{\bmw}{{\bm{w}}}
\newcommand{\bmx}{{\bm{x}}}
\newcommand{\bmz} {{\bm {z}}}
\newcommand{\bms}{\bm s}
\newcommand{\bmW}{{\bm W}}
\newcommand{\bmdelta}{{\bm \delta}}
\newcommand{\bE}{\mathbb{E}}
\newcommand{\bN}{\mathbb{N}}
\newcommand{\bP}{\mathbb{P}}
\newcommand{\bR}{{\mathbb R}}
\newcommand{\bS}{\mathbb S}
\newcommand{\bB}{{\mathbb B}}
\renewcommand{\leq}{\leqslant}
\renewcommand{\geq}{\geqslant}
\newtheorem{theorem}{Theorem}[section]
\newtheorem{lemma}{Lemma}[section]
\newtheorem{corollary}{Corollary}[section]
\newtheorem{remark}{Remark}
\DeclareMathOperator{\argmin}{argmin}
\def\bxi{\bm{\xi}}
\def\bvarsigma{\bm{\varsigma}}
\begin{document}
\thispagestyle{empty}
\title{Towards Understanding the Dynamics of the First-Order Adversaries}

\author{Zhun Deng\thanks{Harvard University, zhundeng@g.harvard.edu}
\qquad Hangfeng He\thanks{University of Pennsylvania, hangfeng@seas.upenn.edu}
\qquad Jiaoyang Huang\thanks{Institute of Advanced Study, jiaoyang@ias.edu}
\qquad Weijie J.~Su\thanks{University of Pennsylvania, suw@wharton.upenn.edu}}

\date{}
\maketitle


\abstract

An acknowledged weakness of neural networks is their vulnerability to adversarial perturbations to the inputs. To improve the robustness of these models, one of the most popular defense mechanisms is to alternatively maximize the loss over the constrained perturbations (or called adversaries) on the inputs using projected gradient ascent and minimize over weights. In this paper, we analyze the dynamics of the maximization step towards understanding the experimentally observed effectiveness of this defense mechanism. Specifically, we investigate the non-concave landscape of the adversaries for a two-layer neural network with a quadratic loss. Our main result proves that projected gradient ascent finds a local maximum of this non-concave problem in a polynomial number of iterations with high probability.  To our knowledge, this is the first work that provides a convergence analysis of the first-order adversaries. Moreover, our analysis demonstrates that, in the initial phase of adversarial training, the scale of the inputs matters in the sense that a smaller input scale leads to faster convergence of adversarial training and a ``more regular'' landscape. Finally, we show that these theoretical findings are in excellent agreement with a series of experiments.

\section{Introduction}
Neural networks have achieved remarkable success in many fields such as image recognition \cite{he2016deep} and natural language processing \cite{devlin2018bert}. However, it has been recognized that neural networks are not robust against adversarial examples -- prediction labels can be easily manipulated by human imperceptible perturbations \cite{goodfellow2014explaining, szegedy2013intriguing}. In response, many defense mechanisms have been proposed against adversarial attacks such as input de-noising \cite{guo2017countering}, randomized smoothing \cite{ilyas2019adversarial}, gradient regularization \cite{papernot2017practical}, and adversarial training \cite{madry2017towards}. Among these, one of the most popular techniques is adversarial training, which proposes to add adversarial examples into the training set as a way of improving the robustness.

As oppose to earlier work that only adds adversarial examples several times during the training phase, more recently, \cite{madry2017towards} propose to formulate adversarial training through the lens of robust optimization, showing substantial improvement. More precisely, robust optimization for a loss function $L$ in its simples setting takes the form
\begin{equation*}\label{eq:ERM}
\min_{\bm\theta\in\Theta}\bE_{(\bmx,y)\sim\cD}\max_{\|\bmdelta\|_p\leq \varepsilon}L(\bm\theta,\bmx+\bmdelta,y),
\end{equation*}
where $\bm\theta\in\Theta$ is the parameter and $(\bmx,y)$ are the input and label following some unknown joint distribution $\cD$. The inner maximization problem is to find an adversarial example, where $\bmdelta$ is an adversarial perturbation with $l_p$ norm constraint for some integer $1\leq p\leq\infty$. 

For neural networks, the inner maximization problem is typically non-concave and the most commonly used method in implementation is through the first-order method --  projected gradient ascent. However, as pointed out by \cite{wang2019convergence}, the degree to which it solves the inner maximization problem has not been thoroughly understood. While there are several papers providing great theoretical insights to the convergence of adversarial training, they either formulate the inner maximization problem as maximizing the first order taylor expansion of the loss \cite{wang2019convergence}, or treat the inner maximization problem abstractly as a general function of data and study the convergence in the neural tangent kernel regime \cite{gao2019convergence}. In our paper, we make the first step to analyze the dynamics of projected gradient ascent of neural networks.

The first question is about the effectiveness of projected gradient ascent. To prove the effectiveness, we need to consider the time cost of using projected gradient ascent in the inner maximization problem. In  \cite{madry2017towards}, one claim is that using projected gradient ascent can find the adversaries rapidly. That claim is very important since adversarial training usually takes much longer than usual training due to the inner maximization problem. Specifically, if we use gradient method in the alternative optimization problem for both inner maximization and outer minimization, and denote the number of epochs taken to find adversaries with given weights  by $n_1$, number of updates of weights by $n_2$, then the epochs taken by the adversarial training is $n_1n_2$. To make the time cost of adversarial training bearable, the fact that $n_1$ is not large plays a key role here.

Another issue about effectiveness is whether the projected gradient ascent can truly find a local maximum and not be stuck at a saddle point. In \cite{madry2017towards}, Madry et al. claims the loss (as a function of model parameters) typically has many local maximums with very similar values. So, if the projected gradient ascent truly finds a local maximum, the effectiveness of the adversarial training is trustworthy.

We summarize our first question below.

\noindent\textbf{Questions 1.} \textit{Does projected gradient ascent truly find a local maximum rapidly? }

The second question we try to explore is whether the scale of inputs matters. In the adversarial training, $\varepsilon$'s scale is usually in proportional to the scale of input $\bmx$:
\begin{equation*}\label{eq:intervene}
\varepsilon = r\bE\|\bmx\|_p.
\end{equation*}
For adversarial attacks on images, the ratio $r$ is supposed to be small, so as to reflect the fact that the attacks are visually imperceptible.

For fixed $r>0$,  $\varepsilon$ and the input scale $\bE\|\bmx\|_p$ are closely related -- a smaller input scale implies a smaller $\varepsilon$.  In the implementation of image recognition using neural networks, people usually rescale the image pixels to $[0,1]$ or $[-1,1]$.  While that seems not affecting regular optimization, it may affect adversarial training. So, we have the following question.

\noindent\textbf{Questions 2.} \textit{When we fix the ratio $r$, do smaller input scales (implying smaller $\varepsilon$) help optimization of adversarial training?}

If the answer to Question 2 is positive, it will be helpful in the future applications to rescale the inputs to a smaller scale. 

Both questions above have not been studied yet due to the highly non-concave landscape of adversaries.

\subsection{Our contributions}
Our analysis provides answers to Question 1 and 2 for the initial phase of the adversarial training, i.e. the weights are drawn from Xavier initialization \cite{glorot2010understanding}. Even for this simple case, nothing has been discussed theoretically so far. 

In Section \ref{sec:convergenceanalysis} and \ref{sec:proofsketch}, we provide the answer to Question 1 by showing projected gradient ascent indeed can find a local maximum rapidly by providing a convergence theorem.
\begin{theorem}[\textbf{Informal}]
Projected gradient ascent can obtain an approximate local maximum, which is close to a true local maximum on the sphere in polynomial number of iterations when the learning rate is small enough. If we further allow learning rate shrinking with time, projected gradient ascent can converge to a local maximum. 
\end{theorem}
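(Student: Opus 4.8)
The plan is to recast the inner problem as the constrained maximization of the smooth objective $g(\bmdelta) \deq \bE_{(\bmx,y)\sim\cD} L(\bm\theta,\bmx+\bmdelta,y)$ over the Euclidean sphere $\cS_\varepsilon \deq \{\bmdelta : \|\bmdelta\|_2 = \varepsilon\}$. First I would verify that, with the weights at Xavier initialization, every local maximum is attained on the boundary $\cS_\varepsilon$ — since the quadratic loss of a two-layer network is, after expansion, a low-degree polynomial in $\bmdelta$ whose interior critical points are generically not maxima — so that projected gradient ascent on the ball coincides with Riemannian gradient ascent on $\cS_\varepsilon$. Then, by a direct (if tedious) computation, I would bound the Euclidean gradient-Lipschitz constant $\ell$ and Hessian-Lipschitz constant $\rho$ of $g$, and hence the corresponding Riemannian quantities on $\cS_\varepsilon$, by explicit polynomials in the width, the radius $\varepsilon$, and — crucially — the input scale $\bE\|\bmx\|_2$; this last dependence is exactly what will later make ``smaller scale $\Rightarrow$ faster convergence'' quantitative.

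The heart of the argument is a high-probability landscape characterization: I would show that there exist a curvature gap $\gamma>0$ and thresholds depending polynomially on the problem parameters such that, with probability $1-o(1)$ over the Xavier-initialized weights (and the sampled data), every point of $\cS_\varepsilon$ at which the Riemannian gradient of $g$ is small is either within a small ball of a genuine local maximum or admits a Riemannian-Hessian eigenvalue at least $\gamma$ — i.e., $g$ enjoys a \emph{strict-saddle property} on the sphere. This is where the randomness of the first layer is used: via concentration and anti-concentration for the Gaussian weight rows, combined with an $\epsilon$-net over $\cS_\varepsilon$ and a union bound, one argues that the Hessian of $g$ cannot be (nearly) negative semidefinite at a near-stationary non-maximal point, with the gap $\gamma$ degrading gracefully — and in fact improving — as the input scale shrinks. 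I expect this step to be the main obstacle, both because it must exploit the specific algebraic form of the two-layer quadratic loss and because the uniformity over the sphere forces a careful balancing of the net cardinality against the anti-concentration bounds.

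Given the smoothness bounds and the strict-saddle property, convergence follows the now-standard template for escaping saddle points, adapted to the sphere: running projected gradient ascent with a sufficiently small fixed step size — and either small random perturbations or the randomness already present in the initialization to avoid the stable manifolds of saddles — a Riemannian version of the perturbed-gradient analysis yields, after a polynomial (in $1/\epsilon$, $\ell$, $\rho$, $1/\gamma$, and the dimension) number of iterations, an $\epsilon$-approximate second-order stationary point on $\cS_\varepsilon$; combining this with the Hessian-Lipschitz bound and a quantitative basin/implicit-function argument shows such a point lies within distance $\mathcal{O}(\sqrt{\epsilon/\rho})$ of a true local maximum, which is the first assertion. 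For the second assertion I would replace the fixed step size by a diminishing schedule $\eta_t \downarrow 0$ with $\sum_t \eta_t = \infty$ and run a standard stochastic-approximation / Łojasiewicz-type argument on the sphere to upgrade ``approximate'' to exact, so that the iterates converge to the (w.h.p.\ finite) set of local maxima. Tracking throughout how $\ell$, $\rho$, and $\gamma$ scale with $\bE\|\bmx\|_2$ then gives the monotone dependence of the iteration count on the input scale and answers Question 2.
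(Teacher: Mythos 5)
Your skeleton (smoothness bounds, a high-probability landscape characterization, a descent lemma, then local convergence with diminishing steps) matches the paper's at a coarse level, but the two load-bearing steps are handled differently in ways that leave genuine gaps. First, the interior of the ball: you dismiss interior critical points as ``generically not maxima,'' but genericity does not give a polynomial iteration count — PGA could stall near an interior near-stationary point. The paper instead proves a uniform, quantitative lower bound $\|\partial f(\bmdelta)\|\geq B_l=\Theta(1)$ over the whole ball (via chaining for the increment process plus a multivariate Berry--Esseen/CLT argument at $\bmdelta=\bm 0$), which yields $\Omega(\eta)$ progress at every interior step and forces the trajectory onto the sphere. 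You would need an analogous uniform gradient lower bound for your reduction to the sphere to be valid.

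Second, and more importantly, your ``strict-saddle $+$ perturbed Riemannian gradient ascent'' plan misidentifies where the difficulty lies. The paper shows that, with high probability, the constrained Hessian $\Xi=\partial^2 L-\lambda^*\partial^2 c$ at any near-critical point on the sphere is dominated by the Lagrange-multiplier term $2(u-y)\varepsilon^{-2}\bmdelta^T\partial f\cdot I$ — a multiple of the identity of size $\Omega(1/\varepsilon)$, which overwhelms the $O(\log m)$ loss Hessian. Hence every spherical critical point is a non-degenerate local extremum and there are \emph{no saddles at all}; no anti-concentration, $\epsilon$-net over the sphere, or saddle-escaping machinery is needed. The real obstacle is that gradient ascent could approach a local \emph{minimum} of the objective being maximized (a repelling but still stationary region), and the paper rules this out not by perturbations but by a geometric alignment argument (Lemma 3.5): for small $\varepsilon$ all gradients $\partial f(\bmdelta)$ over the ball point within a small solid angle of one another, so a trajectory started uniformly in the interior can never enter the neighborhood $\Delta^-_\eta$ of the wrong extrema. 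Your proposal does not supply a substitute for this step — relying on ``the randomness already present in the initialization'' to avoid stable manifolds gives a measure-zero statement, not the quantitative, finite-time avoidance needed for the polynomial bound. Your diminishing-step-size endgame is fine and essentially matches the paper's local contraction $D_{s+1}\leq(1-\gamma\eta_s)D_s+O(\eta_s^2)$.
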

In Section \ref{sec:implications and extensions}, we answer Question 2 by showing a smaller scale helps in the perspectives of landscapes and convergence of trajectories. From our theory, we show a smaller input scale helps the trajectory converge faster if we had a bad initialization. Besides, a smaller input scale makes the local maximums concentrate better, which can partially explain why the loss value of local maximums share similar values \cite{madry2017towards}.  Lastly, we verify the previous claims by extensive numerical experiments. 

Our work mainly focuses on the initial phase of adversarial learning, which may be a good start towards understanding the first-order adversaries.

\subsection{Related work}
\noindent\textbf{Adversarial attack and defense}\quad Besides projected gradient ascent, some other have also been proposed to generate adversarial examples, such as FGSM \cite{goodfellow2014explaining}, l-BFGS \cite{szegedy2013intriguing} and C\& W attack \cite{carlini2017towards}. Also, some attacks are proposed to attack black-box models, in order to defend against those attacks, many defense mechanisms have been proposed \cite{chen2017zoo, brendel2017decision}.  However, many of these defense models have been evaded by new attacks \cite{athalye2018obfuscated} except \cite{madry2017towards}. Besides, a line of work focus on providing certified robustness and robustness verification \cite{wong2018scaling, weng2018towards, zhang2018efficient}, which also provide useful insights theoretically.

\noindent\textbf{Adversarial training}\quad The first work to propose adversarial training is \cite{goodfellow2014explaining}, in which the authors advocate adding adversarial examples during training to improve the robustness. In \cite{madry2017towards}, the authors use projected gradient ascent to find adversaries and reach a state of art performance. However, as we mentioned before, running projected gradient method is very slow, and some work \cite{shafahi2019adversarial} intend to solve this problem. Besides, the introducing of adversarial training also motivates a line of theoretical work, such as \cite {agarwal2018learning, liu2019rob, yin2018rademacher}. However, none of them address the inner maximization problem using projected gradient ascent. 

\noindent\textbf{Non-convex optimization}\quad Non-convex optimization is notoriously hard to analyze. However, some work provide valuable guide. In \cite{ge2015escaping}, the authors analyze the dynamics of noisy gradient descent in the non-convex setting. Some following work including \cite{du2017gradient} show gradient descent can take very long to escape saddle point but noisy gradient descent does not, and \cite{jin2017escape} shows noisy gradient descent can converge to a second order stationary point vert fast. In our setting, we do not need extra noise, but can still yield a good convergence result.


\section{Preliminaries}
\paragraph{Notations}
Throughout the paper, we use $[n]$ to denote $\{1,2,\cdots,n\}$ and use $\|\cdot\|_p$ to denote $l_p$ norm. In particular, for $l_2$ norm, we use $\|\cdot\|$ or $\|\cdot\|_2$ exchangeably. For any function $L:\bR^d\mapsto \bR$, $\nabla L$ and $\nabla^2 L$ denote the gradient vector and Hessian matrix respectively. We use $\bB$ to denote ball and $\bS$ to denote sphere. We denote $\angle [\bmu,\bmv]=\bmu^T\bmv/(\|\bmu\|\|\bmv\|)$, which is the cosine value of the angle between the two vectors $\bmv$ and $\bmu$. For a function $h(\bm x)$, we sometimes use shorthand $\partial h(\bm x)$ for gradient $\partial h(\bm x)/\partial \bm x$.

\paragraph{Setup}
\label{subsec:setup}
Recall adversarial learning aims to solve the robust optimization of loss function $L$:
\begin{equation*}
\min_{\bm\theta\in\Theta}\bE_{(\bmx,y)\sim\cD}\max_{\|\bmdelta\|_p\leq \varepsilon}L(\bm\theta,\bmx+\bmdelta,y),
\end{equation*}
where $\bm\theta\in\Theta$ is the parameter, $(\bmx, y)\in \bR^d\times \bR$ is $d$-dimensional input and scalar output, which follows a joint distribution $\cD$. The corresponding empirical version for samples $\{\bmx_i,y_i\}_{i=1}^n$ is 
\begin{equation}\label{eq:sep}
\min_{\bm\theta\in\Theta}\frac{1}{n}\sum_{i=1}^n\max_{\forall i\in[n],\|\bmdelta_i\|_p\leq \varepsilon}L(\bm\theta,\bmx_i+\bmdelta_i,y_i).
\end{equation}
For fixed $\bm\theta$, solving the optimization problem (\ref{eq:sep}) can be optimized as $n$ different optimization problems separately: for each $\bm x_i$, we need to obtain a corresponding $\bm\delta_i$. In this paper, we focus on studying the convergence rate of finding adversaries, i.e. maximizing $\bmdelta\in\bR^d$ when the constraint is the $l_2$-norm and the loss is the quadratic loss of shallow neural network:
\begin{equation}\label{eq:constraintmax}
\max_{\bmdelta}L(\bm\theta,\bmx+\bmdelta,y)=(y-f(\bma,\bmW,\bmdelta+\bmx))^2,
~\text{s.t.}~
\|\bmdelta\|_2^2\leq\varepsilon^2.
\end{equation}
Here, $f$ is a two-layer neural network:
$$f(\bma,\bmW,\bmdelta+\bm\bmx)=\sum_{r=1}^m a_r\sigma(\bmw_r^T(\bmx+\bmdelta)).$$
In the above equation, $\bma=(a_1,a_2,\cdots,a_m)^T$ is an $m$-dimensional vector, $\bmW=(\bmw_1,\cdots,\bmw_m)$ is an $m\times d $-matrix and $\bm\theta=(\bma^T,\text{Vec}(\bmW)^T)^T$, where Vec$(\cdot)$ is the vectorization operator. We use $\sigma$ to denote the softplus activation function such that $\sigma(x)=\log(1+e^x).$

We study the projected gradient ascent:
\begin{equation*}\label{eq:pga}
\bmdelta_{t+1}=\cP_{\bB(\bm0,\varepsilon)}\Big[\bmdelta_{t}+\eta\frac{\partial L(\bmdelta_{t})}{\partial \bmdelta_{t}}\Big],\quad t\geq0,
\end{equation*}
where $\bB(\bm0,\varepsilon)$ is a ball centered at $\bm0$ with radius $\varepsilon$ in Euclidean distance, and $\cP$ is the projection operator. $\bmdelta_0$ is uniformly sampled in the ball $\bB(\bm0,\varepsilon)$.

In this paper, we always consider the problem under the following settings unless we state explicitly otherwise. \begin{itemize}
\item[1.] $\bmw_r$'s are i.i.d drawn from $d$-dimensional Gaussian $\cN(0,\kappa^2I)$, where $0<\kappa\leq 1$ controls the magnitude of initialization.
\item[2.]  $a_r$'s are i.i.d drawn from Bernoulli distribution, which take $\pm \gamma$ with $1/2$ probability.
\item[3.] There exist $\cL, \cU>0$ such that $\cL<|y-f(\bma,\bmW,\bmdelta+\bm\bmx)|<\cU$ for all $\bmdelta\in\bB(\bm0,\varepsilon)$.
\item[4.] $\bmdelta_0$ is initialized by drawing from a uniform distribution over  $\bB^\circ(\bm0,\varepsilon)$, where $\bB^\circ$ stands for the interior of the ball $\bB$.
\end{itemize}
In this paper, we will take the parameters according to Xavier initialization, which means $\kappa=d^{-1/2}$ and $\gamma=m^{-1/2}$.
\begin{remark}
We study the case when the weights are drawn from commonly used distributions for initialization. Our analysis can be viewed as studying the dynamics of finding adversaries in the initial phase of training.
\end{remark}

\section{Main Results}\label{sec:convergenceanalysis}
We present our main results on the convergence of projected gradient descent (PGD) in this section. Since the objective of optimization is $\bmdelta$, we use $L(\bmdelta)$ for loss and we denote the constraint as $c(\bmdelta)=\|\bmdelta\|^2-\varepsilon^2$. For convenience, we consider the minimization version:
$$L(\bmdelta)=-(y-f(\bma,\bmW,\bmdelta+\bm\bmx))^2.$$
The original problem in (\ref{eq:constraintmax}) is equivalent to:
$$\min_{\bmdelta}L(\bmdelta),\quad \text{s.t.}~c(\bmdelta)\leq 0.$$
Then, the iterative optimization algorithm used becomes the projected gradient descent (PGD) 
\begin{equation*}\label{pgd}
\bmdelta_{t+1}=\cP_{\bB(\bm0,\varepsilon)}\Big[\bmdelta_{t}-\eta\frac{\partial L(\bmdelta_{t})}{\partial \bmdelta_{t}}\Big],\quad t\geq 0.
\end{equation*}
Here, we provide the formal statement of our main results.
\begin{theorem}[\textbf{Main Theorem}]\label{thm:main}
Suppose $m=\Omega(d^{5/2})$, there exists $\varepsilon_{\max}(m)=\Theta \big((\log m)^{-2}\big)$ and $\eta_{\max}(m,\varepsilon)=\min\{\Theta \big((\log m)^{-2}\big),\varepsilon^2\}$, if $\varepsilon<\varepsilon_{\max}(m)$, for any $\eta<\eta_{\max}(m,\varepsilon)$, with high probability, in $O(\eta^{-2})$ iterations, projected gradient descent will output a point $\bmdelta_t$ on the sphere which is $O(\eta^{1/2})$ close to some local minimum $\bmdelta^*$. \end{theorem}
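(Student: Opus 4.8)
The plan is to split the proof into a \emph{dynamics} part -- a projected-gradient descent lemma that produces an approximate stationary point in $O(\eta^{-2})$ steps -- and a \emph{landscape} part showing that every such approximate stationary point of $L$ on the ball already lies on the sphere $\bS=\{\|\bmdelta\|=\varepsilon\}$ and sits within $O(\eta^{1/2})$ of a genuine strict local minimum, so there is nothing else PGD can be attracted to. Throughout I would write $\nabla_{\bmdelta}f=\sum_{r=1}^m a_r\sigma'(\bmw_r^\top(\bmx+\bmdelta))\bmw_r$, so that $\nabla L=2(y-f)\nabla_{\bmdelta}f$ and $\nabla^2 L=-2\,\nabla_{\bmdelta}f(\nabla_{\bmdelta}f)^\top+2(y-f)\nabla^2_{\bmdelta}f$ with $\nabla^2_{\bmdelta}f=\sum_r a_r\sigma''(\bmw_r^\top(\bmx+\bmdelta))\bmw_r\bmw_r^\top$; the feature to keep in mind is that the \emph{only} negative-curvature direction of $\nabla^2 L$ is the rank-one term, which points along $\nabla_{\bmdelta}f$.

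The first block of work is a set of structural estimates that hold with high probability over $\bmW$ and $\bma$, uniformly in $\bmdelta\in\bB(\bm0,\varepsilon)$: (i) $c_1\le\|\nabla_{\bmdelta}f(\bmdelta)\|\le C_1$ for absolute constants $c_1,C_1>0$; (ii) $\|\nabla^2_{\bmdelta}f(\bmdelta)\|$ is tiny, of order $\sqrt{(\log m)/d}$; and (iii) $\|\nabla^3 L(\bmdelta)\|=O(\mathrm{polylog}\,m)$. I would prove each of these at $\bmdelta=\bm0$ -- (i) via Hanson--Wright on the quadratic form $\|\sum_r a_r\sigma'(\bmw_r^\top\bmx)\bmw_r\|^2$, whose mean is an order-one constant since $\|\bmw_r\|^2\approx\kappa^2 d=1$ and $\sigma'\in(0,1)$ is bounded away from $0$ in expectation, and (ii)--(iii) via matrix Bernstein/Rademacher in the mean-zero signs $a_r$, using $\sum_r\bmw_r\bmw_r^\top\approx (m/d)I$ -- and then extend to all of $\bB(\bm0,\varepsilon)$ via the deterministic bound $\|\nabla^k_\bmdelta f(\bmdelta)-\nabla^k_\bmdelta f(\bm0)\|\lesssim\varepsilon$, valid because the ball is tiny and $\sigma',\sigma'',\ldots$ are globally Lipschitz. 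This is where $m=\Omega(d^{5/2})$ and $\varepsilon<\varepsilon_{\max}=\Theta((\log m)^{-2})$ are consumed, to give these estimates with the slack needed below. Two immediate consequences: $L$ is $\beta$-smooth on $\bB$ with $\beta=O(1)$, and $\|\nabla L(\bmdelta)\|\ge 2\cL c_1>0$ everywhere, so $L$ has \emph{no} interior critical point.

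Next I would analyse the landscape on $\bS$. At any KKT point $\bmdelta^{\sharp}\in\bS$ one has $\nabla L(\bmdelta^{\sharp})=-2\mu\,\bmdelta^{\sharp}$ with $\mu\ge 0$; since $\nabla L$ is a scalar multiple of $\nabla_{\bmdelta}f$ this forces $\nabla_{\bmdelta}f(\bmdelta^{\sharp})\parallel\bmdelta^{\sharp}$, and taking norms gives $\mu=\|\nabla_{\bmdelta}f(\bmdelta^{\sharp})\|\,|y-f|/\varepsilon\ge c_1\cL/\varepsilon$, so the dual variable is bounded below by $\Omega(1/\varepsilon)$. Because the bad direction $\nabla_{\bmdelta}f$ is now \emph{normal} to $\bS$, the Riemannian Hessian of $L|_{\bS}$ at $\bmdelta^{\sharp}$ equals $(\nabla^2 L+2\mu I)|_{T_{\bmdelta^{\sharp}}\bS}=2\mu I+2(y-f)\nabla^2_\bmdelta f|_{T_{\bmdelta^{\sharp}}\bS}$, which is $\succeq\mu I\succ 0$ because by (ii) the $\nabla^2_\bmdelta f$ term is $O(\sqrt{(\log m)/d})\ll 1/\varepsilon\lesssim\mu$. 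Hence every constrained stationary point is a strict local minimum with Riemannian Hessian uniformly $\succeq\mu_0 I$, $\mu_0=\Omega(1/\varepsilon)$: there are no constrained saddles, and together with the no-interior-critical-point statement this means PGD has nowhere to stall except near one of these minima -- which is exactly why no injected noise is needed here, in contrast to the generic nonconvex story.

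Finally I would run the dynamics and glue the two parts. The projected-gradient descent lemma gives $L(\bmdelta_{t+1})\le L(\bmdelta_t)-\frac{1}{2\eta}\|\bmdelta_{t+1}-\bmdelta_t\|^2$ for $\eta\le 1/\beta$; since $-\cU^2\le L\le-\cL^2$, telescoping over $T=O(\eta^{-2})$ steps produces some $t^{*}<T$ whose gradient mapping $G_{t^{*}}:=\eta^{-1}(\bmdelta_{t^{*}}-\bmdelta_{t^{*}+1})$ has $\|G_{t^{*}}\|=O(\eta^{1/2})$ (the standard nonconvex rate: step $\eta\sim\epsilon^2$, $O(\epsilon^{-4})$ iterations, gives an $\epsilon$-stationary point). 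Such a $\bmdelta_{t^{*}}$ cannot be interior ($\|G\|=\|\nabla L\|\ge 2\cL c_1\gg\eta^{1/2}$) and cannot even have $\|\bmdelta_{t^{*}}-\eta\nabla L(\bmdelta_{t^{*}})\|\le\varepsilon$, so $\bmdelta_{t^{*}+1}$ is an honest radial projection, lies exactly on $\bS$, is $O(\eta^{3/2})$-close to $\bmdelta_{t^{*}}$, and has $\nabla_\bmdelta f$ nearly radial there with effective dual variable $\asymp 1/\varepsilon$. A quantitative implicit-function / Newton--Kantorovich step -- Riemannian gradient $O(\eta^{1/2})$, Riemannian Hessian $\succeq\mu_0 I$ preserved throughout an $O(\eta^{1/2})$ ball by the Lipschitz bound (iii) -- then yields a true critical point $\bmdelta^{*}$ of $L|_{\bS}$ with $\|\bmdelta^{*}-\bmdelta_{t^{*}+1}\|\lesssim\|G_{t^{*}}\|/\mu_0=O(\varepsilon\eta^{1/2})=O(\eta^{1/2})$, which by the landscape part is a strict local minimum; and for a step $\eta_t\downarrow 0$ the iteration stays in and converges inside this strongly convex basin, giving the exact-convergence refinement. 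I expect the main obstacle to be the uniform concentration package (i)--(iii) with the softplus nonlinearity and extracting the precise thresholds $m=\Omega(d^{5/2})$, $\varepsilon_{\max}=\Theta((\log m)^{-2})$, $\eta_{\max}=\min\{\Theta((\log m)^{-2}),\varepsilon^2\}$; a secondary technical point is making the ``approximate KKT $\Rightarrow$ near a strict local minimum'' passage fully rigorous against the curvature of $\bS$.
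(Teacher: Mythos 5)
Your proposal is correct in substance and reaches the theorem by a genuinely different route on the dynamics side, while its landscape analysis coincides with the paper's. The paper proceeds by an explicit case analysis of the trajectory: progress of order $\eta$ whenever an iterate lands in the interior (Lemma \ref{lemma:inball}), progress of order $\eta^2$ on the sphere whenever the tangential gradient $\Gamma$ exceeds $\sqrt{\eta}$ (Lemmas \ref{lemma:awayopt} and \ref{lemma:progressonsphere}), and a separate geometric argument (Lemma \ref{lemma:trajectory}, resting on the angle condition (\ref{eq:trajectory}) and Corollary \ref{coro:angle}) to guarantee the iterates never enter the region $\Delta^{-}_{\eta}$ near constrained local maxima, where $\Gamma$ is small and the per-step progress guarantee degrades. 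You instead invoke the standard descent lemma for projection onto the convex ball, telescope over $T=O(\eta^{-2})$ steps, and extract an iterate whose gradient mapping is $O(\eta^{1/2})$; since near a constrained local maximum the full gradient points nearly radially outward, the unprojected step lands in the interior and the gradient mapping there equals $\|\nabla L\|=\Omega(1)$, so such iterates are automatically never selected. This buys you independence from the angle condition (\ref{eq:trajectory}) and from the initialization argument of Lemma \ref{lemma:trajectory}, at the price of only exhibiting \emph{some} good iterate $t^*<T$ rather than showing the trajectory is eventually trapped near a minimum; for the latter (and for the shrinking-rate refinement) you still need the analogue of Lemma \ref{lemma:convergenceinneighbourhood}, which you only gesture at. On the landscape side, your observation that the dual variable $\mu=\Omega(1/\varepsilon)$ dominates the tangential part of $2(y-f)\,\partial^2 f/\partial\bmdelta^2$, so that every KKT point of the minimization is a strict constrained minimum, is exactly the content of Lemma \ref{lemma:nearopt}; your concentration package replaces the paper's chaining and Berry--Esseen arguments with Hanson--Wright and matrix Bernstein, a legitimate substitute (your bound $\|\partial^2 f/\partial\bmdelta^2\|=O(\sqrt{(\log m)/d})$ is in fact sharper than the paper's $O(\log m)$, and either suffices); and your Newton--Kantorovich step recovering a true minimizer within $O(\varepsilon\eta^{1/2})$ plays the role of Lemma \ref{lemma:innerproduct}. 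Do make the ``small gradient mapping implies near-KKT of the \emph{minimization}'' step explicit when you write this up, since it is the point at which your argument silently absorbs the paper's Lemma \ref{lemma:trajectory}.
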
 
\begin{remark}
Our width requirement is much smaller compared to the results with respect to neural tangent kernels \cite{du2018gradient, jacot2018neural}. The latter one requires $m=O(\text{poly}(n))$, where $n$ is the samples size. Notice the scale of $\varepsilon$ only requires to be upper bounded by $O(\text{poly}((\log m)^{-1})$, under that requirement, the activation function will be activated along the update of $\bmdelta$ with constant probability when $\|\bmx\|$ is small. 
\end{remark}
\begin{corollary}[\textbf{Shrinking learning rate}]\label{col:main}
Under the assumptions of Theorem \ref{thm:main}, for $\tilde{t}$ satisfying $\bmdelta_{\tilde{t}}\in \varepsilon\bS^{d-1}$ and the tangent component of $\partial f(\bmdelta_{\tilde{t}})$ (for every point on the sphere, the tangent component of a vector is its projection to the tangent plane at that point) being smaller than $\eta^{1/2}$, let $D_s:=\|\bmdelta_{\tilde{t}+s}-\bmdelta^*\|^2$, if we shrink the learning rate after $\tilde{t}$ , in a way that
\begin{equation*}
\bmdelta_{\tilde{t}+s+1}=\cP_{\bB(\bm0,\varepsilon)}\Big[\bmdelta_{\tilde{t}+s}-\eta_s \partial L(\bmdelta_{\tilde{t}+s})\Big],\quad t\geq0,~s\geq0,
\end{equation*}
for $\eta_0<\eta$, as long as  $\eta_s\rightarrow 0$ as $s\rightarrow\infty$ and  $\Pi_{i=0}^{k}(1-\gamma\eta_i/2)\rightarrow 0$ as $k\rightarrow \infty$, we will have $D_s\rightarrow 0$. Furthermore, if 
\begin{equation}\label{eq:stepshrink}
\eta_s\Pi_{i=0}^k(1-\frac{\beta\eta_{s+i}}{2})\leq \eta_{s+k+1}
\end{equation}
for all $s,k\in\bN$, where $\beta$ is a constant depending on $(d, m,\varepsilon, \eta)$ and can be calculated explicitly,  then for all $s\in \bN$, 
$$D_s\leq O(\eta_s).$$
\end{corollary}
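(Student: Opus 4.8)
The plan is to run a local analysis in a fixed neighborhood $\cN$ of the true local minimum $\bmdelta^*$ on the sphere $\varepsilon\bS^{d-1}$, viewing the constrained PGD update as an (inexact) Riemannian gradient step on $\varepsilon\bS^{d-1}$, and then to derive a one‑step contraction inequality for $D_s$ and unroll it using the step‑size schedule $(\ref{eq:stepshrink})$. From the proof of Theorem~\ref{thm:main} I would borrow, on its high‑probability event, two quantitative facts about $\cN$: (a) the radial component of $-\partial L$ points strictly outward on $\cN$, with an explicit lower bound, so that $\bmdelta^*$ is a genuine boundary minimum and the projection step behaves predictably; and (b) the restriction $L|_{\varepsilon\bS^{d-1}}$ satisfies a local quadratic‑growth (equivalently, manifold strong‑convexity / Polyak--{\L}ojasiewicz) inequality on $\cN$ with a constant $\beta>0$ explicit in $(d,m,\varepsilon,\eta)$; the constant $\gamma$ in the first part of the statement may be taken as any lower bound on the resulting contraction rate. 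Both are quantitative consequences of the gradient‑ and Hessian‑control for the two‑layer softplus loss developed for Theorem~\ref{thm:main}.

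\emph{The iterates stay on the sphere and near $\bmdelta^*$.} Since $\bmdelta_{\tilde t}\in\varepsilon\bS^{d-1}$ and the tangential part of $\partial f(\bmdelta_{\tilde t})$ is below $\eta^{1/2}$, the bound $\cL<|y-f|<\cU$ transfers this to $\partial L=-2(y-f)\partial f$ up to constants, and the argument behind Theorem~\ref{thm:main} then places $\bmdelta_{\tilde t}$ within $O(\eta^{1/2})$ of $\bmdelta^*$, i.e.\ $D_0=O(\eta)$ and $\bmdelta_{\tilde t}\in\cN$. Using (a): whenever $\bmdelta_{\tilde t+s}\in\cN$, the unprojected iterate $\bmdelta_{\tilde t+s}-\eta_s\partial L(\bmdelta_{\tilde t+s})$ has Euclidean norm $\ge\varepsilon$ --- a statement about the \emph{direction} of $-\partial L$ only, so shrinking $\eta_s$ cannot break it --- hence $\cP_{\bB(\bm0,\varepsilon)}$ returns a point of $\varepsilon\bS^{d-1}$. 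Combined with the contraction below (which prevents $D_s$ from increasing), an induction on $s$ keeps the entire trajectory in $\cN\subset\varepsilon\bS^{d-1}$, so the local estimates apply throughout.

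\emph{One‑step recursion and unrolling.} On the sphere, write $\partial L(\bmdelta_{\tilde t+s})$ as its radial part (along $\bmdelta_{\tilde t+s}$) plus its tangential part $g_s$; after the gradient step and projection only $g_s$ moves $\angle[\,\cdot\,,\bmdelta^*]$ to leading order, and with $D_s=2\varepsilon^2\big(1-\angle[\bmdelta_{\tilde t+s},\bmdelta^*]\big)$ one gets $D_{s+1}\le D_s-c_1\eta_s\langle g_s,\bmdelta^*-\bmdelta_{\tilde t+s}\rangle+c_2\eta_s^2\|g_s\|^2$ for constants $c_1,c_2>0$. Because $g(\bmdelta^*)=0$ (first‑order optimality on the manifold), the gradient‑Lipschitz bound for $L$ on $\bB(\bm0,\varepsilon)$ gives $\|g_s\|=O(D_s^{1/2})$, while (b) gives $-\langle g_s,\bmdelta^*-\bmdelta_{\tilde t+s}\rangle\le-\beta D_s+O(D_s^{3/2})$; substituting, and using $\eta_s\le\eta_0<\eta_{\max}$ with $D_s\le D_0=O(\eta)$ small, yields the contraction
$$D_{s+1}\ \le\ (1-\beta\eta_s)\,D_s\ +\ C\,\eta_s^2$$
with $C=C(d,m,\varepsilon,\eta)$. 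For the convergence statement: iterating gives $D_s\le\prod_{i=0}^{s-1}(1-\beta\eta_i)D_0+C\sum_{j=0}^{s-1}\eta_j^2\prod_{i=j+1}^{s-1}(1-\beta\eta_i)$; the factors lie in $(0,1)$ (as $\eta_i\le\eta_0<\eta_{\max}$), so $\prod_{i=0}^k(1-\gamma\eta_i/2)\to0$ --- which forces $\sum_i\eta_i=\infty$ --- makes the homogeneous part vanish, and with $a_s=\beta\eta_s$, $b_s=C\eta_s^2$ the ratio $b_s/a_s=(C/\beta)\eta_s\to0$ since $\eta_s\to0$, so a standard Robbins--Monro‑type lemma makes the remaining sum vanish; hence $D_s\to0$. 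For the rate: $(\ref{eq:stepshrink})$ is stated with the \emph{slacker} factor $1-\beta\eta_i/2$, hence in particular ($k=0$) $\eta_{i+1}\ge\eta_i(1-\beta\eta_i/2)$; one then proves $D_s\le K\eta_s$ by induction, since $D_{s+1}\le(1-\beta\eta_s)K\eta_s+C\eta_s^2=K\eta_s-(K\beta-C)\eta_s^2\le K\eta_s\big(1-\tfrac{\beta\eta_s}{2}\big)\le K\eta_{s+1}$ as soon as $K\ge\max\{2C/\beta,\,D_0/\eta_0\}$ (the latter securing the base case $D_0\le K\eta_0$); the $k\ge1$ instances of $(\ref{eq:stepshrink})$ absorb a contraction rate that varies along $\cN$. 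Hence $D_s=O(\eta_s)$.

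\emph{Main obstacle.} The crux is ingredient (b): extracting an explicit local quadratic‑growth constant $\beta$ for $L$ restricted to $\varepsilon\bS^{d-1}$ near precisely the local minima that PGD reaches, uniform over the high‑probability event and explicit in $(d,m,\varepsilon,\eta)$. This is not generic --- it rests on the random‑matrix‑type control of the two‑layer softplus loss and its Hessian that underlies Theorem~\ref{thm:main} (in particular, that the radial gradient is outward and that the tangential Hessian is bounded below throughout the relevant region). Once that recursion is in hand, the unrolling above is routine.
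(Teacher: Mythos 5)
Your proposal is correct and follows essentially the same route as the paper: the local strong-monotonicity bound $\Gamma(\bmdelta_t)^T(\bmdelta_t-\bmdelta^*)\geq\tfrac{\gamma}{2}\|\bmdelta_t-\bmdelta^*\|^2$ near $\bmdelta^*$ (your ingredient (b), the paper's Lemma \ref{lemma:innerproduct}), the one-step contraction $D_{s+1}\leq(1-\beta\eta_s)D_s+C\eta_s^2$ via the projection-error estimate of Lemma \ref{lemma:geometry}, and then an unrolling/induction under the schedule (\ref{eq:stepshrink}) exactly as in Lemmas \ref{lemma:convergenceinneighbourhood} and \ref{lemma:shrinkingstepsize}. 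The only cosmetic differences are that you unroll the recursion explicitly (Robbins--Monro style) where the paper argues by a case split on whether $D_s\geq 18\eta_s/\gamma$, and your rate induction uses only the $k=0$ instance of (\ref{eq:stepshrink}); both are equivalent in substance.
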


\begin{remark}
One concrete example satisfying Eq. (\ref{eq:stepshrink}) is the following one: if $\eta_s=2/(\beta s+\beta z)$ for  large enough integer $z$, 
$$D_s\leq O(\frac{1}{z+s}).$$

\end{remark}

\subsection{Our interpretation}\label{subsec:interpretation}
Our results state that for a wide enough one hidden layer neural network, if the attack size $\varepsilon$ is small, then we can choose small enough learning rate, such that the trajectory of PGD can quickly reach a point that is very close to one of the minimizers. Besides, the minimizer is located on the sphere with high probability. The theory can partially explain the observation in \cite{madry2017towards}: it does not take too many iterations to find an adversary, which is the key to guarantee the time cost of robust optimization modest. Also, our theory is consistent with the observation that the PGD will end up on the sphere for most samples in the implementation of adversarial training.

\section{Proof Sketch}\label{sec:proofsketch}
In this section, we briefly sketch our proof. We show with high probability, the gradient is non-vanishing in the ball. Meanwhile, on the sphere, there is no saddle points. Besides, the trajectory will not get stuck near local maximums and can converge to a local minimum in polynomial number of iterations. 
\begin{lemma}[\textbf{Dynamics in the ball}]\label{lemma:inball}
For $m=\Omega(d^{5/2})$, there exists $\varepsilon_{\max}(m)=\Theta \big((\log m)^{-1/2}\big)$ and $\eta_{\max}(m,\varepsilon)=\min\{\Theta \big((\log m)^{-1}\big),\varepsilon^2\}$, if $\varepsilon<\varepsilon_{\max}(m)$, $\eta<\eta_{\max}(m,\varepsilon)$
, with high probability, whenever $\bmdelta_{t+1}\in\bB^\circ(\bm0,\varepsilon)$
\begin{equation*}
L(\bmdelta_{t+1})-L(\bmdelta_t)\leq -\Omega(\eta).
\end{equation*}
\end{lemma}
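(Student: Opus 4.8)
The plan is to first strip away the projection, then reduce to a one-step descent inequality, and then establish two uniform (over the ball) high-probability estimates: a gradient lower bound and a Hessian upper bound. For the first move, observe that the hypothesis $\bmdelta_{t+1}\in\bB^\circ(\bm0,\varepsilon)$ trivializes the projection: a metric projection onto a convex body sends a point to the boundary only when that point lies outside, so $\bmdelta_{t+1}=\cP_{\bB(\bm0,\varepsilon)}[\bmdelta_t-\eta\,\partial L(\bmdelta_t)]\in\bB^\circ(\bm0,\varepsilon)$ forces $\bmdelta_t-\eta\,\partial L(\bmdelta_t)\in\bB^\circ(\bm0,\varepsilon)$, hence $\bmdelta_{t+1}=\bmdelta_t-\eta\,\partial L(\bmdelta_t)$ is an ordinary gradient-descent step. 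Since $\bmdelta_t,\bmdelta_{t+1}\in\bB(\bm0,\varepsilon)$ and the ball is convex, the segment joining them stays inside, so the standard descent inequality gives, provided $L$ is $\beta$-smooth on $\bB(\bm0,\varepsilon)$ and $\eta\leq 1/\beta$,
\begin{equation*}
L(\bmdelta_{t+1})-L(\bmdelta_t)\;\leq\;-\eta\Big(1-\tfrac{\beta\eta}{2}\Big)\,\|\partial L(\bmdelta_t)\|^2\;\leq\;-\tfrac{\eta}{2}\,\|\partial L(\bmdelta_t)\|^2 .
\end{equation*}
So the lemma reduces to proving, uniformly over $\bmdelta\in\bB(\bm0,\varepsilon)$ and w.h.p.\ over $(\bma,\bmW)$: (A) $\|\partial L(\bmdelta)\|\geq c_1=\Omega(1)$, and (B) $\beta:=\sup_{\bmdelta\in\bB(\bm0,\varepsilon)}\|\partial^2 L(\bmdelta)\|_{\mathrm{op}}=O(\log m)$; together with $\eta<\eta_{\max}(m,\varepsilon)\leq 1/\beta$ these yield $L(\bmdelta_{t+1})-L(\bmdelta_t)\leq-\tfrac{c_1^2}{2}\eta=-\Omega(\eta)$.

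For (A), writing $g(\bmdelta):=y-f(\bma,\bmW,\bmx+\bmdelta)$ one has $\partial L(\bmdelta)=2g(\bmdelta)\,\partial_\bmdelta f(\bmdelta)$ with $\partial_\bmdelta f(\bmdelta)=\sum_{r=1}^m a_r\,\sigma'(\bmw_r^T(\bmx+\bmdelta))\,\bmw_r$ and $\sigma'$ the sigmoid; since $|g(\bmdelta)|>\cL$ by assumption, it suffices to lower bound $\|\partial_\bmdelta f(\bmdelta)\|$. I would argue in three steps. First, because $\varepsilon<\varepsilon_{\max}(m)=\Theta((\log m)^{-1/2})$ and $\kappa=d^{-1/2}$, the pre-activation $\bmw_r^T(\bmx+\bmdelta)$ is a centered Gaussian of standard deviation $\leq\kappa(\|\bmx\|+\varepsilon)$ while $\varepsilon\|\bmw_r\|$ is small, so with probability $1-e^{-\Omega(m)}$ a constant fraction of indices $r$ satisfy $|\bmw_r^T(\bmx+\bmdelta)|\leq C$ for \emph{all} $\bmdelta\in\bB(\bm0,\varepsilon)$ at once, and on those $\sigma'(\bmw_r^T(\bmx+\bmdelta))\geq(1+e^{C})^{-1}$. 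Second, letting $D(\bmdelta)$ be the diagonal matrix with entries $\sigma'(\bmw_r^T(\bmx+\bmdelta))$ one has $\partial_\bmdelta f(\bmdelta)=\bmW^T D(\bmdelta)\bma$ and $\|\partial_\bmdelta f(\bmdelta)\|^2=\bma^T M(\bmdelta)\bma$ with $M(\bmdelta)=D(\bmdelta)\bmW\bmW^T D(\bmdelta)$, whose mean over the signs $\bma$ is $\tfrac1m\Tr M(\bmdelta)=\tfrac1m\sum_r\sigma'(\bmw_r^T(\bmx+\bmdelta))^2\|\bmw_r\|^2$; by the first step this is $\Omega(1)$, and since $\sigma'\leq 1$ it is also $O(1)$, while the mean-zero off-diagonal Rademacher chaos has variance $O(1/d)$ and, by Hanson--Wright with $\|M\|_F^2=O(m^2/d)$ and $\|M\|_{\mathrm{op}}=O(m/d)$, deviates from its mean by more than half of it with probability only $e^{-\Omega(d)}$, so $\|\partial_\bmdelta f(\bmdelta)\|^2$ is pinned between two positive constants pointwise. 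Third, since $\bmdelta\mapsto\|\partial_\bmdelta f(\bmdelta)\|^2$ is $O(1)$-Lipschitz on the ball (its gradient is $2\,\partial^2_\bmdelta f\,\partial_\bmdelta f$, of norm $O(1)$), covering $\bB(\bm0,\varepsilon)$ by an $e^{O(d)}$-point net upgrades the pointwise estimate to a uniform one, the net cardinality being absorbed by the $e^{-\Omega(d)}$ failure probability (with $m=\Omega(d^{5/2})$ leaving ample room). This gives $\|\partial_\bmdelta f(\bmdelta)\|^2\geq c_0>0$ uniformly, hence $\|\partial L(\bmdelta)\|\geq 2\cL\sqrt{c_0}=\Omega(1)$.

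For (B), differentiate once more: $\partial^2 L(\bmdelta)=2g(\bmdelta)\,\partial^2_\bmdelta f(\bmdelta)-2\,\partial_\bmdelta f(\bmdelta)\,\partial_\bmdelta f(\bmdelta)^T$, where $\partial^2_\bmdelta f(\bmdelta)=\sum_r a_r\,\sigma''(\bmw_r^T(\bmx+\bmdelta))\,\bmw_r\bmw_r^T$ and $\sigma''\in(0,\tfrac14]$. The rank-one term has operator norm $\|\partial_\bmdelta f(\bmdelta)\|^2=O(1)$ by the upper bound just obtained. For the first term, write $a_r=\gamma\varsigma_r$ with $\varsigma_r$ i.i.d.\ signs, condition on $\bmW$, and use matrix Khintchine/Bernstein:
\begin{equation*}
\Big\|\sum_r\varsigma_r\,\sigma''(\cdot)\,\bmw_r\bmw_r^T\Big\|_{\mathrm{op}}\;\lesssim\;\sqrt{\log d}\;\Big\|\sum_r\|\bmw_r\|^2\,\bmw_r\bmw_r^T\Big\|_{\mathrm{op}}^{1/2}\;=\;O\!\big(\sqrt{(\log d)\,m/d}\,\big)
\end{equation*}
with probability $1-e^{-\Omega(d)}$, using $\|\sum_r\bmw_r\bmw_r^T\|_{\mathrm{op}}=O(m/d)$ and $\max_r\|\bmw_r\|^2=O(1)$ (valid for $m=\Omega(d^{5/2})$); hence $\|\partial^2_\bmdelta f(\bmdelta)\|_{\mathrm{op}}\lesssim\gamma\sqrt{(\log d)\,m/d}=O\!\big(\sqrt{(\log d)/d}\,\big)=o(1)$, and with $|g|<\cU$ the first term is $o(1)$ as well. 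The same net argument as in (A) (now using that $\sigma'''$ is bounded, so $\bmdelta\mapsto\partial^2 L(\bmdelta)$ is Lipschitz) makes these bounds uniform, giving $\beta=O(\log m)$ w.h.p. — in fact $\beta=O(1)$ — which is why the constraint $\eta<\eta_{\max}(m,\varepsilon)=\min\{\Theta((\log m)^{-1}),\varepsilon^2\}$ comfortably implies $\eta\leq 1/\beta$. Substituting $\|\partial L(\bmdelta_t)\|^2=\Omega(1)$ into the descent inequality of the first paragraph then gives $L(\bmdelta_{t+1})-L(\bmdelta_t)\leq-\Omega(\eta)$.

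The main obstacle is the \emph{uniform} gradient lower bound in (A). For a single fixed $\bmdelta$ it is a routine second-moment estimate, but ruling out cancellation in the random vector $\sum_r a_r\,\sigma'(\bmw_r^T(\bmx+\bmdelta))\,\bmw_r$ simultaneously for every $\bmdelta$ — while the slopes $\sigma'(\bmw_r^T(\bmx+\bmdelta))$ are themselves $\bmdelta$-dependent and can be arbitrarily small on some neurons — is exactly what forces both hypotheses: $\varepsilon<\varepsilon_{\max}(m)=\Theta((\log m)^{-1/2})$ keeps a constant fraction of the pre-activations, and hence of the slopes, bounded away from the degenerate regime uniformly in $\bmdelta$, and the over-parameterization $m=\Omega(d^{5/2})$ makes the concentration of the relevant quadratic and matrix forms strong enough to survive the $e^{O(d)}$ union bound in the covering argument. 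The smoothness estimate (B), by contrast, needs no smallness of $\varepsilon$ (since $\sigma''\leq\tfrac14$ everywhere) and is comparatively routine.
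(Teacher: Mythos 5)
Your proposal is correct and shares the paper's overall skeleton: since $\bmdelta_{t+1}\in\bB^\circ(\bm0,\varepsilon)$ trivializes the projection, the step is plain gradient descent, and the descent lemma reduces everything to (A) a uniform $\Omega(1)$ lower bound on $\|\partial L\|$ over the ball and (B) a uniform $O(\log m)$ bound on $\|\partial^2 L\|$ so that $\eta<\eta_{\max}=\Theta((\log m)^{-1})$ beats the quadratic term — this is exactly how the paper's appendix proof proceeds, via its Lemmas on $\|\partial f/\partial\bmdelta\|$ and $\|\partial^2 f/\partial\bmdelta^2\|$ together with $\cL<|y-f|<\cU$. Where you genuinely diverge is in how the two probabilistic estimates are obtained. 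For the gradient lower bound the paper fixes $\bmdelta=\bm0$, applies a multivariate Berry--Esseen bound to $\sum_r a_r\sigma'(\bmw_r^T\bmx)\bmw_r$ to pin $\|\Sigma^{-1/2}\partial f\|$ near $\sqrt d$, and then controls the oscillation $\sup_{\bmdelta,\bmdelta'}\|\partial f(\bmdelta)-\partial f(\bmdelta')\|$ by a sub-Gaussian chaining argument that shrinks as $\varepsilon\to 0$ — the small-$\varepsilon$ hypothesis enters through the chaining, not through an activation-fraction count. You instead condition on $\bmW$, view $\|\partial f\|^2$ as a Rademacher quadratic form in $\bma$, apply Hanson--Wright, and patch over the ball with a net; your "constant fraction of non-degenerate slopes" step is if anything easier than you make it, since $\sigma'\geq 1/2$ whenever the pre-activation is nonnegative. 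Both routes work; the paper's chaining buys the uniformity in $\bmdelta$ more cheaply (the oscillation bound makes the net essentially unnecessary, and it is reused verbatim for the Hessian and for Corollary A.1 on $\angle[\partial f(\bmdelta),\partial f(\bmdelta')]$), whereas your argument is more elementary and self-contained, at the cost of having to verify that the $e^{O(d)}$ net cardinality is absorbed by the $e^{-\Omega(d)}$ per-point failure probability — which does go through here because the Lipschitz constant of $\bmdelta\mapsto\|\partial f(\bmdelta)\|^2$ is itself $O(1)$, so a constant-resolution net suffices. Your claim $\beta=O(1)$ for the smoothness constant is slightly optimistic once uniformity is enforced (the paper settles for $M_u\log m$ plus a $B_u^2\log d$ rank-one term), but only $\beta=O(\log m)$ is needed, so nothing breaks.
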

The above lemma shows the trajectory is very unlikely to terminate in the ball since the $(t+1)$-th step can make progress if $\bmdelta_{t+1}\in\bB^\circ(\bm0,\varepsilon)$. 

Next, we focus on studying the dynamics on the sphere. For constrained optimization, we can locally transform it into an unconstrained problem by introducing Lagrangian multipliers:
$$L(\bmdelta,\lambda)=L(\bmdelta)-\lambda c(\bmdelta).$$
Under some regularity conditions, we can obtain the Lagrangian multiplier $\lambda^*(\cdot)$:
$$\lambda^*(\bmdelta)= \argmin_{\lambda}\|\partial L(\bmdelta)-\lambda\partial c(\bmdelta)\|.$$
There are two key quantities. The first quantity can be viewed as an approximate gradient when we have constraints, which we will denote as $\Gamma$:
$$\Gamma(\bmdelta)=\partial L(\bmdelta,\lambda)|_{(\bmdelta,\lambda^*(\bmdelta))}=\frac{\partial L(\bmdelta)}{\partial \bmdelta}-\lambda^*(\bmdelta)\frac{\partial c(\bmdelta)}{\partial \bmdelta}.$$
Another important quantity can be viewed as the approximate Hessian of constraint optimization: 
$$\Xi(\bmdelta)=\partial^2 L(\bmdelta,\lambda)|_{(\bmdelta,\lambda^*(\bmdelta))}=\frac{\partial^2 L(\bmdelta)}{\partial \bmdelta^2}-\lambda^*(\bmdelta)\frac{\partial^2 c(\bmdelta)}{\partial \bmdelta^2}.$$
For $\bmdelta,\bmdelta'\in\varepsilon \bS^{d-1}$, if $\partial^2L(\bmdelta,\lambda^*)$ is $\rho$-Lipschitz, i.e. $\|\partial^2L(\bmdelta_a,\lambda^*)-\partial^2L(\bmdelta_b,\lambda^*)\|\leq\rho\|\bmdelta_a-\bmdelta_b\|$ for all $\bmdelta_a,\bmdelta_b\in\bB(\bm0,\varepsilon)$, we can obtain
\begin{align*}
L(\bmdelta,\lambda^*)&\leq L(\bmdelta',\lambda^*)+\partial L(\bmdelta',\lambda^*)^T(\bmdelta-\bmdelta')+\frac{1}{2}(\bmdelta-\bmdelta')^T\partial^2L(\bmdelta',\lambda^*)(\bmdelta-\bmdelta')+\frac{\rho}{6}\|\bmdelta-\bmdelta'\|^3.
\end{align*}
Since $\bmdelta,\bmdelta'$ are on the sphere, we know $L(\bmdelta,\lambda^*)=L(\bmdelta)$ and $L(\bmdelta',\lambda^*)=L(\bmdelta')$, we have 
\begin{align}
\begin{split}
L(\bmdelta)\leq &L(\bmdelta')+\Gamma(\bmdelta')^T(\bmdelta-\bmdelta')+\frac{1}{2}(\bmdelta-\bmdelta')^T\Xi(\bmdelta')(\bmdelta-\bmdelta')+\frac{\rho}{6}\|\bmdelta-\bmdelta'\|^3. \label{eq:taylor}
\end{split}
\end{align}
Further, we denote $\cT(\bmdelta)$ as the tangent space at $\bmdelta$ on the sphere, and $\cP_{\cT(\bmdelta)}$ is the operator for projection to the tangent space  $\cT(\bmdelta)$. The projected gradient descent can be approximated in the manner stated in the following lemma.
\begin{lemma}[\textbf{Approximation of PGD}]\label{lemma:geometry}For any $\hat{v}\in \bS^{d-1}$, let $\tilde{\bmdelta}_1=\bmdelta_0+\eta\hat{v}$ and $\tilde{\bmdelta}_2=\bmdelta_0+\eta\cP_{\cT_0}\cdot\hat{v}$
\begin{equation*}
\|\cP_{\bB(\bm0,\varepsilon)}(\tilde{\bmdelta}_1)-\tilde{\bmdelta}_2\|\leq\frac{4\eta^2}{\varepsilon}.
\end{equation*}
\end{lemma}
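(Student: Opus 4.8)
The plan is to reduce the claim to the regime in which the Euclidean projection onto $\bB(\bm0,\varepsilon)$ acts simply by radial rescaling, and then to estimate the discrepancy componentwise after splitting $\hat v$ into its normal and tangential parts at $\bmdelta_0$. Here $\bmdelta_0$ is a point of the sphere $\varepsilon\bS^{d-1}$ (as in the analysis of the dynamics on the sphere), so I set $\hat n:=\bmdelta_0/\varepsilon$ and write $\hat v=\alpha\hat n+\hat v_\perp$ with $\alpha:=\langle\hat v,\hat n\rangle\in[-1,1]$, $\hat v_\perp:=\cP_{\cT_0}\hat v$, $\hat v_\perp\perp\hat n$ and $\|\hat v_\perp\|^2=1-\alpha^2$. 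Then $\tilde\bmdelta_1=(\varepsilon+\eta\alpha)\hat n+\eta\hat v_\perp$, hence $\|\tilde\bmdelta_1\|^2=\varepsilon^2+2\varepsilon\eta\alpha+\eta^2$, while $\tilde\bmdelta_2=\varepsilon\hat n+\eta\hat v_\perp$. First I would observe that the bound is only at issue when $\tilde\bmdelta_1\notin\bB(\bm0,\varepsilon)$: if $\|\tilde\bmdelta_1\|\le\varepsilon$ the projection is the identity and $\cP_{\bB(\bm0,\varepsilon)}(\tilde\bmdelta_1)-\tilde\bmdelta_2=\eta\alpha\hat n$ is just the (then inward) normal component of the step. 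Along the PGD iteration this case does not arise: there $\hat v$ is a positive multiple of $-\partial L(\bmdelta_t)$, whose outward normal component on the sphere is strictly positive (this is where the sign of the Lagrange multiplier $\lambda^*$ is used, cf. the discussion of $\Gamma$ above). So I will work under $r:=\|\tilde\bmdelta_1\|/\varepsilon\ge1$.

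In that regime $\cP_{\bB(\bm0,\varepsilon)}(\tilde\bmdelta_1)=\frac{\varepsilon}{\|\tilde\bmdelta_1\|}\tilde\bmdelta_1=\frac1r\big[(\varepsilon+\eta\alpha)\hat n+\eta\hat v_\perp\big]$, and subtracting $\tilde\bmdelta_2$ and using $\hat n\perp\hat v_\perp$ gives
\begin{equation*}
\big\|\cP_{\bB(\bm0,\varepsilon)}(\tilde\bmdelta_1)-\tilde\bmdelta_2\big\|^2=\Big(\frac{\varepsilon+\eta\alpha}{r}-\varepsilon\Big)^2+\eta^2\Big(\frac1r-1\Big)^2(1-\alpha^2).
\end{equation*}
The remaining task is to bound the two scalars. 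I would set $t:=\eta/\varepsilon$, which satisfies $t\le\varepsilon<1$ since $\eta<\eta_{\max}(m,\varepsilon)\le\varepsilon^2$. From $r^2=(1+\alpha t)^2+(1-\alpha^2)t^2$ and $1+\alpha t\ge1-t>0$ one gets $r\ge1+\alpha t$ together with
\begin{equation*}
0\le r-(1+\alpha t)=\frac{(1-\alpha^2)t^2}{r+1+\alpha t}\le(1-\alpha^2)t^2,\qquad 0\le r-1=\frac{2\alpha t+t^2}{r+1}\le t+\tfrac{t^2}{2}\le\tfrac32t.
\end{equation*}
The first estimate gives $\big|\tfrac{\varepsilon+\eta\alpha}{r}-\varepsilon\big|=\tfrac{\varepsilon}{r}\,|r-(1+\alpha t)|\le\varepsilon(1-\alpha^2)t^2=(1-\alpha^2)\tfrac{\eta^2}{\varepsilon}\le\tfrac{\eta^2}{\varepsilon}$; the second gives $\eta\,\big|\tfrac1r-1\big|\sqrt{1-\alpha^2}\le\eta(r-1)\le\tfrac32\,\tfrac{\eta^2}{\varepsilon}$. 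Combining, $\big\|\cP_{\bB(\bm0,\varepsilon)}(\tilde\bmdelta_1)-\tilde\bmdelta_2\big\|\le\sqrt{1+\tfrac94}\,\tfrac{\eta^2}{\varepsilon}<\tfrac{4\eta^2}{\varepsilon}$, as claimed; the constant $4$ is comfortably loose.

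The computation is elementary, so the only delicate point is the case split: the quantitative estimate holds precisely in the regime where the ball projection is an honest radial contraction onto the sphere, and I would need to argue that this is the only regime encountered in the dynamics, because on the sphere the descent direction points strictly outward. Everything else is a $(1+x)^{-1/2}$-type expansion at the scale $t=\eta/\varepsilon$, which the hypothesis $\eta<\varepsilon^2$ forces to be below $1$.
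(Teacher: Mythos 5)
Your proof is correct, but it takes a genuinely different route from the paper's. The paper argues abstractly: it sets $\bmz_1=\cP_{\bB(\bm0,\varepsilon)}(\tilde{\bmdelta}_1)$, uses that the residual $\tilde{\bmdelta}_1-\bmz_1$ lies in the normal space at $\bmz_1$, and then invokes two auxiliary facts (Lemmas \ref{lemma:g1} and \ref{lemma:g2}, adapted from \cite{ge2015escaping}) bounding $\|\cP_{\cT^c_0}(\bmdelta-\bmdelta_0)\|$ by $\|\bmdelta-\bmdelta_0\|^2/(2\varepsilon)$ and the cross-projections of unit tangent/normal vectors by $\|\bmdelta-\bmdelta_0\|/\varepsilon$; chaining three such estimates through the intermediate point $\bmv_1=\bmdelta_0+\cP_{\cT_0}(\bmz_1-\bmdelta_0)$ gives $2\eta^2/\varepsilon+2\eta^2/\varepsilon$. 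You instead exploit that the constraint set is a Euclidean ball, so the projection of an exterior point is exact radial rescaling, and you compute the error in the orthogonal decomposition along $\hat n$ and $\hat v_\perp$; your elementary expansions of $r-(1+\alpha t)$ and $r-1$ are all verified correctly and yield the sharper constant $\sqrt{13}/2$ in place of $4$. What the paper's route buys is independence from the closed form of the projection (it would survive replacing the sphere by another smooth constraint surface); what yours buys is transparency and a tighter bound. Your explicit case split on $\|\tilde{\bmdelta}_1\|\geq\varepsilon$ is not a defect but a genuine improvement in rigor: as stated, the lemma fails for a unit vector $\hat v$ with a large inward normal component (the discrepancy is then $\eta|\alpha|$, which can be of order $\eta\gg\eta^2/\varepsilon$), and the paper's proof silently assumes the same regime when it asserts $(\tilde{\bmdelta}_1-\bmz_1)\in\cT^c(\bmz_1)$ and applies Lemma \ref{lemma:g1} to $\bmz_1$ as a sphere point. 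Your observation that the bound is only ever invoked for consecutive iterates that both lie on the sphere --- which forces $\|\tilde{\bmdelta}_1\|\geq\varepsilon$ --- is exactly the right way to close that gap.
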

It is worth noting that $\Gamma(\bmdelta)$ is actually the tangent component of $\partial{f(\bmdelta)}$
\begin{equation*}
\Gamma(\bmdelta)=\cP_{\cT(\bmdelta)}\cdot\partial f(\bmdelta).
\end{equation*}
As a result, for $\bmdelta_t\in \varepsilon \bS^{d-1}$ 
\begin{equation}\label{eq:geometry}
\|\bmdelta_{t+1}-(\bmdelta_t-\eta\Gamma(\bmdelta_t))\|\leq\frac{4\eta^2}{\varepsilon}.
\end{equation}
We can use the above Eq. (\ref{eq:taylor}) and (\ref{eq:geometry}) to calculate the progress at each step. Thus, in order to analyze the progress, we only need to carefully analyze $\Gamma$ and $\Xi$. In the following paragraph, we discuss $\Gamma$ and $\Xi$ case by case.

For each point on the sphere, we loosely define ``near'' and ``away from'' local optimums by looking into the angle between the gradient and the spherical normal vector. If the gradient is parallel to the spherical normal vector at a point on the sphere, then the point is a fixed point for projected gradient descent. It is either a local optimum or a saddle point. We will show such points are not saddle points under some regularity conditions. Since $c(\bmdelta)=\|\bmdelta\|^2-\varepsilon^2$, the unit spherical normal vector is $\bmdelta/\|\bmdelta\|$ at each point on the sphere and the cosine value of the angle we are looking at is $\angle[\partial f(\bmdelta),\bmdelta]$. If $\angle[\partial f(\bmdelta),\bmdelta]$ is close to $\pm 1$, then such $\bmdelta$ is close to a critical point.
\begin{lemma}[\textbf{Away from critical points on the sphere}]\label{lemma:awayopt}
For $m=\Omega(d^{5/2})$, there exists a threshold $\varepsilon_{\max}(m)=\Theta((\log m)^{-1})$, if $\varepsilon<\varepsilon_{\max}$, with high probability, for any  $\bmdelta\in\varepsilon\bS^{d-1}$ and any $0\leq\beta\leq 1$ such that
$$\angle[\partial f(\bmdelta),\bmdelta]\leq \beta,$$
we have 
$$\|\Gamma(\bmdelta)\|\geq\sqrt{1-\beta^2}\|\partial f(\bmdelta)\|\geq \cL B_l \sqrt{1-\beta^2},$$
where $B_l$ is of order $\Theta (1)$.
\end{lemma}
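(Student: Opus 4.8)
The plan is to prove the two displayed inequalities separately: the first is a deterministic fact about tangent projections on the sphere, and the second is the probabilistic heart of the statement.

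\emph{The first inequality} is pure geometry. At $\bmdelta\in\varepsilon\bS^{d-1}$ the unit outward normal is $\bmdelta/\varepsilon$, and by the identity $\Gamma(\bmdelta)=\cP_{\cT(\bmdelta)}\partial f(\bmdelta)$ recorded just before the lemma (itself immediate from $\partial c(\bmdelta)=2\bmdelta$ and the definition of $\lambda^*$ as the minimiser of $\|\partial f(\bmdelta)-\lambda\partial c(\bmdelta)\|$), $\Gamma(\bmdelta)$ is exactly the component of $\partial f(\bmdelta)$ tangent to the sphere. Hence by Pythagoras
\[
\|\Gamma(\bmdelta)\|^2=\|\partial f(\bmdelta)\|^2-\langle\partial f(\bmdelta),\bmdelta/\varepsilon\rangle^2=\|\partial f(\bmdelta)\|^2\bigl(1-\angle[\partial f(\bmdelta),\bmdelta]^2\bigr),
\]
and the hypothesis that $\angle[\partial f(\bmdelta),\bmdelta]$ is at most $\beta$ in modulus (as it must be read, so as to exclude a neighbourhood of both fixed directions $\pm\bmdelta/\varepsilon$) gives $\|\Gamma(\bmdelta)\|\ge\sqrt{1-\beta^2}\,\|\partial f(\bmdelta)\|$.

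\emph{The second inequality} needs a uniform-over-the-sphere lower bound on $\|\partial f(\bmdelta)\|$, where $\partial f(\bmdelta)$ is the gradient of the objective $L(\bmdelta)=-(y-f(\bma,\bmW,\bmx+\bmdelta))^2$, i.e. $\partial f(\bmdelta)=2\bigl(y-f(\bma,\bmW,\bmx+\bmdelta)\bigr)\bmv(\bmdelta)$ with $\bmv(\bmdelta):=\sum_{r=1}^m a_r\sigma'(\bmw_r^T(\bmx+\bmdelta))\bmw_r$. Since $|y-f|>\cL$ by the standing assumption, it suffices to prove $\inf_{\bmdelta\in\varepsilon\bS^{d-1}}\|\bmv(\bmdelta)\|=\Omega(1)$ with high probability. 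First I would compute the $\bma$-conditional second moment: as the $a_r$ are independent, mean zero, with $a_r^2=\gamma^2$,
\[
g(\bmdelta):=\E_{\bma}\bigl[\|\bmv(\bmdelta)\|^2\mid\bmW\bigr]=\gamma^2\sum_{r=1}^m\sigma'(\bmw_r^T(\bmx+\bmdelta))^2\|\bmw_r\|^2 .
\]
Under Xavier scaling $\gamma^2m=1$ and $\kappa^2d=1$, so $\|\bmw_r\|^2$ concentrates at $1$; and because $\varepsilon<\varepsilon_{\max}=\Theta((\log m)^{-1})$ — together with the scale of $\bmx$, as flagged in the remark after Theorem~\ref{thm:main} — the pre-activations $\bmw_r^T(\bmx+\bmdelta)$ stay moderate for all but a vanishing fraction of $r$, on which $\sigma'$ is bounded away from $0$. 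This yields $\E_{\bmW}g(\bmdelta)=\Theta(1)$ uniformly in $\bmdelta$, and up to constants $B_l^2$ may be taken to be this value. To make it uniform, I would use that $g$ is an average of $m$ i.i.d.\ sub-exponential terms that is $O(1)$-Lipschitz in $\bmdelta$ on $\bB(\bm0,\varepsilon)$ (bounding $\|\partial_{\bmdelta}g\|\le\tfrac12\gamma^2\sum_r\|\bmw_r\|^3$ on the likely event $\max_r\|\bmw_r\|=O(1)$): Bernstein gives per-point probability $e^{-\Omega(m)}$ that $g(\bmdelta)<\tfrac12\E g(\bmdelta)$, and since a constant-mesh net of $\varepsilon\bS^{d-1}$ has only $e^{O(d)}$ points, the width hypothesis $m=\Omega(d^{5/2})\gg d$ closes the union bound, so $g(\bmdelta)\ge\tfrac12\E g(\bmdelta)$ holds for all $\bmdelta\in\varepsilon\bS^{d-1}$ at once.

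\emph{The main obstacle} is to pass from the $\bma$-average $g(\bmdelta)$ to $\|\bmv(\bmdelta)\|^2=g(\bmdelta)+Q(\bmdelta)$ uniformly over the sphere, where $Q(\bmdelta)=\sum_{r\ne s}a_ra_s\,\sigma'(\bmw_r^T(\bmx+\bmdelta))\,\sigma'(\bmw_s^T(\bmx+\bmdelta))\,\bmw_r^T\bmw_s$ is the mean-zero off-diagonal chaos in $\bma$. Pointwise this is harmless — $\E_{\bma}Q(\bmdelta)^2=\Theta(1/d)$, so $|Q(\bmdelta)|=o(1)$ with high probability over $\bma$ — but Hanson--Wright only gives $\Pr_{\bma}[|Q(\bmdelta)|>t]\le e^{-\Omega(\min(t^2d,\,td))}$, so at the scale $t=\Theta(1)$ needed to dominate it by $g$ the per-point failure is a mere $e^{-\Omega(d)}$, which does \emph{not} survive a union bound over an $e^{\Omega(d)}$-size net of the sphere. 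I would therefore control $\sup_{\bmdelta\in\varepsilon\bS^{d-1}}|Q(\bmdelta)|$ directly by chaining for the sub-exponential chaos process $\bmdelta\mapsto Q(\bmdelta)$: on a good event for $\bmW$ the increments are governed (roughly) by $\gamma^2\|B(\bmdelta)-B(\bmdelta')\|_F$ with $B(\bmdelta)_{rs}=\sigma'(\bmw_r^T(\bmx+\bmdelta))\sigma'(\bmw_s^T(\bmx+\bmdelta))\bmw_r^T\bmw_s$, where $\gamma^2\|B(\bmdelta)\|_F=O(1/\sqrt d)$ and the Frobenius-Lipschitz constant is $O(m/\sqrt d)$, so the metric diameter of $\varepsilon\bS^{d-1}$ is $O(\varepsilon/\sqrt d)$ and the Dudley integral over this $d$-dimensional index set is $O(\varepsilon)=o(1)$ — here the smallness of $\varepsilon$ is exactly what makes the bound vanish. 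A Talagrand-type deviation inequality for sub-exponential chaos then upgrades this to $\sup_{\bmdelta\in\varepsilon\bS^{d-1}}|Q(\bmdelta)|=o(1)$ with probability $1-e^{-\Omega(d)}$. Combining the pieces, $\|\bmv(\bmdelta)\|^2\ge g(\bmdelta)-|Q(\bmdelta)|\ge\tfrac14\E g(\bmdelta)=\Omega(1)$ uniformly, hence $\|\partial f(\bmdelta)\|\ge 2\cL\|\bmv(\bmdelta)\|\ge\cL B_l$, and substituting into the geometric inequality gives $\|\Gamma(\bmdelta)\|\ge\sqrt{1-\beta^2}\,\|\partial f(\bmdelta)\|\ge\cL B_l\sqrt{1-\beta^2}$ as claimed.
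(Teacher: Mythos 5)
Your proposal is correct, and its first half coincides with the paper's argument: the paper likewise reduces the first inequality to the observation that $\Gamma(\bmdelta)$ is (up to the factor $2(u-y)$) the tangential projection of $\partial f(\bmdelta)$ at $\bmdelta\in\varepsilon\bS^{d-1}$, so Pythagoras gives $\|\Gamma(\bmdelta)\|\geq\sqrt{1-\beta^2}\,\|\partial f(\bmdelta)\|$; your remark that the hypothesis must be read as $|\angle[\partial f(\bmdelta),\bmdelta]|\leq\beta$ is a correct and welcome clarification of a point the paper leaves implicit (otherwise the bound fails near $\angle=-1$). Where you genuinely diverge is in the probabilistic lower bound $\|\partial f(\bmdelta)\|\geq B_l$ uniformly over $\bB(\bm 0,\varepsilon)$. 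The paper (Lemma~\ref{lemma:fd}) anchors at the single point $\bmdelta=\bm 0$, where a multivariate Berry--Esseen bound shows $\|\Sigma^{-1/2}\partial f(\bm 0)\|$ concentrates near $\sqrt d$, and then controls $\sup_{\bmdelta,\bmdelta'}\|\partial f(\bmdelta)-\partial f(\bmdelta')\|$ by a $\psi_2$-chaining argument on the increment process, which is where the $\varepsilon=O((\log m)^{-1/2})$ and truncation $\max_r\|\bmw_r\|\leq c_1\sqrt{\log m}$ enter. You instead expand $\|\partial f(\bmdelta)\|^2$ (conditionally on $\bmW$) into its $\bma$-diagonal part $g(\bmdelta)$, handled by Bernstein plus a net using $m=\Omega(d^{5/2})\gg d$, and an off-diagonal Rademacher chaos $Q(\bmdelta)$, handled by chaining with a Dudley integral of order $\varepsilon$; your observation that pointwise Hanson--Wright does not obviously survive the union bound over the sphere, so that the chaos must be chained, is exactly the right technical concern. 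Both routes are sound. The paper's anchor-plus-increments structure is more economical because the same chaining bound on increments is reused later (Corollary~\ref{coro:angle} needs $\sup\|\partial f(\bmdelta)-\partial f(\bmdelta')\|\to 0$ anyway), while your second-moment decomposition avoids the Berry--Esseen step and makes the dependence on the Xavier scaling $\gamma^2 m=\kappa^2 d=1$ and on the smallness of $\varepsilon$ more transparent.
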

Recall $\cL$ is the lower bound such that $|y-f(\bma,\bmW,\bmdelta+\bm\bmx)|>\cL$ for all $\bmdelta\in\bB(\bm 0,\varepsilon)$. The above lemma shows if the trajectory is away from critical points, each step can decrease the loss value by $-\Omega(\eta)$ since $\bmdelta_{t+1}\approx\bmdelta_t-\eta\Gamma(\bmdelta_t)$ and $L(\bmdelta_{t+1})\leq L(\bmdelta_t)+\Gamma(\bmdelta_t)^T(\bmdelta_{t+1}-\bmdelta_t)+O(\|\bmdelta_{t+1}-\bmdelta_t\|^2)$.

The hard case is when the trajectory is near a critical point on the sphere. We will first show that the critical points on the sphere are not saddle points under some regularity conditions. 
\begin{lemma}[\textbf{Near critical points on the sphere}]\label{lemma:nearopt}
For $m=\Omega(d^{5/2})$, there exists a threshold $\varepsilon_{\max}(m)=\Theta((\log m)^{-1})$, if $\varepsilon<\varepsilon_{\max}$, with high probability, there exists universal constants $\phi,\gamma>0$, for any  $\bmdelta\in\varepsilon\bS^{d-1}$, such that
$$\angle[\partial f(\bmdelta),\bmdelta]\geq \phi,$$
then for all $\|\bmv\|=1$,
$$\text{sgn}\big((y-u)\bmdelta^T\partial f(
\bmdelta)\big)\cdot\bmv^T\Xi\bmv\geq \gamma.$$
\end{lemma}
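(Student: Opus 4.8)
\noindent\emph{Proof idea.}
The governing observation is that under the ``near a critical point'' hypothesis the Lagrange multiplier $\lambda^*(\bmdelta)$ is forced to be of order $\varepsilon^{-1}$, so that in
$$
\Xi(\bmdelta)=\frac{\partial^2 L(\bmdelta)}{\partial\bmdelta^2}-2\lambda^*(\bmdelta)\,I
$$
the isotropic term swamps the Hessian of $L$ and thereby pins down both the sign of $\Xi(\bmdelta)$ and a uniform spectral gap. Write $u=f(\bma,\bmW,\bmdelta+\bmx)$ and let $\nabla f,\nabla^2 f$ denote the gradient and Hessian of $u$ in $\bmdelta$. From $L(\bmdelta)=-(y-u)^2$ we get $\partial L/\partial\bmdelta=2(y-u)\nabla f$ and $\partial^2 L/\partial\bmdelta^2=-2\,\nabla f(\nabla f)^T+2(y-u)\,\nabla^2 f$; since $c(\bmdelta)=\|\bmdelta\|^2-\varepsilon^2$ has $\partial c/\partial\bmdelta=2\bmdelta$ and $\partial^2 c/\partial\bmdelta^2=2I$, on $\varepsilon\bS^{d-1}$ the minimizing multiplier is $\lambda^*(\bmdelta)=\bmdelta^T(\partial L/\partial\bmdelta)/(2\varepsilon^2)$ and $\Gamma(\bmdelta)=\cP_{\cT(\bmdelta)}(\partial L/\partial\bmdelta)$, which is exactly why $\partial f(\bmdelta)$ in the excerpt may be read as $\partial L/\partial\bmdelta$. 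Hence for any unit $\bmv$,
$$
\bmv^T\Xi(\bmdelta)\bmv=-2(\bmv^T\nabla f)^2+2(y-u)\,\bmv^T\nabla^2 f\,\bmv-2\lambda^*(\bmdelta),
$$
and it suffices to show the last term dominates the other two uniformly in $\bmv$ and $\bmdelta$.

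I would first collect the estimates available (with high probability) uniformly over $\bmdelta\in\bB(\bm0,\varepsilon)$: the standing assumption gives $\cL\le|y-u|\le\cU$; the proof of Lemma~\ref{lemma:awayopt} supplies universal constants $0<B_l\le B_u$ with $B_l\le\|\nabla f\|\le B_u$ (here $m=\Omega(d^{5/2})$ together with the smallness of $\varepsilon$ and $\|\bmx\|$ keeps a constant fraction of units active, so the random sum $\nabla f=\sum_r a_r\sigma'(\bmw_r^T(\bmx+\bmdelta))\bmw_r$ does not degenerate); and---the only genuinely new ingredient---a uniform spectral bound $\|\nabla^2 f\|\le\tau(m)$, where $\nabla^2 f=\sum_r a_r\sigma''(\bmw_r^T(\bmx+\bmdelta))\bmw_r\bmw_r^T$. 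Because $\|\sigma''\|_\infty\le\tfrac14$, at a fixed $\bmdelta$ matrix Bernstein (variance proxy $O(1/d)$, summand norm $O(m^{-1/2})$) gives $\|\nabla^2 f\|=\widetilde{O}(d^{-1/2})$, and a covering-net argument on $\varepsilon\bS^{d-1}$ upgrades this to $\tau(m)=\widetilde{O}(1)$ uniformly---which is all that is needed, since $\tau(m)=o(\varepsilon^{-1})$ as soon as $\varepsilon<\varepsilon_{\max}(m)=\Theta((\log m)^{-1})$.

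With these in hand the conclusion is short. The hypothesis $\angle[\partial f(\bmdelta),\bmdelta]\ge\phi$ (the case $\angle\le-\phi$ being symmetric) gives $|\bmdelta^T(\partial L/\partial\bmdelta)|\ge\phi\,\varepsilon\,\|\partial L/\partial\bmdelta\|\ge 2\phi\cL B_l\,\varepsilon$, hence
$$
|\lambda^*(\bmdelta)|=\frac{|\bmdelta^T(\partial L/\partial\bmdelta)|}{2\varepsilon^2}\ \ge\ \frac{\phi\cL B_l}{\varepsilon}\ \ge\ \frac{\phi\cL B_l}{\varepsilon_{\max}(m)}\ =\ \Omega(\log m),
$$
while the remaining two terms of $\bmv^T\Xi(\bmdelta)\bmv$ are bounded in absolute value by $2B_u^2+2\cU\tau(m)=\widetilde{O}(1)=o(|\lambda^*(\bmdelta)|)$. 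Thus for every unit $\bmv$: if $\lambda^*(\bmdelta)>0$ then $\bmv^T\Xi(\bmdelta)\bmv\le-2\lambda^*(\bmdelta)+2\cU\tau(m)\le-\gamma$, and if $\lambda^*(\bmdelta)<0$ then $\bmv^T\Xi(\bmdelta)\bmv\ge 2|\lambda^*(\bmdelta)|-2B_u^2-2\cU\tau(m)\ge\gamma$, for a fixed universal $\gamma>0$ (valid once $m$ exceeds an absolute constant, which $m=\Omega(d^{5/2})$ provides). Hence $\bmv^T\Xi(\bmdelta)\bmv$ is sign-definite with $|\bmv^T\Xi(\bmdelta)\bmv|\ge\gamma$ and its sign equals $-\sgn(\lambda^*(\bmdelta))=-\sgn(\bmdelta^T\partial f(\bmdelta))$; since $|y-u|\ge\cL$ keeps $\sgn(y-u)$ constant on $\bB(\bm0,\varepsilon)$, multiplying through by that fixed sign identifies this with $\sgn\!\big((y-u)\bmdelta^T\partial f(\bmdelta)\big)$, giving $\sgn\!\big((y-u)\bmdelta^T\partial f(\bmdelta)\big)\cdot\bmv^T\Xi(\bmdelta)\bmv\ge\gamma$ for all $\|\bmv\|=1$, as claimed.

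The main obstacle is the uniform spectral control of the network Hessian $\nabla^2 f$ (and, secondarily, the uniform two-sided control of $\|\nabla f\|$): the pointwise concentration is routine, but $\nabla^2 f$ has $\bmdelta$-Lipschitz constant only $O(m^{1/2})$, so one must take a net of $\varepsilon\bS^{d-1}$ at scale $o(m^{-1/2})$, whose log-cardinality is $\Theta(d\log m)$; balancing this against the Bernstein tail is precisely what forces $m=\Omega(d^{5/2})$ and the cap $\varepsilon<\varepsilon_{\max}(m)=\Theta((\log m)^{-1})$, the latter ensuring $|\lambda^*(\bmdelta)|=\Omega(\log m)$ still beats the $\widetilde{O}(1)$ residual. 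A minor point needing care is the sign bookkeeping of the last step---checking, under the excerpt's conventions, that the definite sign of $\Xi(\bmdelta)$ is exactly the one selected by $\sgn\big((y-u)\bmdelta^T\partial f(\bmdelta)\big)$.
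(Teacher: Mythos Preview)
Your approach is essentially the paper's: write out $\Xi$, isolate the isotropic piece $-2\lambda^*(\bmdelta)I$, and show that under the angle hypothesis $|\lambda^*(\bmdelta)|\ge \phi\cL B_l/\varepsilon=\Omega(\log m)$ dominates the residual $\|\nabla^2 f\|+\|\nabla f\|^2/\cL$, which is $\widetilde O(1)$ by the high-probability gradient/Hessian bounds. The paper's proof is exactly this three-line comparison, invoking its Lemmas on $\|\partial f/\partial\bmdelta\|\le B_u\sqrt{\log d}$ and $\|\partial^2 f/\partial\bmdelta^2\|\le M_u\log m$ (obtained there via truncation plus $\psi_2$-chaining, not the matrix-Bernstein-plus-net route you sketch; both work, and the chaining avoids having to balance the crude $O(m^{1/2})$ Lipschitz constant against net cardinality).

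Two small points. First, in this paper $\partial f(\bmdelta)$ consistently means $\nabla_{\bmdelta} f$, not $\partial L/\partial\bmdelta$; the $\cL$ factor in Lemma~\ref{lemma:awayopt} comes from the separate $|y-u|\ge\cL$ bound, not from reading $\partial f$ as $\partial L$. Second, your closing sign manipulation does not actually land on the lemma's sign: from your own (correct) derivation the dominant term of $\bmv^T\Xi\bmv$ is $-2(y-u)\varepsilon^{-2}\bmdelta^T\nabla f$, and ``multiplying through by $\sgn(y-u)$'' does not convert $-\sgn((y-u)\bmdelta^T\nabla f)$ into $+\sgn((y-u)\bmdelta^T\nabla f)$. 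The paper's displayed formula for $\Xi$ carries the opposite overall sign from yours, and the lemma's stated sign matches \emph{that} formula; so the discrepancy you are trying to argue away is in fact a sign inconsistency in the paper's conventions, not something your argument can or should repair. You were right to flag this; just state explicitly that the dominant isotropic term fixes both the definiteness and the sign of $\Xi$, and record the sign as $\sgn\big(-(y-u)\bmdelta^T\nabla f\big)$, noting the clash with the paper's convention rather than forcing a match.
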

Lemma \ref{lemma:nearopt} implies $\Xi$ is either positive definite or negative definite near a critical point, thus, none of the critical points on the sphere are saddle points.

Since near a local minimum, the trajectory can converge to that local minimum by traditional analysis technique, the only thing left to deal with is when the trajectory is near local maximums. The following lemma states the trajectory will not be stuck near any local maximum with high probability.

We denote the set $\Delta^{-}_{\eta}=\{\bmdelta:\angle[\partial f(\bmdelta),\bmdelta]\leq -1+\sqrt{\eta}/(\cL B_l),\bmdelta\in\varepsilon\bS^{d-1}\}$ and  $\Delta^{+}_{\eta}=\{\bmdelta:\angle[\partial f(\bmdelta),\bmdelta]\geq 1-\sqrt{\eta}/(\cL B_l),\bmdelta\in\varepsilon\bS^{d-1}\}$. Notice that $\angle[\partial f(\bmdelta),\bmdelta]=\pm 1$ when the spherical normal vector is parallel to the gradient at $\bmdelta$. Thus, for small $\eta$, the two sets are the collections of points near local maximums and local minimums respectively.
\begin{lemma}[\textbf{Trajectory and local optimums}]\label{lemma:trajectory}
For learning rate $\eta$ such that $\eta<\min\{1,\cL B_l\}$, if
\begin{align}\label{eq:trajectory}
\arccos\big(\angle[\partial f(\bmdelta),\partial f(\bmdelta')]\big)+\arccos\left(\sqrt{\frac{(\cL B_l)^2-\eta}{(\cL B_l)^2}}\right)\leq \frac{\pi}{4}
\end{align}
for all $\bmdelta,\bmdelta'\in\varepsilon\bS^{d-1}$, the trajectory initialized by drawing from a uniform distribution over  $\bB^\circ(\bm0,\varepsilon)$ will never reach $\Delta^{-}_{\eta}$. Meanwhile, if there exists $t^*$ such that $\bmdelta_{t^*}\in \Delta^{+}_{\eta}$, then for all $t\geq t^*$, $\bmdelta_{t}\in \Delta^{+}_{\eta}$.
\end{lemma}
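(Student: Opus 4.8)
The plan is to control the evolution of the angle $\angle[\partial f(\bmdelta_t),\bmdelta_t]$ along the trajectory, using the approximation $\bmdelta_{t+1}\approx\bmdelta_t-\eta\Gamma(\bmdelta_t)$ from Eq.~(\ref{eq:geometry}) together with the fact that $\Gamma(\bmdelta)=\cP_{\cT(\bmdelta)}\cdot\partial f(\bmdelta)$ is the tangent component of $\partial f$. First I would set up the geometric picture on the sphere $\varepsilon\bS^{d-1}$: writing $\bmdelta_{t+1}=\bmdelta_t-\eta\Gamma(\bmdelta_t)+O(\eta^2/\varepsilon)$ and then renormalizing (since the true iterate is the projection onto the ball, which for a point near the sphere lands essentially back on the sphere), the update moves $\bmdelta_t$ in the tangent direction $-\Gamma(\bmdelta_t)$. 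Because $\Gamma$ points along $-\cP_{\cT(\bmdelta_t)}\partial f(\bmdelta_t)$, the step increases the alignment between the position vector and $-\partial f$, i.e. it pushes $\angle[\partial f(\bmdelta_t),\bmdelta_t]$ downward only when we are already on the ``minimum side''. The key observation is: to enter $\Delta^-_\eta$ (the neighborhood of a local maximum, where $\angle[\partial f(\bmdelta),\bmdelta]\le -1+\sqrt\eta/(\cL B_l)$), the trajectory would need the gradient $\partial f(\bmdelta_t)$ to become nearly anti-parallel to $\bmdelta_t$; but the one-step tangent drift has magnitude $\ge \eta\|\Gamma(\bmdelta_t)\|\ge \eta\cL B_l\sqrt{1-\beta^2}$ by Lemma~\ref{lemma:awayopt}, so the position vector moves by an angle bounded below in terms of $\eta$, while by the bounded-variation hypothesis~(\ref{eq:trajectory}) the gradient direction itself cannot vary by more than a small angle over the whole sphere.

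The second step is to make this quantitative. I would show that if $\bmdelta_t\notin\Delta^-_\eta$ then $\bmdelta_{t+1}\notin\Delta^-_\eta$ either, by a contradiction/continuity argument: suppose the trajectory first reaches $\Delta^-_\eta$ at time $t+1$. At time $t$ we have $\angle[\partial f(\bmdelta_t),\bmdelta_t]> -1+\sqrt\eta/(\cL B_l)$, so the tangent component of $\partial f(\bmdelta_t)$ has norm at least $\|\partial f(\bmdelta_t)\|\sqrt{1-(1-\sqrt\eta/(\cL B_l))^2}\ge \cL B_l\sqrt{2\sqrt\eta/(\cL B_l)}\gtrsim\sqrt{\eta}$ up to the approximation error — more precisely I would use the cleaner bound that the angle between $\bmdelta_{t+1}$ and $\bmdelta_t$, call it $\alpha$, satisfies $\cos\alpha \le \sqrt{((\cL B_l)^2-\eta)/(\cL B_l)^2}$ type estimate coming from the step size, which is exactly the second term appearing in~(\ref{eq:trajectory}). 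Then to be in $\Delta^-_\eta$ at $t+1$ the vector $\partial f(\bmdelta_{t+1})$ must be within a small angle of $-\bmdelta_{t+1}$, hence within a small angle of $-\bmdelta_t$ rotated by $\alpha$; combining with the fact that $\partial f(\bmdelta_{t+1})$ and $\partial f(\bmdelta_t)$ differ in direction by at most $\arccos(\angle[\partial f(\bmdelta_t),\partial f(\bmdelta_{t+1})])$, a triangle inequality on $\bS^{d-1}$ forces $\arccos(\angle[\partial f(\bmdelta_t),\partial f(\bmdelta_{t+1})])+\arccos(\sqrt{((\cL B_l)^2-\eta)/(\cL B_l)^2})$ to exceed the angular gap that $\bmdelta_t$ had from the ``max side'', which under~(\ref{eq:trajectory}) (the $\pi/4$ budget) is impossible. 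I would also need the base case: the uniformly random initialization in $\bB^\circ(\bm0,\varepsilon)$ almost surely does not start in $\Delta^-_\eta$, and the in-ball dynamics (Lemma~\ref{lemma:inball}) carry it to the sphere before any issue arises, so once on the sphere the inductive step applies. For the second claim — that $\Delta^+_\eta$ is forward-invariant — I would run the same angular bookkeeping in the opposite direction: near a local minimum the update contracts toward it (this is where Lemma~\ref{lemma:nearopt} gives positive definiteness of $\Xi$, so the spherical function behaves like a convex function locally), and the same triangle-inequality estimate under~(\ref{eq:trajectory}) shows the angle $\angle[\partial f(\bmdelta_t),\bmdelta_t]$ cannot drop back below $1-\sqrt\eta/(\cL B_l)$ once it is above it.

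The main obstacle I anticipate is the careful handling of the curvature of the sphere and the $O(\eta^2/\varepsilon)$ approximation error in Eq.~(\ref{eq:geometry}): the argument above pretends the iteration is exactly a tangent step followed by radial re-projection, but the genuine iterate picks up a second-order correction, and near the ``boundary'' sets $\Delta^\pm_\eta$ the quantities being compared are themselves of size $O(\sqrt\eta)$, so one must verify that the $O(\eta^2/\varepsilon)$ slippage is genuinely lower order — this is where the hypothesis $\eta<\varepsilon^2$ (hence $\eta^2/\varepsilon<\eta^{3/2}\ll\sqrt\eta$) from the ambient assumptions does the work, but the bookkeeping of constants against the $\pi/4$ budget in~(\ref{eq:trajectory}) is delicate. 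A secondary subtlety is that~(\ref{eq:trajectory}) is a uniform condition over all pairs $\bmdelta,\bmdelta'\in\varepsilon\bS^{d-1}$; I would want to double-check (or invoke from earlier estimates on $\partial f$, which under the small-$\varepsilon$, wide-$m$ regime is nearly constant in direction) that this is actually satisfiable for the parameter ranges in Theorem~\ref{thm:main}, so that the lemma is non-vacuous.
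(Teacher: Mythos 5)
Your raw materials are the same ones the paper uses: Lemma \ref{lemma:awayopt} to turn membership in $\Delta^{\pm}_{\eta}$ into an angular statement (which is exactly where the term $\arccos\bigl(\sqrt{((\cL B_l)^2-\eta)/(\cL B_l)^2}\bigr)$ in (\ref{eq:trajectory}) originates), the uniform bound on how much the direction of $\partial f$ can vary over the ball, a spherical triangle inequality measured against the $\pi/4$ budget, and the in-ball dynamics to control where the iterate first lands on the sphere. The difference is in how the on-sphere part is organized, and your organization has a genuine gap.

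Your inductive step is ``if $\bmdelta_t\notin\Delta^{-}_{\eta}$ then $\bmdelta_{t+1}\notin\Delta^{-}_{\eta}$,'' derived by a triangle inequality whose total budget is the left-hand side of (\ref{eq:trajectory}). But the induction hypothesis $\bmdelta_t\notin\Delta^{-}_{\eta}$ carries no quantitative angular buffer: $\bmdelta_t$ may sit arbitrarily close to the boundary of $\Delta^{-}_{\eta}$, in which case ``the angular gap that $\bmdelta_t$ had from the max side'' can be far smaller than the sum of the two $\arccos$ terms, and no contradiction results. Non-membership is too weak an invariant to propagate. The paper instead tracks the angular distance to the distinguished point $\bmdelta^\star$ with $\bmdelta^\star\parallel\partial f(\bmdelta^\star)$: condition (\ref{eq:trajectory}) plus the triangle inequality shows that $\Delta^{+}_{\eta}$ and $\Delta^{-}_{\eta}$ are contained in antipodal caps around $\pm\bmdelta^\star$ of controlled angular radius (hence separated by a fixed, $\Omega(1)$ angular distance), and then argues that the trajectory enters the sphere on the $+\bmdelta^\star$ side (the direction of travel inside the ball is within the angular budget of $\partial f(\bmdelta^\star)$) and that the tangential drift $-\eta\Gamma(\bmdelta_t)$, being the tangential projection of a vector whose direction is uniformly close to that of $\pm\bmdelta^\star$, never carries the iterate toward the $-\bmdelta^\star$ cap. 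It is this uniform separation between the trajectory and $\Delta^{-}_{\eta}$, not a per-step exclusion from an adjacent position, that makes the $\pi/4$ budget do its job; to repair your argument you would need to induct on a statement of the form ``$\arccos(\angle[\bmdelta_t,\bmdelta^\star])$ stays below a fixed threshold'' rather than on $\bmdelta_t\notin\Delta^{-}_{\eta}$. Your first paragraph does gesture at the correct repulsion mechanism, and your two flagged concerns (the $O(\eta^2/\varepsilon)$ slippage and the satisfiability of (\ref{eq:trajectory})) are legitimate but secondary — the latter is exactly Corollary \ref{coro:angle} in the appendix.
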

From the discussions above, it is easy to see Lemma \ref{lemma:trajectory} holds for small enough $\varepsilon$ and $\eta$. The above lemma states the trajectory will not be stuck near local maximums and $\|\Gamma(\bmdelta)\|\geq \sqrt{\eta}$ if $\bmdelta\notin \Delta^{+}_{\eta}$ for $\bmdelta\in\varepsilon\bS^{d-1}$. That can ensure $L(\bmdelta_{t+1})-L(\bmdelta_{t})\leq-\Omega(\eta^2)$ for $\bmdelta_{t},\bmdelta_{t+1}\in \varepsilon\bS^{d-1}$. As a result, the trajectory can constantly make progress until the trajectory reaches $\Delta^{+}_{\eta}$. Then, traditional techniques for convex optimization can be applied and gives us the final convergence result.

\begin{figure*}[t]
		\centering
		
		\subfigure[Input scale=0.01, ratio=0.1]{
			\centering
			\includegraphics[scale=0.21]{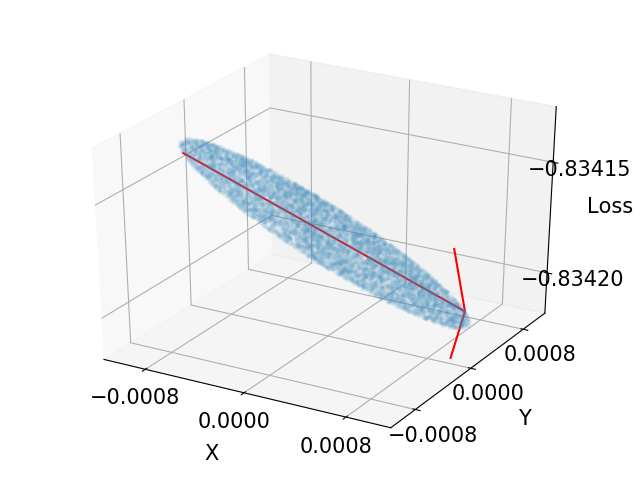}
			\label{fig:t-0.01-0.1}}
        \hspace{0.01in}
        \subfigure[Input scale=0.01, ratio=1.0]{
		\centering
		\includegraphics[scale=0.21]{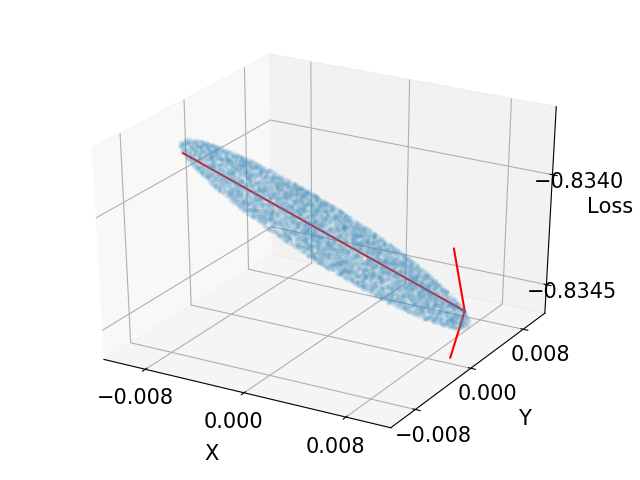}
		\label{fig:t-0.01-1.0}}
        \hspace{0.01in}
        \subfigure[Input scale=0.01, ratio=10]{
		\centering
		\includegraphics[scale=0.21]{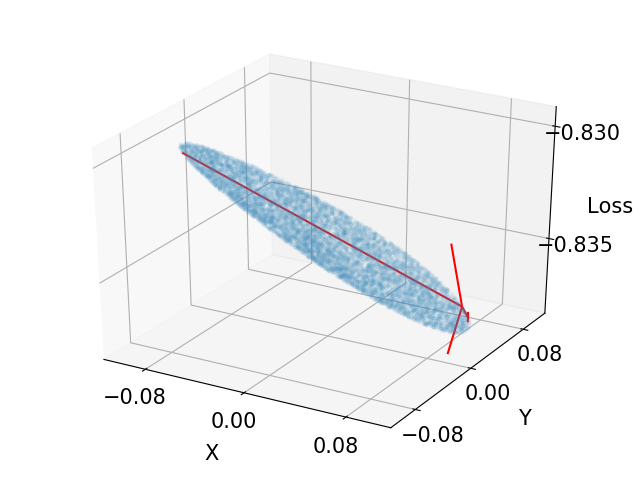}
		\label{fig:t-0.01-10}}
        \hspace{0.01in}
        
        \subfigure[Input scale=1.0, ratio=0.1]{
			\centering
			\includegraphics[scale=0.21]{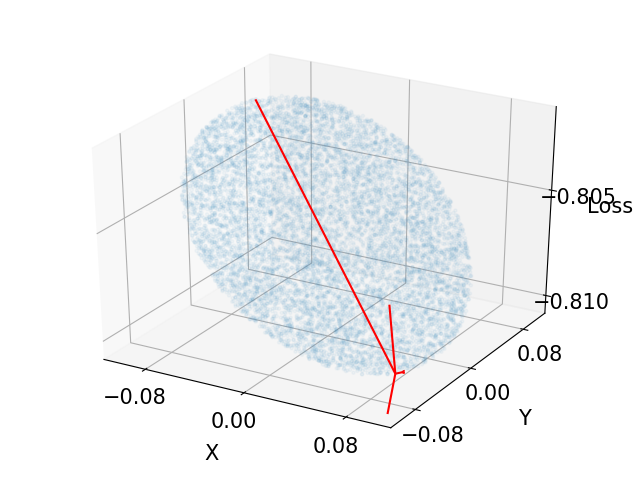}
			\label{fig:t-1.0-0.1}}
        \hspace{0.01in}
        \subfigure[Input scale=1.0, ratio=1.0]{
		\centering
		\includegraphics[scale=0.21]{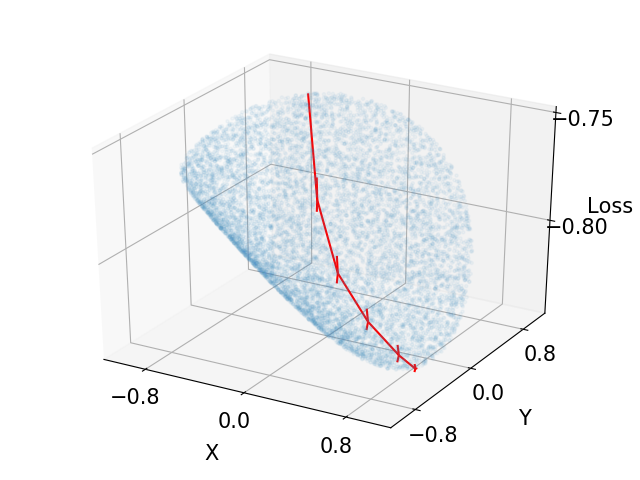}
		\label{fig:t-1.0-1.0}}
        \hspace{0.01in}
        \subfigure[Input scale=1.0, ratio=10]{
		\centering
		\includegraphics[scale=0.21]{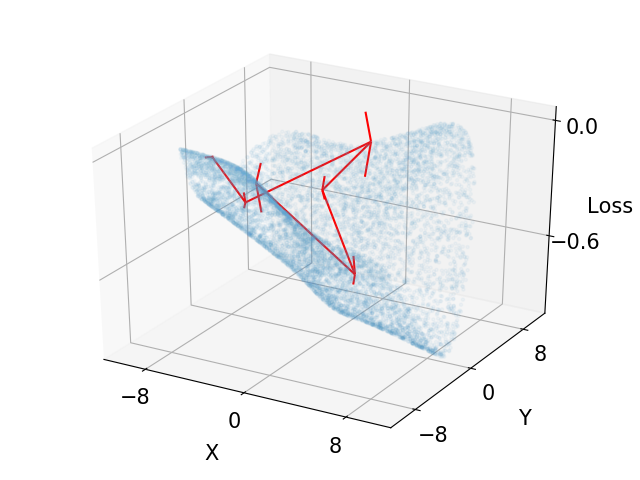}
		\label{fig:t-1.0-10}}
        \hspace{0.01in}
     
        \subfigure[Input scale=100, ratio=0.1]{
			\centering
			\includegraphics[scale=0.21]{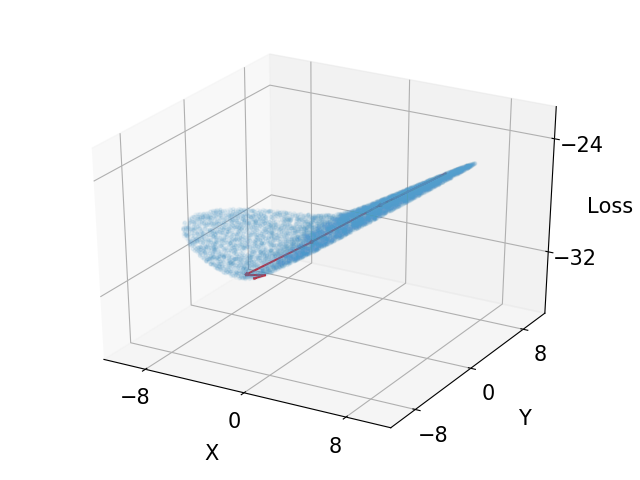}
			\label{fig:t-100-0.1}}
        \hspace{0.01in}
        \subfigure[Input scale=100, ratio=1.0]{
		\centering
		\includegraphics[scale=0.21]{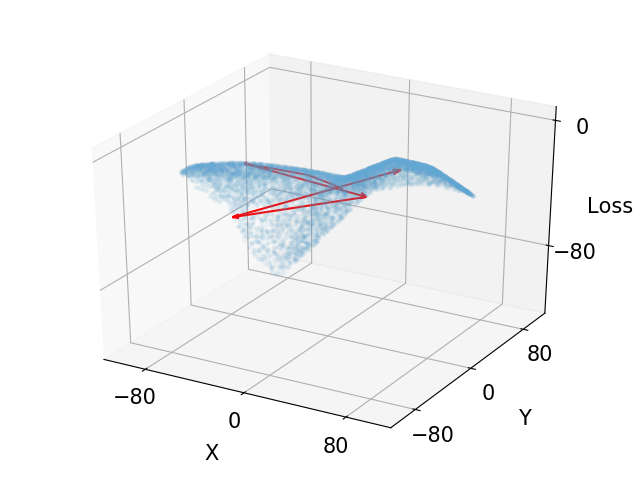}
		\label{fig:t-100-1.0}}
        \hspace{0.01in}
        \subfigure[Input scale=100, ratio=10]{
		\centering
		\includegraphics[scale=0.21]{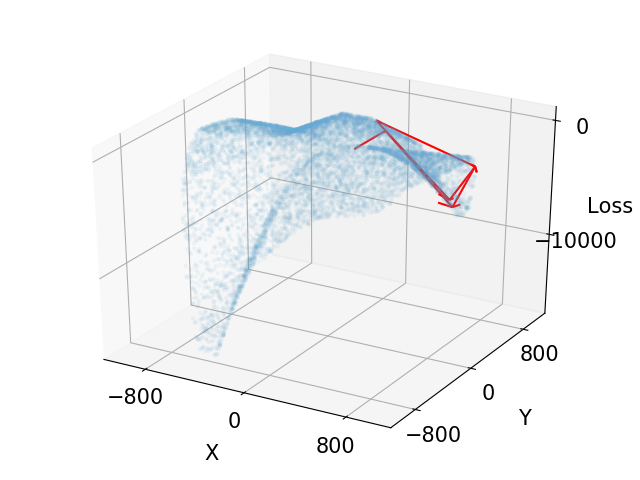}
		\label{fig:t-100-10}}
        \hspace{0.01in}
        
        \caption{Landscapes and trajectories on simulated data. We compare the landscapes and trajectories with three different input scales and three different perturbation ratios. If the input scale is small enough (i.e. $0.01$ here), the landscape has only one local minimum and PGD can easily escape the local maximal with few steps even with large perturbation ratio such as $10$. On the other hand, if the input scale is not small enough, we will have a less regular landscape with a lot of local minimums and it takes a lot of steps to escape from a local maximum. For large input scales, we have to reduce the perturbation ratio, so as to make the landscape become more regular and make escaping from the local maximums faster. Our simulations are based on two-dimensional inputs and two-layer neural networks. More details can be found in the supplementary materials.}
		\label{fig:trajectories}
\end{figure*}

\section{Implications and Extensions}\label{sec:implications and extensions}
So far, we have derived the theory about finding adversaries in the initial phase of adversarial training. Through our theoretical analysis, we also identify several interesting phenomena concerning the scale of input $\bmx$. In this section, we briefly discuss the implications of our theory on experiments and show how to extend our arguments to general losses.

\subsection{Scale, Landscape and Convergence}\label{subsec: scale}

In this subsection, we state the high level conclusions and the details of the theoretical results are left in the supplementary materials.

As we stated in the introduction, $\varepsilon$'s scale is usually formulated in proportional to the scale of input $\bmx$. In the empirical optimization (\ref{eq:sep})
\begin{equation*}
\min_{\bm\theta\in\Theta}\frac{1}{n}\sum_{i=1}^n\max_{\forall i\in[n],\|\bmdelta_i\|_p\leq \varepsilon}L(\bm\theta,\bmx_i+\bmdelta_i,y_i),
\end{equation*}
$\varepsilon$ takes the form 
$$\varepsilon = r\sum_{i=1}^n\frac{\|\bmx_i\|_p}{n}$$
for small $r>0$, where $r$ stands for a small  constant ratio.

\begin{figure*}[t]
		\centering
		\subfigure[Input scale=0.1]{
			\centering
			\includegraphics[scale=0.3]{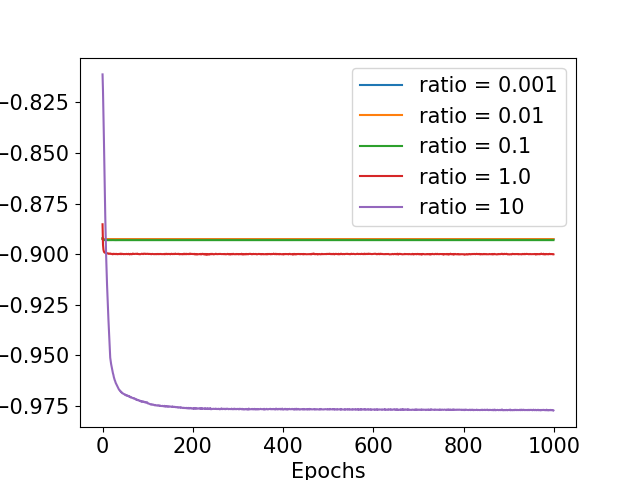}
			\label{fig:tr-0.1}}
        \hspace{0.01in}
        \subfigure[Input scale=10]{
		\centering
		\includegraphics[scale=0.3]{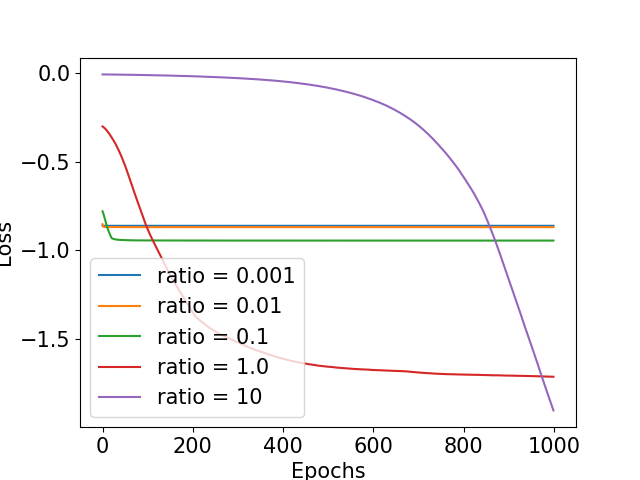}
		\label{fig:tr-10}}
        \hspace{0.01in}
        \subfigure[Input scale=1000]{
		\centering
		\includegraphics[scale=0.3]{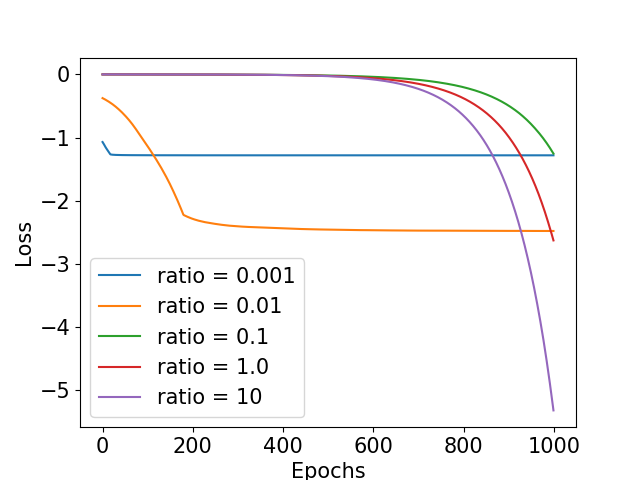}
		\label{fig:tr-1000}}
        \hspace{0.01in}
        \caption{Trajectories from local maxima to local minima on real-world data. We show the adversarial losses of each point on the trajectories from local maxima with three different input scales and five different perturbation ratios. For a fixed perturbation ratio, a smaller input scale means that escaping from local maxima is easier. If the input scale is small enough (i.e. $0.1$ here), PGD can easily escape the local maxima even with a large perturbation ratio such as $10$ as shown in Fig. \ref{fig:tr-0.1}. If the input scale is not small enough, escaping from a local maximum will be easier with a smaller perturbation ratio as shown in Fig. \ref{fig:tr-10}. If the input scale is too large, escaping from a local maximum will be difficult even with a small perturbation ratio $0.001$ as shown in Fig. \ref{fig:tr-1000}. These results are consistent with those on the simulated data in Fig. \ref{fig:trajectories}. The experiments are based on a real-world dataset MNIST and a practical multi-layer CNN. More details can be found in the supplementary materials.}
		\label{fig:trajectory-real}
\end{figure*}

In this section, we shed some light on Question 2, which we restate here.

 \textit{When we fix the ratio $r$, do smaller input scales (implying smaller $\varepsilon$) help optimization of adversarial training?}

Our answer to that question is positive --- at least the input scale matters in the initial phase of adversarial training. We experimentally and theoretically answer that question from the perspectives of landscapes and convergence of trajectories.
\subsubsection{Smaller input scales imply more regular landscapes}
In our proofs, the concentration results for all quantities such as $\sup_{\bmdelta \in \bB(\bm 0, \varepsilon)}\|\partial  f(\bma,\bmW,\bmdelta+\bm\bmx) /\partial \bmdelta\|$ and $\min_{\bmdelta,\bmdelta'\in \bB(\bm 0, \varepsilon)}\angle[\partial f(\bmdelta),\partial f(\bmdelta')]$ depend only on the scale of  $\varepsilon$ since in the initial phase, $\bm a$ and $\bmW$ are drawn from initialization distributions which are independent to the inputs. That fact implies with a fixed ratio $r$, a smaller input scale will result in a smaller $\varepsilon$, so as to make all the concentration results hold with a higher probability. Even if the ratio $r$ is large, which means the adversarial attack is more aggressive, the concentration results can hold regardlessly.

Moreover, 
$$\min_{\bmdelta,\bmdelta'}\angle[\partial f(\bmdelta),\partial f(\bmdelta')]\rightarrow 1,$$
as $\varepsilon\rightarrow 0$, which means the angle between $\partial f(\bmdelta)$ and $\partial f(\bmdelta')$ will be very small if $\varepsilon$ is small. Besides, for $\bmdelta\in\varepsilon\bS^{d-1}$, $\bmdelta$ is a local optimum if and only if $\bmdelta$ is parallel to $\partial f(\bmdelta)$. Combining the above facts, it is natural to expect the local minimums will be closer to each other when a smaller $\varepsilon$ is chosen. Actually, there is a threshold $\tau_\varepsilon>0$, when $\varepsilon$ is smaller than $\tau_\varepsilon$, there is only one minimum on the sphere.
\begin{theorem}[\textbf{Informal}]\label{thm:smallepsilon}
Under the settings of Theorem \ref{thm:main}, there exists a threshold $\tau_\varepsilon>0$, such that for $\varepsilon<\tau_\varepsilon$, there is only one local minimum on the sphere with high probability.
\end{theorem}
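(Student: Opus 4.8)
The plan is to show that, for sufficiently small $\varepsilon$, the only way a point $\bmdelta \in \varepsilon\bS^{d-1}$ can be a critical point (i.e. $\partial f(\bmdelta)$ parallel to $\bmdelta$) is if it lies in the ``local minimum'' region $\Delta^{+}_{\eta}$ of Lemma \ref{lemma:trajectory}, and then to argue that this region is so small that it can contain at most one critical point, which must then be the unique local minimum. First I would invoke the concentration estimate alluded to in the discussion preceding the theorem: with high probability, $\min_{\bmdelta,\bmdelta' \in \varepsilon\bS^{d-1}} \angle[\partial f(\bmdelta),\partial f(\bmdelta')] \geq 1 - \omega(\varepsilon)$ where $\omega(\varepsilon)\to 0$ as $\varepsilon\to 0$. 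This is because $\bma,\bmW$ are drawn from the initialization distributions independently of the data, and $\partial f(\bmdelta) = \sum_r a_r \sigma'(\bmw_r^T(\bmx+\bmdelta)) \bmw_r$ varies little as $\bmdelta$ ranges over a ball of radius $\varepsilon$ when $\varepsilon$ is small — the argument is the same Lipschitz-in-$\bmdelta$ / smoothness-of-$\sigma$ bound already used to establish Lemmas \ref{lemma:awayopt} and \ref{lemma:nearopt}. Consequently all gradient vectors on the sphere point in nearly the same direction, say within an angular cone of half-angle $\theta_\varepsilon = \arccos(1-\omega(\varepsilon))$, which shrinks to $0$.

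Next I would use the characterization that $\bmdelta \in \varepsilon\bS^{d-1}$ is a critical point iff $\bmdelta \parallel \partial f(\bmdelta)$, equivalently $\angle[\partial f(\bmdelta),\bmdelta] = \pm 1$. By Lemma \ref{lemma:trajectory}, for $\eta$ small enough (and the geometric condition \eqref{eq:trajectory}, which holds once $\theta_\varepsilon$ is small), no critical point can have $\angle[\partial f(\bmdelta),\bmdelta] = -1$: the set $\Delta^{-}_{\eta}$ is never reached, so in particular it contains no critical point — all critical points are local minima (this also follows from Lemma \ref{lemma:nearopt}, since saddle points are excluded). It remains to bound the number of local minima. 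Suppose $\bmdelta_1,\bmdelta_2 \in \varepsilon\bS^{d-1}$ are two distinct critical points, so $\bmdelta_i$ is (anti)parallel to $\partial f(\bmdelta_i)$. Since $\partial f(\bmdelta_1)$ and $\partial f(\bmdelta_2)$ are within angle $\theta_\varepsilon$ of each other, and the sign in $\angle[\partial f(\bmdelta_i),\bmdelta_i]=\pm 1$ is forced to be the same at both points (because, by continuity of $\bmdelta \mapsto \angle[\partial f(\bmdelta),\bmdelta]$ along a great-circle arc and the fact that intermediate values near $0$ would put a point outside both $\Delta^+_\eta$ and $\Delta^-_\eta$ where $\|\Gamma\|$ is bounded below — combined with the ``away from critical points'' Lemma \ref{lemma:awayopt} — the two cannot have opposite parity without a third critical point of the excluded type in between), the unit vectors $\bmdelta_1/\varepsilon$ and $\bmdelta_2/\varepsilon$ are within angle $\theta_\varepsilon$ of each other. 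Now localize: on the spherical cap of angular radius $\theta_\varepsilon$ around $\bmdelta_1$, Lemma \ref{lemma:nearopt} shows $\Xi$ is definite with a uniform spectral gap $\gamma$; a strictly convex (or strictly concave, in the tangent directions) function on a geodesically convex neighborhood has at most one critical point, contradicting the existence of $\bmdelta_2 \neq \bmdelta_1$ in that cap. Hence there is at most one critical point on the sphere; since the PGD analysis of Theorem \ref{thm:main} guarantees convergence to a local minimum on the sphere, that unique critical point exists and is the unique local minimum.

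The main obstacle I expect is making rigorous the ``same parity'' / connectedness step: ruling out the configuration where the sphere carries one local minimum in $\Delta^+_\eta$ and, far away, some point with $\angle[\partial f(\bmdelta),\bmdelta]$ close to $+1$ but of the opposite orientation — or more subtly, controlling the global topology so that the near-alignment of all gradients genuinely forces all critical directions into a single small cap rather than, say, two antipodal caps. The resolution is that $\angle[\partial f(\bmdelta),\bmdelta] = -1$ would require $\bmdelta$ antiparallel to a vector lying in the common cone, i.e. $-\bmdelta/\varepsilon$ within $\theta_\varepsilon$ of the cone direction while $\bmdelta/\varepsilon$ is within $\theta_\varepsilon$ on the far side — but along any path on the sphere connecting such a point to the genuine minimum, $\angle[\partial f(\cdot),\cdot]$ must pass through the ``away from critical points'' regime of Lemma \ref{lemma:awayopt} where $\|\Gamma\| \geq \cL B_l \sqrt{1-\beta^2} > 0$, so no additional critical point hides there, and a winding/degree argument on $\bmdelta \mapsto \angle[\partial f(\bmdelta),\bmdelta]$ pins the count at exactly one. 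The rest is the routine smoothness/concentration bookkeeping already developed for the earlier lemmas, specialized to extract the quantitative threshold $\tau_\varepsilon$.
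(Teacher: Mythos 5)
Your proposal follows essentially the same route as the paper: use the concentration of gradient directions ($\min_{\bmdelta,\bmdelta'}\angle[\partial f(\bmdelta),\partial f(\bmdelta')]\to 1$ as $\varepsilon\to 0$, i.e.\ Corollary \ref{coro:angle}) to confine all candidate minimizers to a single small spherical cap on which Lemma \ref{lemma:nearopt} makes $\Xi$ uniformly positive definite, and then conclude uniqueness by strict convexity on that cap — the paper phrases this last step as a mountain-pass contradiction, which is the same argument. One caveat: your intermediate claim that $\Delta^{-}_{\eta}$ "contains no critical point" because the PGD trajectory never reaches it is a non sequitur ($\Delta^{-}_{\eta}$ is precisely the neighborhood of the local maxima, which are critical points); the correct and sufficient version of that step, which you also mention, is that a local \emph{minimum} forces $\mathrm{sgn}\bigl((y-u)\,\bmdelta^T\partial f(\bmdelta)\bigr)$ to be positive via Lemma \ref{lemma:nearopt}, so both minimizers land in the same cap — and this claim is not load-bearing for the rest of your argument.
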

Theorem \ref{thm:smallepsilon} implies in the initial phase of  adversarial training, a smaller input scale of $\|\bmx\|$ actually can ensure there exists only one single local minimum on the sphere which is also the global minimum. Combined with previous results, the projected gradient descent is able to reach global minimum with high probability.

In  Figure \ref{fig:trajectories}, we can see for a fixed $r$, smaller input scale make the landscape more regular, for instance, the upper left one has only one local minimum. For a large input scale, the landscape will become very complex (see subfigure (i))  unless we use very small perturbation ratio $r$ (see subfigure (g)) .

\begin{figure*}[t]\label{fig:landscape}
		\centering
		\subfigure[Epoch=0]{
			\centering
			\includegraphics[scale=0.3]{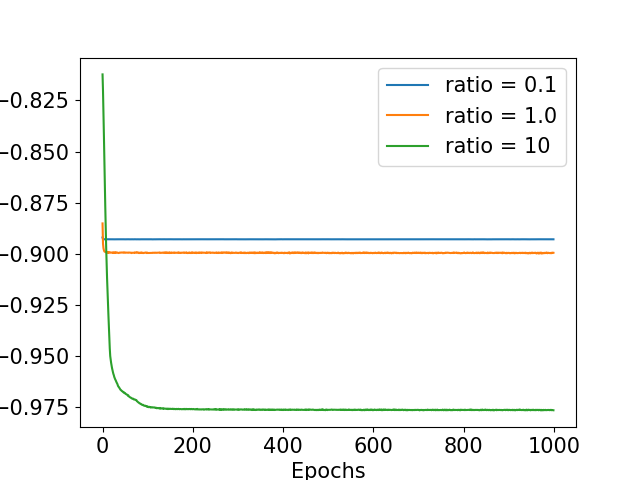}
			\label{fig:tr-epoch-0}}
        \hspace{0.01in}
        \subfigure[Epoch=10]{
		\centering
		\includegraphics[scale=0.3]{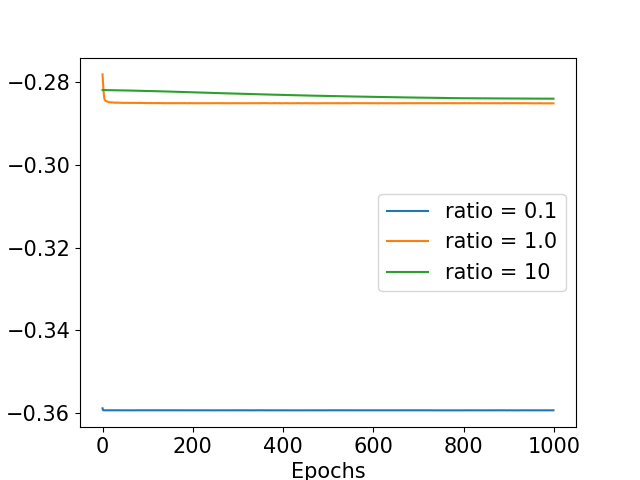}
		\label{fig:tr-epoch-10}}
        \hspace{0.01in}
        \subfigure[Epoch=100]{
		\centering
		\includegraphics[scale=0.3]{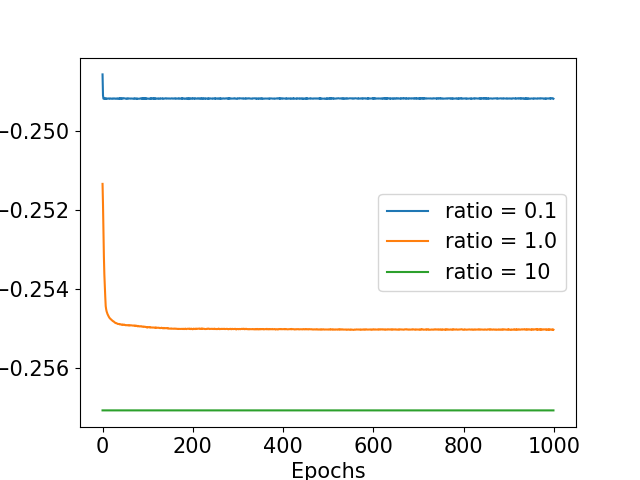}
		\label{fig:tr-epoch-100}}
        \hspace{0.01in}
        \caption{The dynamics of trajectories from local maxima to local minima during the adversarial training process. We use the same setting as that in Figure \ref{fig:trajectory-real} and fix the input scale as $0.1$. After a few epochs of adversarial training, escaping from local maxima is still easy for the small input scale as shown in Fig. \ref{fig:tr-epoch-10}. However, escaping from local maxima will be significantly harder after a lot of training epochs as shown in Fig. \ref{fig:tr-epoch-100}. This influence is more significant on large input scales compared to small ones.}
		\label{fig:trajectory-real-dynamics}
\end{figure*}

\subsubsection{Smaller input scales help convergence} 
Another interesting discovery is inspired by Lemma \ref{lemma:trajectory} in the previous section. If  $\varepsilon$ is not small enough, Eq. (\ref{eq:trajectory}) in Lemma \ref{lemma:trajectory} cannot stand. Thus, when the initial adversary $\bmdelta_0\in\bB^{\circ}(\bm0,\varepsilon)$ is close to one of the local maximums on the sphere, it is possible that the trajectory of projected gradient descent can reach the region $\Delta^{-}_{\eta}$ on the sphere, who contains points close to local maximums. Too close to a local maximum will result in a very small progress in the loss decay at each step, which will take much longer to reach a local minimum. As an illustration, we can see from Figure \ref{fig:trajectory-real} and \ref{fig:trajectory-real-dynamics}, by judging from the decay rate of loss function, we can see a smaller input scale leads to faster loss value decay in the initial phase of adversarial training.

\subsection{General losses}\label{subsec:generalloss}
Previously, we have derived the theory with respect to quadratic loss. In this subsection, we extend the theory to general losses of $(\bmx,y)\in\bR^d\times\bR$ in the following form:
$$L(y, f(\bm\theta, \bmx+\bmdelta )),$$
where we still take $f$ as a two-layer neural network discussed previously:
$$f(\bma,\bmW,\bmdelta+\bm\bmx)=\sum_{r=1}^m a_r\sigma(\bmw_r^T(\bmx+\bmdelta)).$$
Taking derivative with respect to $\bmdelta$:
\begin{equation*}
\frac{\partial L}{\partial \bmdelta}=\frac{\partial L}{\partial f}\cdot \frac{\partial f}{\partial \bmdelta},\quad \frac{\partial^2 L}{\partial \bmdelta^2}= \frac{\partial L}{\partial f} \cdot\frac{\partial^2 f}{\partial \bmdelta^2}+ \frac{\partial^2 L}{\partial f^2}\cdot  \frac{\partial f}{\partial \bmdelta}\Big( \frac{\partial f}{\partial \bmdelta}\Big)^T.
\end{equation*}
Actually the only difference of deriving theory for general losses compared to quadratic losses lies in the different form of $\partial L/\partial \bmdelta$. As long as $\partial L/\partial \bmdelta$ satisfies $\cL<|\partial L/\partial f|<\cU$ for all $\bmdelta\in\bB(\bm0,\varepsilon)$ for some $\cL, \cU>0$, and $|\partial ^2L/\partial f^2|$ is upper bounded by some constant $\cB>0$, all our previous conclusions stand without changing the scale of $\varepsilon$ and $\eta$. Instead of going into too many details, we leave the details to readers who are interested in checking. In the later paragraph, we focus on discussing whether the above assumptions are reasonable. 

Generally,  the loss chosen in the optimization has the following property: $L(y,f)=0$ if and only if $y=f$. The final goal of optimization is to make $L(y,f)$ small and in the initial phase, since we initialize the parameters randomly, we would expect $f(\bm\theta, \bmx+\bmdelta )$ to be ``far from'' the label $y$, in other words, $|L(y,f)|$ is lower bounded by some positive constant $\cL$. Then, by continuity of the loss function, if $\varepsilon$ is small, the change of $|L(y,f)|$ would be expected to be small. As a result, it is reasonable to assume $\partial L/\partial \bmdelta$ satisfies $\cL<|\partial L/\partial f|<\cU$ for all $\bmdelta\in\bB(\bm0,\varepsilon)$ for some $\cL, \cU>0$. Also, with smoothness assumptions on $L$ over $f$, and smoothness assumptions on $f$ over input $\bmx$, since the change of $\varepsilon$ is over a compact set, $|\partial ^2L/\partial f^2|$ should be upper bounded.

We wrap up this subsection with another concrete example besides quadratic loss -- cross entropy loss:
$$L(y, f)=-y\log\left(\frac{\exp(f)}{1+\exp(f)}\right)-(1-y )\log\left(\frac{1}{1+\exp(f)}\right).$$
Then, 
$$\frac{\partial L}{\partial f}=\frac{\exp(f)}{1+\exp(f)}-y,\quad \frac{\partial^2 L}{\partial f^2}=\frac{\exp(f)}{(1+\exp(f))^2}.$$
As discussed above, in the initial phase, we usually have the estimated probability $\exp(f)/(1+\exp(f))$ is not equal to the true probability (here the true probability $y$ is either $0$ or $1$). And with small $\varepsilon>0$, we would expect $\partial L/\partial \bmdelta$ satisfies $\cL<|\partial L/\partial f|<\cU$ for all $\bmdelta\in\bB(\bm0,\varepsilon)$ for some $\cL, \cU>0$. Meanwhile, apparently $0\leq \partial^2 L/\partial f^2\leq 1$. 

\section{Conclusions and Future Work}
In this paper, we theoretically characterize the dynamics of finding adversaries in two-layer fully connected neural networks in the initial phase of training. We also talk about the experimental implications the theory brings.  The main take-away is that in the initial phase of adversarial training, projected gradient method is trustworthy and a smaller input scale can help the adversarial training perform better.

In the future, we hope to extend our theory to higher layer neural networks and to the full dynamics involving weight updates. When considering the full dynamics, as the adversarial training process goes on, the weights become more and more dissimilar to gaussian vectors. Usually, as the adversarial training goes on, $L(y, f)$ will goes to $0$, so we can expect the convergence rate on finding adversaries will be slower since $\frac{\partial L}{\partial \bmdelta}=\frac{\partial L}{\partial f}\cdot \frac{\partial f}{\partial \bmdelta}$ and $\frac{\partial L}{\partial \bmdelta}$ should be close to $0$. 

The landscape of adversaries in the later phase of training will become very complicated due to the intervention of $\bmdelta$ and $\bm\theta$. More importantly, using first order optimization method is possible to result in a cyclic dynamic. It is also interesting to explore how to get rid of the cyclic dynamic problem in the future.

\section{Acknowledgements}
This work is in part supported by NSF award 1763665.


\bibliography{adv_cite}
\bibliographystyle{plain}


\newpage
\appendix
\noindent\textbf{\Large Appendix}
\\

The appendix consists of two parts. Section \ref{app:proof} contains the details for our proof.  Section \ref{sec:experimental-settings} provides more detailed descriptions of our experiments and attaches additional experiments.
\section{Omitted Proofs}\label{app:proof}
We provide a sketch of omitted proofs in this part. For future convenience, we state the expression of the following quantities for $f$:
$$ f(\bma,\bmW,\bmdelta+\bm\bmx) =\sum_{r=1}^ma_r\sigma(\bmw_r^T(\bmx+\bmdelta)).$$
$$\frac{\partial  f(\bma,\bmW,\bmdelta+\bm\bmx) }{\partial \bmdelta}=\sum_{r=1}^ma_r\sigma'(\bmw_r^T(\bmx+\bmdelta))\bmw_r,$$
where $\sigma'(x)=e^x/(1+e^x)$.
$$\frac{\partial^2  f(\bma,\bmW,\bmdelta+\bm\bmx) }{\partial \bmdelta^2}=\sum_{r=1}^m a_r\sigma''(\bmw_r^T(\bmx+\bmdelta))\bmw_r\bmw_r^T,$$
where $\sigma''(x)=e^x/(1+e^x)^2$.
\subsection{Proof of Lemma \ref{lemma:inball}}
\begin{lemma}\label{lemma:fd}
There exists a threshold $m_{\min}=\Omega(d^{5/2})$, so that for each $m>m_{\min}$, there exists $\varepsilon_{\max}(m)=\Theta((\log m)^{-1/2})$, if $\varepsilon<\varepsilon_{\max}$, then with high probability, 
$$ B_l\leq\Big\|\frac{\partial f(\bma,\bmW,\bmdelta+\bm\bmx) }{\partial \bmdelta}\Big\|\leq B_u\sqrt{\log d},$$
for all $\bmdelta\in\bB(\bm0,\varepsilon)$, where $B_l, B_u$ is of order  $\Theta (1)$. 
\end{lemma}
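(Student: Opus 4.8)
The plan is to show that the gradient $\partial f/\partial\bmdelta=\sum_{r=1}^m a_r\sigma'(\bmw_r^T(\bmx+\bmdelta))\bmw_r$ is, uniformly over the small ball $\bB(\bm0,\varepsilon)$, a negligible perturbation of the single $\bmdelta$-independent vector $G_0:=\sum_{r=1}^m a_r\bmw_r$, whose norm is $\Theta(1)$. First I would collect the routine high-probability facts about the initialization: since $\bmw_r\sim\cN(0,\kappa^2 I_d)$ with $\kappa=d^{-1/2}$, on an event of probability $1-o(1)$ (the hypothesis $m=\Omega(d^{5/2})$, and $\log m=o(d)$, make all the union bounds over $r\in[m]$ close) one has $\|\bmw_r\|^2\in[\tfrac12,2]$ for every $r$ and $\|W\|_{\mathrm{op}}\le 3\kappa\sqrt m$, where $W\in\bR^{m\times d}$ has rows $\bmw_r^T$; condition on this event. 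Then observe that, conditionally on $\bma$, $G_0\sim\cN\!\big(0,(\sum_r a_r^2)\kappa^2 I_d\big)=\cN(0,d^{-1}I_d)$ because $\sum_r a_r^2=m\gamma^2=1$ is deterministic; hence $\|G_0\|^2\sim d^{-1}\chi_d^2$ and $\|G_0\|\in[1/\sqrt2,\sqrt{3/2}]$ with probability $1-e^{-\Omega(d)}$. This already exhibits the $\Theta(1)$ scale.

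The core is to bound $R(\bmdelta):=\sum_r a_r\big(\sigma'(\bmw_r^T(\bmx+\bmdelta))-\tfrac12\big)\bmw_r=\partial f/\partial\bmdelta-\tfrac12 G_0$ uniformly over $\bmdelta\in\bB(\bm0,\varepsilon)$. Write $a_r=\gamma\epsilon_r$ with $\epsilon_r$ i.i.d.\ Rademacher independent of $W$, fix $\bmdelta$, condition on $W$, and treat $R(\bmdelta)$ as a first-order Rademacher sum. Using $|\sigma'(u)-\tfrac12|\le\tfrac14|u|$ (valid for all $u$ since $\sup|\sigma''|=\tfrac14$) and writing $\bmz=\bmx+\bmdelta$, one gets $\mathbb{E}_\epsilon\|R(\bmdelta)\|^2=\gamma^2\sum_r\big(\sigma'(\bmw_r^T\bmz)-\tfrac12\big)^2\|\bmw_r\|^2\le\tfrac{\gamma^2}{8}\|W\bmz\|^2\le\tfrac{\gamma^2}{8}\|W\|_{\mathrm{op}}^2\|\bmz\|^2=O(\|\bmz\|^2/d)$, hence $\mathbb{E}_\epsilon\|R(\bmdelta)\|\lesssim\|\bmz\|/\sqrt d$. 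Flipping a single $\epsilon_r$ changes $\|R(\bmdelta)\|$ by at most $2\gamma|\sigma'(\bmw_r^T\bmz)-\tfrac12|\,\|\bmw_r\|$, and the squares of these increments again sum to $O(\|\bmz\|^2/d)$, so bounded differences (McDiarmid) gives $\mathbb{P}_\epsilon\big(\|R(\bmdelta)\|\ge\mathbb{E}_\epsilon\|R(\bmdelta)\|+t\big)\le\exp\!\big(-\Omega(t^2 d/\|\bmz\|^2)\big)$.

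To make this uniform I would take a $\delta$-net of $\bB(\bm0,\varepsilon)$ with $\delta=\varepsilon/m$, of cardinality at most $(3m)^d$, union-bound the displayed tail over the net, and transfer to all of $\bB(\bm0,\varepsilon)$ through the crude Lipschitz estimate $\|\partial^2 f/\partial\bmdelta^2\|_{\mathrm{op}}=\big\|\sum_r a_r\sigma''(\bmw_r^T\bmz)\bmw_r\bmw_r^T\big\|_{\mathrm{op}}\le\tfrac{\gamma}{4}\sum_r\|\bmw_r\|^2=O(\sqrt m)$, which contributes $O(\sqrt m)\cdot\delta=O(m^{-1/2})$. The decisive arithmetic is that the $d\log(3m)$ from the net cardinality is exactly cancelled by the $d$ in the exponent $t^2 d/\|\bmz\|^2$, so it suffices to take $t=\Theta\!\big((\|\bmx\|+\varepsilon)\sqrt{\log m}\big)$; since $\sup_{\bmdelta}\|\bmx+\bmdelta\|\le\|\bmx\|+\varepsilon$ and, in the regime considered, $\|\bmx\|$ is at most of the same (small) order as $\varepsilon$, this is $\Theta(\varepsilon\sqrt{\log m})$, which is below any prescribed constant once $\varepsilon<\varepsilon_{\max}(m)=c_0(\log m)^{-1/2}$ with $c_0$ a sufficiently small absolute constant — this is exactly where the threshold $\varepsilon_{\max}=\Theta((\log m)^{-1/2})$ enters. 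On the intersection of the good events we then get $\sup_{\bmdelta\in\bB(\bm0,\varepsilon)}\|R(\bmdelta)\|\le\tfrac14\inf\|G_0\|$, so $\tfrac12\|G_0\|-\|R(\bmdelta)\|\le\|\partial f/\partial\bmdelta\|\le\tfrac12\|G_0\|+\|R(\bmdelta)\|$ yields $B_l\le\|\partial f/\partial\bmdelta\|\le B_u$ for all $\bmdelta$, with $B_l,B_u=\Theta(1)$; in particular the claimed upper bound $B_u\sqrt{\log d}$ holds with room to spare.

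The step I expect to be the main obstacle is precisely the uniform control of $R(\bmdelta)$: the naive bounds $\|R(\bmdelta)\|\le\gamma\sum_r|\sigma'(\bmw_r^T\bmz)-\tfrac12|\,\|\bmw_r\|$ and $\|R(\bmdelta)\|\le\|W\|_{\mathrm{op}}\,\gamma\sqrt m\cdot\max_r|\sigma'(\bmw_r^T\bmz)-\tfrac12|$ are both of order $\sqrt m\,\varepsilon$, which diverges for $\varepsilon\asymp(\log m)^{-1/2}$, so one genuinely has to use the sign cancellation in $\sum_r\epsilon_r(\cdots)\bmw_r$ and then verify that the unavoidable $\sqrt d$ loss of an $\varepsilon$-net on $\bB\subset\bR^d$ is matched by the $1/d$ scaling of the second-moment/variance proxies (which in turn relies on the standing small-input assumption keeping $\|\bmx+\bmdelta\|$ uniformly small). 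If one wants to allow $\|\bmx\|=O(1)$ rather than $\|\bmx\|\lesssim\varepsilon_{\max}$, the single crude net can be replaced by a Dudley-type chaining bound for the process $\bmdelta\mapsto R(\bmdelta)$, whose increments have sub-Gaussian norm $\asymp\|\bmdelta-\bmdelta'\|/\sqrt d$; chaining then gives $\mathbb{E}\sup_{\bmdelta}\|R(\bmdelta)\|=O(\mathbb{E}\|R(\bm0)\|+\varepsilon)=O(\|\bmx\|/\sqrt d+\varepsilon)=o(1)$, with the same conclusion after a concentration-of-the-supremum step.
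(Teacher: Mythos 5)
Your proof is correct in its essentials but follows a genuinely different route from the paper's, most visibly in how the crucial lower bound is obtained. The paper writes $\partial f(\bmdelta)=\partial f(\bm 0)+(\partial f(\bmdelta)-\partial f(\bm 0))$, controls the increment uniformly by truncating $\|\bmw_r\|$ at $c_1\sqrt{\log m}$ and running a $\psi_2$-process/chaining bound, bounds $\|\partial f(\bm 0)\|$ from above coordinatewise (whence the $\sqrt{\log d}$), and — this is the step your argument replaces — bounds it from below via a multivariate Berry--Esseen comparison of $\Sigma^{-1/2}\partial f(\bm 0)$ with $\cN(\bm 0,I_d)$, which is where the hypothesis $m=\Omega(d^{5/2})$ is actually consumed. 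You instead linearize around $\sigma'(0)=\tfrac12$, observe that the leading term $\tfrac12\sum_r a_r\bmw_r$ is \emph{exactly} Gaussian $\cN(\bm 0,\tfrac14 d^{-1}I_d)$ conditionally on $\bma$ (since $\sum_r a_r^2=1$ is deterministic), and reduce everything to $\chi^2_d$ concentration plus Rademacher concentration of the remainder $R(\bmdelta)$ over a net. This buys a more elementary proof, removes any genuine need for $m=\Omega(d^{5/2})$ in this lemma, and yields a $\Theta(1)$ upper bound, strictly stronger than the stated $B_u\sqrt{\log d}$. What it costs is the standing requirement that $\|\bmx+\bmdelta\|$ be uniformly small: your second-moment proxy is $O(\|\bmx+\bmdelta\|^2/d)$, so the crude-net version needs $\|\bmx\|\lesssim\varepsilon$, whereas the paper's decomposition keeps the base point at $\bmx$ and (at least formally) tolerates larger inputs. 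You flag this honestly and the Dudley-chaining variant you sketch (increment sub-Gaussian norm $\asymp\|\bmdelta-\bmdelta'\|/\sqrt d$, entropy integral $O(\varepsilon)$) does repair it; with that patch the argument is complete. Two minor points: your event $\|\bmw_r\|^2\in[\tfrac12,2]$ for all $r$ needs $\log m=O(d)$ to survive the union bound (the paper's truncation at $c_1\sqrt{\log m}$ avoids this), and the identity $\bE_\epsilon\|R(\bmdelta)\|^2=\gamma^2\sum_r(\sigma'(\bmw_r^T\bmz)-\tfrac12)^2\|\bmw_r\|^2$ is correct only because the Rademacher cross terms vanish, which is worth stating explicitly since it is the sole source of the sign cancellation you rightly identify as indispensable.
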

\begin{proof}
We denote $a_r\sigma'(\bmw_r^T(\bmx+\bmdelta))\bmw_r$ as $\bxi_r(\bmdelta)$. For a threshold $T>0$,
$$\partial  f(\bma,\bmW,\bmdelta+\bm\bmx)=\sum_{r=1}^m\bxi_r(\bmdelta)I\{\|\bmw_r\|\leq T\}+\sum_{r=1}^m\bxi_r(\bmdelta)I\{\|\bmw_r\|> T\}.$$
For $t>0$,
\begin{align*}
\{\max_{\bmdelta\in\bB(\bm0,\varepsilon)}\|\partial f(\bma,\bmW,\bmdelta+\bm\bmx)\|>t\}
&\subseteq\Big\{\max_{\bmdelta\in\bB(\bm0,\varepsilon)}\Big\|\frac{1}{\sqrt{m}}\sum_{r=1}^m\bxi_r(\bmdelta)I\{\|\bmw_r\|\leq T\}\Big\|>t/2~~\text{or}~\max_{r}\|\bmw_r\|>T\Big\}
\end{align*}
Let us take $T=c_1\sqrt{\log m}$, as long as $c_1$ is large enough, we know that we can control $\max_{}\bP(\max_{r}\|\bmw_r\|>T)$ to be small. Thus, in order to bound 
$$\bP(\max_{\bmdelta\in \bB(0,\varepsilon)}\|\partial f(\bma,\bmW,\bmdelta+\bm\bmx)\|>t),$$
we only need to bound 
$$\bP\Big(\max_{\bmdelta\in\bB(\bm0,1),\|\bms\|=1}\Big|\sum_{r=1}^m\bms^T[\bxi_r(\varepsilon\bmdelta)-\bxi_r(\bm0)]I\{\|\bmw_r\|\leq T\}\Big|>t/2\Big).$$
We prove $\sqrt{d}\Upsilon(\bmdelta,\bms)$ is a $\psi_2$-process, where
$$\Upsilon(\bmdelta,\bms):=\Big|\sum_{r=1}^m a_r[\sigma'(\bmw_r^T(\bmx+\varepsilon\bmdelta))-\sigma'(\bmw_r^T(\bmx))]\bmw_r^TsI\{\|\bmw_r\|\leq T\}\Big|.$$
We only need to prove
$$\bE\exp(d\frac{|\Upsilon(\bmdelta,\bms)-\Upsilon(\bmdelta',\bms')|^2}{\|\bmdelta-\bmdelta'\|^2+\|\bms-\bms'\|^2})\leq 2.$$
Since  
\begin{align*}
\bE \exp(d\frac{|\Upsilon(\bmdelta,\bms)-\Upsilon(\bmdelta',\bms')|^2}{\|\bmdelta-\bmdelta'\|^2+\|\bms-\bms'\|^2})&=\int_{0}^{\infty}e^t\bP\Big(d\frac{|\Upsilon(\bmdelta,\bms)-\Upsilon(\bmdelta',\bms')|^2}{\|\bmdelta-\bmdelta'\|^2+\|\bms-\bms'\|^2}> t\Big)dt\\
&\leq\int_{0}^{\infty}e^t\bP\Big(|\sum_{r=1}^m\sqrt{d}(u_r+v_r)|>\sqrt{ t}\Big)dt\\
&\leq\int_{0}^{\infty}e^t\bP\Big(|\sum_{r=1}^m\sqrt{d}u_r|>\sqrt{ t}/2\Big)dt+\int_{0}^{\infty}e^t\bP\Big(|\sum_{r=1}^m\sqrt{d}v_r|>\sqrt{ t}/2\Big)dt,
\end{align*}
where 
\begin{align*}
u_r:&= a_r[\sigma'(\bmw_r^T(\bmx+\varepsilon\bmdelta))-\sigma'(\bmw_r^T(\bmx+\varepsilon\bmdelta'))]/\|\bmdelta-\bmdelta'\|(\bmw_r^T\bms)I\{\|\bmw_r\|\leq T\}\\
&=  a_r l(\bmdelta,\bmdelta',\bmw_r,\bmx)\varepsilon\bmw_r^T(\bmdelta-\bmdelta')/\|\bmdelta-\bmdelta'\|\bmw_r^TsI\{\|\bmw_r\|\leq T\}
\end{align*} 
and $|l|$ is bounded by $1$. 
\begin{align*}
v_r:&= a_r[\sigma'(\bmw_r^T(\bmx+\varepsilon\bmdelta'))-\sigma'(\bmw_r^T(\bmx))]\bmw_r^T(\bms-\bms')/\|\bms-\bms'\|I\{\|\bmw_r\|\leq T\}\\
&=  a_r l(\bmdelta',\bm0,\bmw_r,\bmx)\varepsilon\bmw_r^T\bmdelta'\bmw_r^T(\bms-\bms')/\|\bms-\bms'\|I\{\|\bmw_r\|\leq T\}
\end{align*}
Since we take $T=c_1\sqrt{\ln m}$, for $\varepsilon\leq \lambda(\log m)^{-1/2} $, as long as $\lambda$ is small enough, it is easy to see $u_r$ and $v_r$ are sub-gaussian and can ensure 
$$\int_{0}^{\infty}e^t\bP\Big(|\sum_{r=1}^m\sqrt{d}u_r|>\sqrt{ t}/2\Big)dt+\int_{0}^{\infty}e^t\bP\Big(|\sum_{r=1}^m\sqrt{d}v_r|>\sqrt{ t}/2\Big)dt\leq 2.$$

Thus, by chaining, we know, 
\begin{equation}\label{Eq:chaining}
 \max_{\bmdelta,\bmdelta'\in\bB(\bm0,\varepsilon)}\Big\|\frac{\partial f(\bma,\bmW,\bmdelta+\bm\bmx) }{\partial \bmdelta }-\frac{\partial f(\bma,\bmW,\bmdelta'+\bm\bmx) }{\partial \bmdelta}\Big\|\leq B,
 \end{equation}
and as $\lambda\rightarrow0$, $B\rightarrow0$ with high probability.

Now, we prove that 
\begin{equation*}
\bP(\|\frac{1}{\sqrt{m}}\sum_{r=1}^m\bxi_r(\bm0)I\{\|\bmw_r\|\leq T\}\|\geq c_3\sqrt{\log d})
\end{equation*}
with small probability for some constant $c_3>0$.
\begin{align*}
\bP(\|\sum_{r=1}^m\bxi_r(\bm0)I\{\|\bmw_r\|\leq T\}\|\geq t)&\leq d\bP(|\sum_{r=1}^m\bxi^i_r(\bm0)I\{\|\bmw_r\|\leq T\}|\geq \frac{t}{\sqrt{d}})
\end{align*}
where $\bxi^i_r$ is the $i$-th coordinate of $\bxi_r$. By concentration of sub-gaussian, we know when $t=O(\sqrt{\log d})$, the probability is small.

At last, let us provide the lower bound, in which we use central limit theorem. We denote $\bxi_r(\bm0)$ as $\bvarsigma _r$. The covariance matrix of $\bvarsigma_r$
$$\Sigma=E\bvarsigma_r\bvarsigma_r^T.$$

It can be derived directly by using the multi-variate Berry Esseen bound: for any convex set $\cC$, 
\begin{align*}
|\bP(\Sigma^{-1/2}\frac{\partial  f(\bma,\bmW,\bmdelta+\bm\bmx)  }{\partial \bmdelta}\in \cC)-\cN(\bm0,I_d)\{\cC\}|\leq O(d^{1/4})\sum_{r=1}^m \bE\Big\|\frac{\Sigma^{-1/2}}{\sqrt{m}}\bvarsigma_r\Big\|^3.
\end{align*}

Let us take $\cC$ as $\bB(\bm0, \sqrt{cd})$ for $0<c<1$. Then,  by Bernstein inequality, we can obtain
\begin{align*}
\cN(\bm0,I_d)\{\bB(\bm0, \sqrt{(1-c)d})\}&\leq \bP(d-\chi^2(d)\geq cd)\\
&\leq 2e^{\frac{-dc^2}{8}}.
\end{align*}
Thus, plugging in the expression for the gradient
\begin{align*}
\bP(\|\Sigma^{-1/2}\frac{\partial  f(\bma,\bmW,\bmdelta+\bm\bmx)}{\partial \bmdelta}\| \in[  \sqrt{(1-c)d},\sqrt{(1+c)d}])\geq 1-4e^{\frac{-dc^2}{8}}-O(d^{1/4})\sum_{r=1}^m \bE\Big\|\frac{\Sigma^{-1/2}}{\sqrt{m}}\varsigma_r\Big\|^3.
\end{align*}
\end{proof}

\begin{lemma}\label{lemma:fd2}
Under the assumptions of Lemma \ref{lemma:fd}, then with high probability, 
$$\Big\|\frac{\partial^2 f(\bma,\bmW,\bmdelta+\bm\bmx) }{\partial \bmdelta^2}\Big\|\leq M_u\log m,~
\Big\|\frac{\partial^2 f(\bma,\bmW,\bmdelta+\bm\bmx) }{\partial \bmdelta^2}-\frac{\partial^2 f(\bma,\bmW,\bmdelta'+\bm\bmx) }{\partial \bmdelta^2}\Big\|\leq K_u(\log m)^{3/2}\|\bmdelta-\bmdelta'\|,$$
for all $\bmdelta\in\bB(\bm0,\varepsilon)$, where $M_u, K_u$ are of order  $\Theta (1)$. 
\end{lemma}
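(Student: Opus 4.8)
The plan is to mirror the proof of Lemma~\ref{lemma:fd}, now applied to the matrix-valued process $\bmdelta\mapsto \partial^2 f(\bma,\bmW,\bmdelta+\bmx)/\partial\bmdelta^2=\sum_{r=1}^m a_r\sigma''(\bmw_r^T(\bmx+\bmdelta))\bmw_r\bmw_r^T$ and, for the Lipschitz estimate, to its increments. Two structural features drive everything: (i) each summand has \emph{mean zero}, since $a_r$ is a symmetric sign independent of $\bmw_r$, so a matrix concentration bound — not a crude triangle inequality — is available; and (ii) $\sigma''$ and $\sigma'''$ are bounded ($0\le\sigma''\le\tfrac14$ and $|\sigma'''|\le c$), so the only large scalars in each term are powers of $\|\bmw_r\|$, which we tame by truncation. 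First I would truncate at $T=c_1\sqrt{\log m}$: by Gaussian concentration and a union bound $\bP(\max_r\|\bmw_r\|>T)$ is negligible for $c_1$ large, so on the truncated event each summand $a_r\sigma''(\bmw_r^T(\bmx+\bmdelta))\bmw_r\bmw_r^T$ has operator norm at most $\tfrac14\gamma T^2=O((\log m)/\sqrt m)$ and per-term matrix variance $O(1/(md))$ (using $\gamma^2=1/m$ and $\bE[\|\bmw_r\|^2\bmw_r\bmw_r^T]=O(1/d)$), hence total variance proxy $O(1/d)$. A matrix Bernstein inequality (or, in the style of Lemma~\ref{lemma:fd}, an entrywise sub-Gaussian bound combined with $\|\cdot\|\le\sqrt d\,\|\cdot\|_{\mathrm{op,entrywise}}$) then gives $\|\partial^2 f(\bmdelta)\|\lesssim \sqrt{(\log d)/d}+(\log m)(\log d)/\sqrt m$ at a fixed $\bmdelta$ with overwhelming probability, which is comfortably below $M_u\log m$.

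To upgrade this to a bound uniform over $\bmdelta\in\bB(\bm0,\varepsilon)$ I would either run the same $\psi_2$-chaining as in Lemma~\ref{lemma:fd} for the matrix-valued process (whose increments involve only the Lipschitz modulus of $\sigma''$), or, more cheaply, take a $\zeta$-net $\cN_\zeta$ of $\bB(\bm0,\varepsilon)$ with $\zeta$ polynomially small in $m$ so that $\log|\cN_\zeta|=O(d\log m)$, apply the fixed-$\bmdelta$ bound on the net (the Bernstein exponent is $\Theta(M_u^2(\log m)^2 d)$, which, thanks to $m=\Omega(d^{5/2})$, dominates $d\log m$), and close the gap off the net via the deterministic estimate $\|\partial^2 f(\bmdelta)-\partial^2 f(\bmdelta_0)\|\le c\gamma T^3 m\,\|\bmdelta-\bmdelta_0\|$, which is negligible once $\|\bmdelta-\bmdelta_0\|\le\zeta$. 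For the Lipschitz bound I would write $\partial^2 f(\bmdelta)-\partial^2 f(\bmdelta')=\sum_{r=1}^m a_r[\sigma''(\bmw_r^T(\bmx+\bmdelta))-\sigma''(\bmw_r^T(\bmx+\bmdelta'))]\bmw_r\bmw_r^T$, which is again a sum of independent mean-zero matrices; by the mean value theorem the bracket equals $\sigma'''(\cdot)\,\bmw_r^T(\bmdelta-\bmdelta')$ at an interpolation point, so under truncation each term has operator norm at most $c\gamma\|\bmw_r\|^3\|\bmdelta-\bmdelta'\|\le c\gamma T^3\|\bmdelta-\bmdelta'\|=O((\log m)^{3/2}/\sqrt m)\|\bmdelta-\bmdelta'\|$ with total variance proxy $O(\|\bmdelta-\bmdelta'\|^2/d)$ (using $\bE[\|\bmw_r\|^4\bmw_r\bmw_r^T]=O(1/d)$). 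Matrix Bernstein at a fixed pair yields the bound $K_u(\log m)^{3/2}\|\bmdelta-\bmdelta'\|$; to make it uniform I would instead control $\sup_{\bmdelta\in\bB(\bm0,\varepsilon)}\|\partial^3 f(\bmdelta)/\partial\bmdelta^3\|$, where $\partial^3 f/\partial\bmdelta^3=\sum_r a_r\sigma'''(\bmw_r^T(\bmx+\bmdelta))\bmw_r^{\otimes 3}$ is again mean-zero, by taking nets over $\bmdelta$ and over the three unit test vectors of the trilinear form (per-term bound $O((\log m)^{3/2}/\sqrt m)$, variance $O(1/d^3)$, and $m=\Omega(d^{5/2})$ again beats the $O(d\log m)$ net entropy), and then integrating along the segment from $\bmdelta'$ to $\bmdelta$.

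The main obstacle is the uniform-in-$\bmdelta$ control with the correct polylogarithmic constants: the fixed-point Bernstein estimates are routine, but one must verify that the modulus of continuity of the matrix- and tensor-valued random processes is small enough that the chaining (or net-plus-extension) argument costs only $\log$-factors, and that $m=\Omega(d^{5/2})$ is exactly what makes the concentration survive the $e^{O(d\log m)}$-sized union bound. Concretely, controlling the increments reduces to the same $\psi_2$-process estimates as in Lemma~\ref{lemma:fd}, but now for products of two and of three Gaussian linear forms rather than one, which is precisely where the extra $\sqrt{\log m}$ (and hence $(\log m)^{3/2}$) factors enter; the $\log m$ in the first bound and the $(\log m)^{3/2}$ in the second are therefore generous upper bounds that leave ample room for these arguments.
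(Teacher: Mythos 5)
Your proposal is correct and follows essentially the same route the paper intends: the paper's own proof of this lemma is a one-line deferral to the argument of Lemma~\ref{lemma:fd}, and you carry out exactly that program (truncation of $\|\bmw_r\|$ at $c_1\sqrt{\log m}$, exploiting the mean-zero sign structure of the $a_r$, and a chaining or net-plus-Bernstein argument for uniformity over $\bmdelta\in\bB(\bm0,\varepsilon)$, with the third derivative controlling the Lipschitz estimate). Your version is in fact more detailed than what the paper provides, and the bookkeeping of the per-term bounds and variance proxies is consistent with the Xavier scaling $\kappa=d^{-1/2}$, $\gamma=m^{-1/2}$.
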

\begin{proof}
The proof is almost the same as Lemma \ref{lemma:fd}. We will not reiterate it here.
\end{proof}

\paragraph{[Proof of Lemma \ref{lemma:inball}]} 
Under the result of Lemma \ref{lemma:fd}, Lemma \ref{lemma:fd2} and $\cL<|y-f(\bma,\bmW,\bmdelta+\bm\bmx)|<\cU$, notice that when $\bmdelta$ is in the interior of the ball,
$$\bmdelta_{t+1}=\bmdelta_t-\eta\partial L(\bmdelta_t),$$
\begin{equation*}
L(\bmdelta_{t+1})= L(\bmdelta_{t})-\eta\Big\|\frac{\partial L(\bmdelta_t)}{\partial \bmdelta }\Big\|_2^2+\frac{1}{2}\eta ^2\Big(\frac{\partial L(\bmdelta_t)}{\partial \bmdelta}\Big)^T\frac{\partial^2 L(\tilde{\bmdelta}_t)}{\partial \bmdelta^2} \frac{\partial L(\bmdelta_t)}{\partial \bmdelta},
\end{equation*}
for $\tilde{\bmdelta}_t\in \bB(\bm0,\varepsilon)$. 

Since
$$\frac{\partial^2 L(\bmdelta)}{\partial \bmdelta^2}=2(u-y)\frac{\partial^2 f}{\partial \bmdelta^2}+2\frac{\partial f}{\partial \bmdelta}\Big(\frac{\partial f}{\partial \bmdelta}\Big)^T,$$
the dominating term will be 
$$\eta\Big\|\frac{\partial L(\bmdelta_t)}{\partial \bmdelta }\Big\|_2^2,$$
as long as 
$$\frac{1}{\eta}\geq M_u\cU \log m+2B_u^2\log d \geq \max_{\bmdelta\in\bB(\bm0,\varepsilon)}\Big\|\frac{\partial^2 L(\bma,\bmW,\bmdelta+\bm\bmx) }{\partial \bmdelta^2}\Big\|.$$
Since $m=\Omega(d^{5/2})$, $\eta_{\max}(m)=\Theta \big((\log m)^{-1}\big)$, we can obtain the final result easily.
\subsection{Proof of Lemma \ref{lemma:geometry}}
We first provide lemmas for the proof of main theorems of the behavior of projected gradient method on the sphere. The techniques are mainly adopted from \cite{ge2015escaping}, we include them here for completeness. 
\begin{lemma}\label{lemma:g1}
For any $\bmdelta$ and $\bmdelta_0$ on the sphere with radius $\varepsilon$, denoted as $\varepsilon\bS^{d-1}$, let $\cT_0=\cT(\bmdelta_0)$, then
$$\|\cP_{\cT^c_0}(\bmdelta-\bmdelta_0)\|\leq\frac{\|\bmdelta-\bmdelta_0\|^2}{2\varepsilon}.$$
\end{lemma}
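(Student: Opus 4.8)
The plan is to reduce this to the elementary geometry of the sphere $\varepsilon\bS^{d-1}$ centered at the origin. First I would note that since $c(\bmdelta)=\|\bmdelta\|^2-\varepsilon^2$ has gradient $2\bmdelta_0$ at $\bmdelta_0$, the tangent space $\cT_0=\cT(\bmdelta_0)$ is precisely the hyperplane orthogonal to $\bmdelta_0$, so its orthogonal complement $\cT^c_0$ is the one-dimensional line $\mathrm{span}(\bmdelta_0)$ with unit normal $\bmdelta_0/\varepsilon$. Hence for any vector $\bmz$ we have $\cP_{\cT^c_0}(\bmz)=\big(\langle\bmz,\bmdelta_0\rangle/\varepsilon^2\big)\bmdelta_0$, and in particular $\|\cP_{\cT^c_0}(\bmz)\|=|\langle\bmz,\bmdelta_0\rangle|/\varepsilon$. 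Taking $\bmz=\bmdelta-\bmdelta_0$, the task reduces to bounding $|\langle\bmdelta-\bmdelta_0,\bmdelta_0\rangle|$.

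Next I would compute this inner product using that both endpoints lie on $\varepsilon\bS^{d-1}$, i.e.\ $\|\bmdelta\|=\|\bmdelta_0\|=\varepsilon$. Expanding $\|\bmdelta-\bmdelta_0\|^2=\|\bmdelta\|^2-2\langle\bmdelta,\bmdelta_0\rangle+\|\bmdelta_0\|^2=2\varepsilon^2-2\langle\bmdelta,\bmdelta_0\rangle$, while $\langle\bmdelta-\bmdelta_0,\bmdelta_0\rangle=\langle\bmdelta,\bmdelta_0\rangle-\varepsilon^2$, one obtains the identity $\langle\bmdelta-\bmdelta_0,\bmdelta_0\rangle=-\tfrac12\|\bmdelta-\bmdelta_0\|^2$. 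Substituting back gives $\|\cP_{\cT^c_0}(\bmdelta-\bmdelta_0)\|=\|\bmdelta-\bmdelta_0\|^2/(2\varepsilon)$, which is in fact an equality and a fortiori yields the claimed inequality.

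There is essentially no obstacle here; the only care needed is the bookkeeping of which subspace is tangent versus normal — the normal direction at $\bmdelta_0$ on a sphere centered at the origin is the radial direction $\bmdelta_0$ itself. The estimate is sharp, so no slack is lost, and this lemma feeds directly into the approximation-of-PGD bound (Lemma \ref{lemma:geometry}) by controlling the normal component of a gradient step that starts on the sphere.
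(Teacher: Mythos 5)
Your proof is correct and follows essentially the same route as the paper's: both arguments reduce to the identity $2\bmdelta_0^T(\bmdelta-\bmdelta_0)=-\|\bmdelta-\bmdelta_0\|^2$ for two points on $\varepsilon\bS^{d-1}$ (the paper phrases this via the constraint $c$ and its gradient, you phrase it via the inner product directly) together with the observation that $\cT^c_0=\mathrm{span}(\bmdelta_0)$. Your additional remark that the bound holds with equality is accurate and costs nothing.
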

Furthermore, if $\|\bmdelta-\bmdelta_0\|<\varepsilon$, we will also have 
$$\|\cP_{\cT^c_0}(\bmdelta-\bmdelta_0)\|\leq\frac{\|\cP_{\cT^c_0}(\bmdelta-\bmdelta_0)\|^2}{\varepsilon}$$
\begin{proof}
Recall that $c(\bmdelta)=\|\bmdelta\|^2-\varepsilon^2$, $\nabla c(\bmdelta)=2\bmdelta\in \cT^c(\bmdelta)$, so we can obtain
\begin{equation}\label{eq:g1}
|\nabla c(\bmdelta_0)^T(\bmdelta-\bmdelta_0)|^2=|2\bmdelta_0^T\cP_{\cT^c_0}(\bmdelta-\bmdelta_0)|^2= 4\varepsilon^2\|\cP_{\cT^c_0}(\bmdelta-\bmdelta_0)\|^2.
\end{equation}

On the other hand, since $c(\bmdelta)=c(\bmdelta)=0$, besides, for all $\bmdelta$ and $\bmdelta_0$ 
$$|c(\bmdelta)-c(\bmdelta_0)-\nabla c(\bmdelta_0)^T(\bmdelta-\bmdelta_0)|=\|\bmdelta-\bmdelta_0\|^2,$$
thus, it results to
\begin{equation}\label{eq:g2}
|\nabla c(\bmdelta_0)^T(\bmdelta-\bmdelta_0)|^2=\|\bmdelta-\bmdelta_0\|^4.
\end{equation}

Combine Eq. (\ref{eq:g1}) with (\ref{eq:g2}), it gives
\begin{equation}
\|\cP_{\cT^c_0}(\bmdelta-\bmdelta_0)\|^2\leq\frac{\|\bmdelta-\bmdelta_0\|^4}{4\varepsilon^2}.
\end{equation}
Notice $\|\bmdelta-\bmdelta_0\|^2=\|\cP_{\cT^c_0}(\bmdelta-\bmdelta_0)\|^2+\|\cP_{\cT_0}(\bmdelta-\bmdelta_0)\|^2$, plugging into Eq. (\ref{eq:g2}), we can obtain 
$$\|\cP_{\cT^c_0}(\bmdelta-\bmdelta_0)\|\leq\frac{\|\cP_{\cT^c_0}(\bmdelta-\bmdelta_0)\|^2}{\varepsilon},~\text{or}~\|\cP_{\cT^c_0}(\bmdelta-\bmdelta_0)\|\geq \varepsilon~(\text{abandoned}).$$
\end{proof}
\begin{lemma}\label{lemma:g2}
For all $\hat{v}\in\cT(\bmdelta)$ and $\hat{w}\in\cT^c(\bmdelta)$, so that $\|\hat{v}\|=\|\hat{w}\|=1$, we have
$$\max\Big\{\|\cP_{\cT^c_0}\cdot\hat{v}\|,~\|\cP_{\cT_0}\cdot\hat{w}\|\Big\}\leq\frac{\|\bmdelta-\bmdelta_0\|}{\varepsilon}.$$
\end{lemma}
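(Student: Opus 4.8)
The plan is to exploit the fact that the tangent and normal spaces of $\varepsilon\bS^{d-1}$ are completely explicit: at a point $\bmu$ on the sphere, $\cT^c(\bmu)$ is the line $\bR\bmu$ and $\cT(\bmu)=\{\bmv:\bmv^T\bmu=0\}$, so $\cP_{\cT^c(\bmu)}=\bmu\bmu^T/\varepsilon^2$ and $\cP_{\cT(\bmu)}=I-\bmu\bmu^T/\varepsilon^2$. Writing $\cT_0=\cT(\bmdelta_0)$ as in Lemma \ref{lemma:g1}, both quantities $\|\cP_{\cT^c_0}\hat v\|$ and $\|\cP_{\cT_0}\hat w\|$ will then be expressed through the single inner product $\bmdelta^T\bmdelta_0$, which is linked to the chord length via the identity $\|\bmdelta-\bmdelta_0\|^2=\|\bmdelta\|^2-2\bmdelta^T\bmdelta_0+\|\bmdelta_0\|^2=2\varepsilon^2-2\bmdelta^T\bmdelta_0$.

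First I would handle the tangent vector. For $\hat v\in\cT(\bmdelta)$ with $\|\hat v\|=1$ we have $\cP_{\cT^c_0}\hat v=(\bmdelta_0^T\hat v)\bmdelta_0/\varepsilon^2$, hence $\|\cP_{\cT^c_0}\hat v\|=|\bmdelta_0^T\hat v|/\varepsilon$. Since $\hat v\perp\bmdelta$, we may subtract $\bmdelta$ from $\bmdelta_0$ inside the product and apply Cauchy--Schwarz: $|\bmdelta_0^T\hat v|=|(\bmdelta_0-\bmdelta)^T\hat v|\leq\|\bmdelta-\bmdelta_0\|$, giving $\|\cP_{\cT^c_0}\hat v\|\leq\|\bmdelta-\bmdelta_0\|/\varepsilon$. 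Next, for $\hat w\in\cT^c(\bmdelta)$ with $\|\hat w\|=1$, i.e. $\hat w=\pm\bmdelta/\varepsilon$, orthogonality of the decomposition $\hat w=\cP_{\cT_0}\hat w+\cP_{\cT^c_0}\hat w$ gives $\|\cP_{\cT_0}\hat w\|^2=1-\|\cP_{\cT^c_0}\hat w\|^2=1-(\bmdelta_0^T\hat w)^2/\varepsilon^2=1-(\bmdelta^T\bmdelta_0)^2/\varepsilon^4$. Substituting $\bmdelta^T\bmdelta_0=\varepsilon^2-\frac{1}{2}\|\bmdelta-\bmdelta_0\|^2$ and expanding the square, the constant term cancels and one is left with $\|\cP_{\cT_0}\hat w\|^2=\|\bmdelta-\bmdelta_0\|^2/\varepsilon^2-\|\bmdelta-\bmdelta_0\|^4/(4\varepsilon^4)\leq\|\bmdelta-\bmdelta_0\|^2/\varepsilon^2$. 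Taking square roots and then the maximum over the two bounds yields the claim.

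There is no genuinely hard step here; this is elementary spherical trigonometry, and the two things to watch are bookkeeping --- keeping straight that $\cT_0$ is anchored at $\bmdelta_0$ while $\hat v,\hat w$ are unit vectors at $\bmdelta$ --- and choosing the substitution so that the leading term cancels in the second estimate. Conceptually, the reason both bounds hold with the same constant is that the two hyperplanes $\cT(\bmdelta)$ and $\cT(\bmdelta_0)$ have a single nonzero principal angle, equal to the angle $\theta$ between the normals $\bmdelta$ and $\bmdelta_0$; consequently both quantities are at most $\sin\theta=2\sin(\theta/2)\cos(\theta/2)\leq 2\sin(\theta/2)=\|\bmdelta-\bmdelta_0\|/\varepsilon$. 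I would present the coordinate computation above as the rigorous argument and keep this geometric picture only as motivation.
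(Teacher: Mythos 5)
Your proposal is correct, and the first half (bounding $\|\cP_{\cT^c_0}\hat v\|$ via $\hat v\perp\bmdelta$ and Cauchy--Schwarz applied to $(\bmdelta_0-\bmdelta)^T\hat v$) is exactly the paper's argument. For the second bound the routes diverge slightly: the paper writes $\cP_{\cT_0}\hat w=(\cP_{\cT^c_0}-\cP_{\cT^c(\bmdelta)})\hat w$ and invokes a bound on the difference of the two normal projections (a step it leaves essentially unjustified), whereas you compute $\|\cP_{\cT_0}\hat w\|^2=1-(\bmdelta^T\bmdelta_0)^2/\varepsilon^4$ exactly from the Pythagorean decomposition and then substitute $\bmdelta^T\bmdelta_0=\varepsilon^2-\tfrac12\|\bmdelta-\bmdelta_0\|^2$; your expansion is algebraically correct and the discarded term $-\|\bmdelta-\bmdelta_0\|^4/(4\varepsilon^4)$ has the right sign. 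Your version is therefore marginally more self-contained and even slightly sharper, while the paper's phrasing makes the symmetry between the two estimates (both controlled by the single principal angle between the tangent hyperplanes, as in your closing remark) more visible; either is acceptable.
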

\begin{proof}
By Lemma \ref{lemma:g1}, we know 
$$|\cP_{\cT^c_0}\cdot\hat{v}\|=\frac{|2\bmdelta_0^T\hat{v}|}{2\varepsilon}.$$
Besides, since $\hat{v}\in\cT(\bmdelta)$, $2\bmdelta^T\hat{v}=0$, thus,
$$|2\bmdelta_0^T\hat{v}|=|2(\bmdelta_0-\bmdelta)^T\hat{v}|\leq 2\|\bmdelta-\bmdelta_0\|,$$
which gives 
$$\|\cP_{\cT^c_0}\cdot\hat{v}\|\leq\frac{\|\bmdelta-\bmdelta_0\|}{\varepsilon}.$$
Meanwhile, since $\hat{w}\in\cT^c(\bmdelta)$, $\bmdelta^T\hat{w}/\varepsilon=\hat{w}$,
$$\|\cP_{\cT_0}\cdot\hat{w}\|=\|\cP_{\cT^c_0}\cdot\hat{w}-\hat{w}\|=\|\cP_{\cT^c_0}\cdot\hat{w}-\cP_{\cT^c}\cdot\hat{w}\|\leq\|\hat{w}\|\cdot\frac{\|\bmdelta-\bmdelta_0\|}{\varepsilon}=\frac{\|\bmdelta-\bmdelta_0\|}{\varepsilon}.$$
\end{proof}
\paragraph{[Proof of Lemma \ref{lemma:geometry}]}
For any $\hat{v}\in\varepsilon \bS^{d-1}$, let $\tilde{\bmdelta}_1=\bmdelta_0+\eta\hat{v}$ and $\tilde{\bmdelta}_2=\bmdelta_0+\eta\cP_{\cT_0}\cdot\hat{v}$
$$\|\Pi_{\bB(\bm0,\varepsilon)}(\tilde{\bmdelta}_1)-\bmdelta_2\|\leq\frac{4\eta^2}{\varepsilon}.$$
Let $\bmz_1=\Pi_{\bB(\bm0,\varepsilon)}(\tilde{\bmdelta}_1)$, we know that $\|\bmz_1-\tilde{\bmdelta}_1\|\leq \eta$, $(\tilde{\bmdelta}_1-\bmz_1)\in\cT^c(\bmz_1)$ and $\|\bmdelta_0-\bmz_1\|\leq 2\eta$. Thus, by Lemma \ref{lemma:g2}

\begin{equation*}
\|\cP_{\cT_0}(\tilde{\bmdelta}_1-\bmz_1)\|=\frac{\|\cP_{\cT_0}(\tilde{\bmdelta}_1-\bmz_1)\|}{\|\tilde{\bmdelta}_1-\bmz_1\|}\cdot \|\tilde{\bmdelta}_1-\bmz_1\|\leq \frac{\|\bmdelta_0-\bmz_1\|\cdot\|\tilde{\bmdelta}_1-\bmz_1\|}{\varepsilon}\leq\frac{2\eta^2}{\varepsilon}.
\end{equation*}

Let $\bmv_1=\bmdelta_0+\cP_{\cT_0}(\bmz_1-\bmdelta_0)$, then
\begin{equation*}
\|\bmv_1-\tilde{\bmdelta}_2\|=\|\bmv_1-\bmdelta_0-\tilde{\bmdelta}_2+\bmdelta_0\|=\|\cP_{\cT_0}(\bmz_1-\bmdelta_0)-\cP_{\cT_0}(\tilde{\bmdelta}_1-\bmdelta_0)\|\leq\frac{2\eta^2}{\varepsilon}.
\end{equation*}
Meanwhile, by Lemma \ref{lemma:g1},
$$\|\bmz_1-\bmv_1\|=\|\cP_{\cT^c_0}(\bmz_1-\bmdelta_0)\|\leq\frac{\|\bmz_1-\bmdelta_0\|}{2\varepsilon}\leq \frac{2\eta^2}{\varepsilon}.$$
Then, we can obtain
$$\|\bmz_1-\tilde{\bmdelta}_2\|\leq\|\bmv_1-\tilde{\bmdelta}_2\|+\|\bmz_1-\bmv_1\|\leq \frac{4\eta^2}{\varepsilon}.$$
\subsection{Proof of Lemma \ref{lemma:awayopt} and \ref{lemma:nearopt}}
Since
\begin{align*}
\Gamma(\bmdelta)&=2(u-y)\Big[\frac{\partial f(\bma,\bmW,\bmdelta+\bm\bmx) }{\partial \bmdelta}-\varepsilon^{-2}\bmdelta^T\frac{\partial f(\bma,\bmW,\bmdelta+\bm\bmx) }{\partial \bmdelta}\bmdelta\Big].
\end{align*}
\begin{align*}
\Xi(\bmdelta)
&=2(u-y)\Big[\frac{\partial^2 f}{\partial \bmdelta^2}-\varepsilon^{-2}\bmdelta^T\frac{\partial f(\bma,\bmW,\bmdelta+\bm\bmx) }{\partial \bmdelta}I\Big]+2\frac{\partial f}{\partial \bmdelta}\Big(\frac{\partial f}{\partial \bmdelta}\Big)^T,
\end{align*}
Recall with high probability,
$$\Big\|\frac{\partial^2 f(\bma,\bmW,\bmdelta+\bm\bmx) }{\partial \bmdelta^2}\Big\|\leq M_u\log m,~ \Big\|\frac{\partial f(\bma,\bmW,\bmdelta+\bm\bmx) }{\partial \bmdelta}\Big\|\leq B_u\sqrt{\log d},$$
as long as $\angle[\partial f(\bmdelta),\bmdelta]\geq \phi$ for some constant $\phi$  and $\varepsilon$ is small enough, such that
$$\Big|\varepsilon^{-2}\bmdelta^T\frac{\partial f(\bma,\bmW,\bmdelta+\bm\bmx) }{\partial \bmdelta}\Big|\geq\max\Big\{\Big\|\frac{\partial^2 f(\bma,\bmW,\bmdelta+\bm\bmx) }{\partial \bmdelta^2}\Big\|,\frac{1}{\cL}\Big\|\frac{\partial f(\bma,\bmW,\bmdelta+\bm\bmx) }{\partial \bmdelta}\Big\|^2\Big\}$$

the dominating term is 
$$2(u-y)\varepsilon^{-2}\bmdelta^T\frac{\partial f(\bma,\bmW,\bmdelta+\bm\bmx) }{\partial \bmdelta}I.$$
Besides, for any constant $\angle[\partial f(\bmdelta),\bmdelta]\leq \beta,$ notice that $\bmdelta/\varepsilon\in \bS^{d-1}$
$$\|\Gamma(\bmdelta)\|\geq \sqrt{1-\beta^2}\|\partial f(\bmdelta)\|.$$
Combined with the lower bound obtained in Lemma \ref{lemma:fd} $\|f(\bmdelta)\|\geq \cL B_l$, we can obtain the result.
\subsection{Proof of Lemma \ref{lemma:trajectory}}
By Lemma  \ref{lemma:awayopt}, we know if $\angle[\partial f(\bmdelta),\bmdelta]\leq \beta,$
$$\|\Gamma(\bmdelta)\|\geq\sqrt{1-\beta^2}\|\partial f(\bmdelta)\|\geq \cL B_l \sqrt{1-\beta^2}.$$
Thus, if we choose $\beta=\sqrt{1-\eta/(\cL B_l)^2 }$, 
$$\|\Gamma(\bmdelta)\|\geq \sqrt{\eta}.$$
Notice the corresponding solid angle with respect to $\Delta^{-}_{\eta}$ and  $\Delta^{+}_{\eta}$ will be less or equal to $\pi/2$ if we have 
$$\arccos\big(\min_{
\bmdelta,\bmdelta'}\angle[\partial f(\bmdelta),\partial f(\bmdelta')]\big)+\arccos\Big(\sqrt{1-\frac{\eta}{(\cL B_l)^2}}\Big)\leq \frac{\pi}{4}.$$
That is due to the following fact:  $\bmdelta^\star\in\Delta^{+}_{\eta}=\{\bmdelta:\angle[\partial f(\bmdelta),\bmdelta]\geq 1-\sqrt{\eta}/(\cL B_l),\bmdelta\in\varepsilon\bS^{d-1}\}$, where $\bmdelta^\star\parallel \partial f(\bmdelta^\star) $. Then, for any $\bmdelta\in \varepsilon \bS^{d-1}$, 
$$\arccos(\angle[\bmdelta,\bmdelta^\star])\leq  \arccos(\angle[\partial f(\bmdelta^\star),\bmdelta^\star])+ \arccos(\angle[\bmdelta,\partial f(\bmdelta)])+\arccos(\min_{
\bmdelta}\angle[\partial f(\bmdelta),\partial f(\bmdelta^\star)]).$$
 Combined with the fact $\arccos(\angle[\partial f(\bmdelta^\star),\bmdelta^\star])=1$, we know the corresponding solid angle with respect to $\Delta^{+}_{\eta}$ will be less or equal to $\pi/2$. Similar proof can be obtained for $\Delta^{-}_{\eta}$.

Notice if a point $\bmdelta$ in the ball reaches the sphere at $\tilde{\bmdelta}$ by gradient descent, then the tangent direction along longitude at $\tilde{\bmdelta}$ and the direction of $\bmdelta^\star$ should be smaller than the angle between the $\partial f(\bmdelta)$ and  $\partial f(\bmdelta^\star)$. Then, by basic geometry, we know if 
$$\arccos(\min_{
\bmdelta,\bmdelta'}\angle[\partial f(\bmdelta),\partial f(\bmdelta')])< \pi/4,$$
the trajectory initialized by drawing from a uniform distribution over  $\bB^\circ(\bm0,\varepsilon)$ will never reach $\Delta^{-}_{\eta}$. Meanwhile, if there exists $t^*$ such that $\bmdelta_{t^*}\in \Delta^{+}_{\eta}$, then for all $t\geq t^*$, $\bmdelta_{t}\in \Delta^{+}_{\eta}$.

\subsection{Proof of Theorem \ref{thm:main} and Corollary \ref{col:main}}
With the previous results, we are ready to state our main results. Recall for $\bmdelta,\bmdelta_0\in\varepsilon \bS^{d-1}$, if $\partial^2L(\bmdelta,\lambda^*)$ is $\rho$-Lipschitz, that is $\|\partial^2L(\bmdelta_1,\lambda^*)-\partial^2L(\bmdelta_2,\lambda^*)\|\leq\rho\|\bmdelta_1-\bmdelta_2\|$, we can obtain for any $\bmdelta,\bmdelta_0$ are on the sphere, we have 
\begin{equation*}
L(\bmdelta)\leq L(\bmdelta_0)+\Gamma(\bmdelta_0)^T(\bmdelta-\bmdelta_0)+\frac{1}{2}(\bmdelta-\bmdelta_0)^T\Xi(\bmdelta_0)(\bmdelta-\bmdelta_0)+\frac{\rho}{6}\|\bmdelta-\bmdelta_0\|^3.
\end{equation*}
Meanwhile, by Lemma \ref{lemma:geometry}, there exists approximation of PGD: 
\begin{equation*}
\bmdelta_{t+1}=\bmdelta_t-\eta\Gamma(\bmdelta_t)+\tau_t,
\end{equation*}
where $\|\tau_t\|\leq O(\eta^2/\varepsilon)$ for all $t$. Combine the above two formulas, if we further have $\|\Xi\|\leq \nu$, it gives 
\begin{align*}
L(\bmdelta_{t+1})-L(\bmdelta_t)&\leq \Gamma(\bmdelta_t)^T(\bmdelta_{t+1}-\bmdelta_t)+\frac{1}{2}(\bmdelta_{t+1}-\bmdelta_t)^T\Xi(\bmdelta_t)(\bmdelta_{t+1}-\bmdelta_t)+\frac{\rho}{6}\|\bmdelta_{t+1}-\bmdelta_t\|^3\\
&= \Gamma(\bmdelta_t)^T(-\eta\Gamma(\bmdelta_t)+\tau_t)+\frac{1}{2}(-\eta\Gamma(\bmdelta_t)+\tau_t)^T\Xi(\bmdelta_t)(-\eta\Gamma(\bmdelta_t)+\tau_t)+\frac{\rho}{6}\|-\eta\Gamma(\bmdelta_t)+\tau_t\|^3\\
&\leq -\eta\|\Gamma(\bmdelta_t)\|^2+\Gamma(\bmdelta_t)^T\tau_t+\frac{1}{2}\eta^2\Gamma(\bmdelta_t)^T\Xi(\bmdelta_t)\Gamma(\bmdelta_t)+\frac{1}{2}\tau_t^T\Xi(\bmdelta_t)\tau_t-\eta\Gamma(\bmdelta_t)^T\Xi(\bmdelta_t)\tau_t\\
&+\frac{\rho}{6}\big(\eta^3\|\Gamma(\bmdelta_t)\|^3+\|\tau_t\|^3+3\eta^2\|\Gamma(\bmdelta_t)\|^2\cdot\|\tau_t\|+3\eta\|\Gamma(\bmdelta_t)\|\cdot\|\tau_t\|^2\big)\\
&\leq-\eta\|\Gamma(\bmdelta_t)\|^2+\|\Gamma(\bmdelta_t)\|\frac{4\eta^2}{\varepsilon}+\frac{1}{2}\eta^2\nu\|\Gamma(\bmdelta_t)\|^2+\frac{8\eta^4\nu}{\varepsilon^2}+\frac{4\eta^3\nu}{\varepsilon}\|\Gamma(\bmdelta_t)\|\\
&+\frac{\rho}{6}\big(\eta^3\|\Gamma(\bmdelta_t)\|^3+\frac{64\eta^6}{\varepsilon^3}+\frac{12\eta^4}{\varepsilon}\|\Gamma(\bmdelta_t)\|^2+\frac{48\eta^5}{\varepsilon^2}\|\Gamma(\bmdelta_t)\|\big).
\end{align*}
By Lemma \ref{lemma:fd2}, we have $\nu\leq M_u\log m$, $\rho\leq K_u(\log m)^{3/2}$ for some $M_u,K_u$ of order $\Theta(1)$ with high probability under the conditions given. Thus, there exists a threshold $\eta_{\max}(m,\varepsilon)=\min\{\Theta \big((\log m)^{-2}\big),\varepsilon^2\}$, if $\eta\leq\eta_{\max}(m,\varepsilon)$, whenever $\bmdelta_t,\bmdelta_{t+1}\in \varepsilon \bS^{d-1}$ and $\|\Gamma(\bmdelta_t)\|\geq \sqrt{\eta}$,
$$L(\bmdelta_{t+1})-L(\bmdelta_t)\leq -\Omega(\eta^2).$$
We conclude the above statements by the following lemma. 
 \begin{lemma}\label{lemma:progressonsphere}
 If $m=\Omega(d^{5/2})$, there exists $\varepsilon_{\max}(m)=\Theta \big((\log m)^{-1/2}\big)$ and $\eta_{\max}(m,\varepsilon)=\min\{\Theta \big((\log m)^{-2}\big),\varepsilon^2\}$, if $\varepsilon<\varepsilon_{\max}(m)$, $\eta<\eta_{\max}(m,\varepsilon)$
 $$\Big\|\frac{\partial f(\bma,\bmW,\bmdelta+\bm\bmx) }{\partial \bmdelta}\Big\|\leq B_u\sqrt{\log d},~\Big\|\frac{\partial^2 f(\bma,\bmW,\bmdelta+\bm\bmx) }{\partial \bmdelta^2}\Big\|\leq M_u\log m,$$
 $$\Big\|\frac{\partial^2 f(\bma,\bmW,\bmdelta+\bm\bmx) }{\partial \bmdelta^2}-\frac{\partial^2 f(\bma,\bmW,\bmdelta'+\bm\bmx) }{\partial \bmdelta^2}\Big\|\leq K_u(\log m)^{3/2}\|\bmdelta-\bmdelta'\|,$$
for all $\bmdelta\in\bB(\bm0,\varepsilon)$, where $M_u, K_u$ are of order  $\Theta (1)$ with high probability, besides whenever $\bmdelta_t\in \varepsilon\bS^{d-1}$ and $\|\Gamma(\bmdelta_t)\|\geq \sqrt{\eta}$,
$$L(\bmdelta_{t+1})-L(\bmdelta_t)\leq -\Omega(\eta^2).$$
\end{lemma}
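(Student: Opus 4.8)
}
The statement is a consolidation of three uniform smoothness bounds together with one per-step progress bound, so the plan is to prove the former by directly invoking the earlier lemmas and then derive the latter from them. The inequalities $\|\partial f\|\leq B_u\sqrt{\log d}$, $\|\partial^2 f\|\leq M_u\log m$, and the $K_u(\log m)^{3/2}$-Lipschitz bound on $\partial^2 f$ are precisely the conclusions of Lemma \ref{lemma:fd} and Lemma \ref{lemma:fd2}; since $m=\Omega(d^{5/2})$ forces $\log d\lesssim\log m$, intersecting the two high-probability events on which those lemmas hold gives a single high-probability event $\cE$ on which all three hold simultaneously for every $\bmdelta\in\bB(\bm0,\varepsilon)$. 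On $\cE$, feeding these bounds and $\cL<|y-u|<\cU$ into the explicit formulas for $\Gamma$ and $\Xi$ recorded above yields a uniform operator bound $\|\Xi(\bmdelta)\|\leq\nu$, a Hessian-Lipschitz constant $\rho$ for $\partial^2 L(\cdot,\lambda^*)$, and $\|\Gamma(\bmdelta)\|\leq 2\cU B_u\sqrt{\log d}$ on $\varepsilon\bS^{d-1}$, with $\nu$ and $\rho$ polylogarithmic in $m$ (up to an $\varepsilon^{-2}$ correction in $\Xi$ that is harmless under $\eta\leq\varepsilon^2$, see below).

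For the progress bound, fix a step with $\bmdelta_t\in\varepsilon\bS^{d-1}$ and $\|\Gamma(\bmdelta_t)\|\geq\sqrt\eta$, and split on where the projected step lands. If $\bmdelta_{t+1}\in\bB^\circ(\bm0,\varepsilon)$, then the unprojected step already equals $\bmdelta_t-\eta\partial L(\bmdelta_t)$ and Lemma \ref{lemma:inball} gives $L(\bmdelta_{t+1})-L(\bmdelta_t)\leq-\Omega(\eta)\leq-\Omega(\eta^2)$. Otherwise $\bmdelta_{t+1}\in\varepsilon\bS^{d-1}$, and I combine the second-order Taylor inequality (\ref{eq:taylor}) on the sphere (valid because $\partial^2 L(\cdot,\lambda^*)$ is $\rho$-Lipschitz) with the approximation (\ref{eq:geometry}) from Lemma \ref{lemma:geometry}, writing $\bmdelta_{t+1}=\bmdelta_t-\eta\Gamma(\bmdelta_t)+\tau_t$ with $\|\tau_t\|\leq 4\eta^2/\varepsilon$, and expand $L(\bmdelta_{t+1})-L(\bmdelta_t)$ monomial by monomial exactly as in the displayed expansion preceding the lemma. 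The leading term is $-\eta\|\Gamma(\bmdelta_t)\|^2$.

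It then remains to check that $\eta<\eta_{\max}(m,\varepsilon)=\min\{\Theta((\log m)^{-2}),\varepsilon^2\}$ makes each of the remaining terms at most $\tfrac18\eta\|\Gamma(\bmdelta_t)\|^2$. The two levers are: (i) $\|\Gamma(\bmdelta_t)\|\geq\sqrt\eta$, which lets one cancel one or two factors of $\|\Gamma(\bmdelta_t)\|$ and reduces the targets to inequalities of the shape $\eta/\varepsilon\lesssim\|\Gamma(\bmdelta_t)\|$, $\eta\nu\lesssim 1$, $\rho\eta^2\|\Gamma(\bmdelta_t)\|\lesssim 1$; and (ii) $\eta\leq\varepsilon^2$, i.e. $\varepsilon\geq\sqrt\eta$, so that $\eta/\varepsilon\leq\sqrt\eta\leq\|\Gamma(\bmdelta_t)\|$ --- this single inequality absorbs every term carrying a factor $1/\varepsilon$, whether it comes from $\tau_t$ or from the $\varepsilon^{-2}$ piece of $\Xi$. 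The polylogarithmic sizes of $\nu$, $\rho$, and $\|\Gamma(\bmdelta_t)\|$ are then killed by $\eta\leq\Theta((\log m)^{-2})$. Summing the finitely many contributions gives $L(\bmdelta_{t+1})-L(\bmdelta_t)\leq-\tfrac12\eta\|\Gamma(\bmdelta_t)\|^2\leq-\tfrac12\eta^2=-\Omega(\eta^2)$, which is the claim. I expect the only real obstacle to be this last bookkeeping step: several error monomials carry both negative powers of $\varepsilon$ and polylog blow-ups of $\nu$ and $\rho$, and one must verify that a single threshold $\eta_{\max}$ --- necessarily of the mixed form $\min\{\mathrm{polylog}^{-1},\varepsilon^2\}$ --- simultaneously tames all of them; everything else is routine.
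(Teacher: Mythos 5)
Your proposal is correct and follows essentially the same route as the paper: the three uniform bounds are imported from Lemmas \ref{lemma:fd} and \ref{lemma:fd2} on a common high-probability event, and the per-step progress is obtained by the same case split (interior step handled by Lemma \ref{lemma:inball}, spherical step handled by combining the Taylor inequality (\ref{eq:taylor}) with the PGD approximation (\ref{eq:geometry}) and absorbing the error monomials using $\|\Gamma(\bmdelta_t)\|\geq\sqrt{\eta}$ and $\eta\leq\min\{\Theta((\log m)^{-2}),\varepsilon^2\}$). Your explicit bookkeeping of which lever kills which error term is in fact more detailed than what the paper writes out.
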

\begin{proof}
By the nature of PGD, we know $\bmdelta_{t+1}$ is either in the ball $\bB(\bm0,\varepsilon)$ or on the sphere $\varepsilon\bS^{d-1}$. When $\bmdelta_{t+1}\in\varepsilon\bS^{d-1}$, by the analysis above, we have $L(\bmdelta_{t+1})-L(\bmdelta_t)\leq -\Omega(\eta^2).$ When $\bmdelta_{t+1}\in \bB^{\circ}(\bm0,\varepsilon)$, by Lemma \ref{lemma:inball}, 
$$L(\bmdelta_{t+1})-L(\bmdelta_t)\leq -\Omega(\eta).$$
Thus, to sum up, we have for both cases
$$L(\bmdelta_{t+1})-L(\bmdelta_t)\leq -\Omega(\eta^2).$$
\end{proof}
Then, we need to deal with the case when $\|\Gamma(\bmdelta_t)\|\leq \sqrt{\eta}$. If we further denote the region $\Lambda^{+}=\{\bmdelta:\bmv^T\Xi(\bmdelta)\bmv\geq \gamma,\bmdelta\in\varepsilon\bS^{d-1}\}$ and $\Lambda^{-}=\{\bmdelta:\bmv^T\Xi(\bmdelta)\bmv\leq-\gamma,\bmdelta\in\varepsilon\bS^{d-1}\}$, where $\gamma$ is the universal constant specified in Lemma \ref{lemma:nearopt}. By Lemma \ref{lemma:awayopt} and \ref{lemma:nearopt}, we know if $\sqrt{\eta}\leq \cL B_l \sqrt{1-\phi^2}$, we have $\Delta^{+}_{\eta}\subseteq \Lambda^{+}$ and $\Delta^{-}_{\eta}\subseteq \Lambda^{-}$. $\Delta^{+}_{\eta}$ are those points near local optimums, which we try to avoid being stuck at. Lemma \ref{lemma:trajectory} provides insights how can the trajectory avoids being stuck near the local optimums.

The following corollary can help us realize Lemma \ref{lemma:trajectory}. Specifically, by zooming into the proof of Lemma \ref{lemma:inball}, it is straightforward to obtain the following corollary.
\begin{corollary}\label{coro:angle}
For any $\bmdelta,\bmdelta'\in\bB(\bm0,\varepsilon)$
$$\angle[\partial f(\bmdelta),\partial f(\bmdelta')]\rightarrow 1,$$
as $\varepsilon\rightarrow 0$ in  Eq. \ref{Eq:chaining} under the setting of Lemma \ref{lemma:fd} .
\end{corollary}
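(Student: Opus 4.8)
The plan is to read Corollary \ref{coro:angle} off directly from the two quantitative estimates already established for the gradient field of $f$: the uniform two-sided norm bound of Lemma \ref{lemma:fd}, in particular the lower bound $\|\partial f(\bma,\bmW,\bmdelta+\bmx)/\partial\bmdelta\|\geq B_l$ with $B_l=\Theta(1)$, and the chaining estimate (\ref{Eq:chaining}), which controls the oscillation of the gradient over the whole ball by a quantity $B$ that tends to $0$ as the scale parameter shrinks. I would work throughout on the intersection of the two high-probability events carrying these two statements; that intersection is again a high-probability event, so the corollary becomes a deterministic consequence of the estimates valid there, and no new probabilistic input is needed. The key bridge is that in the proof of Lemma \ref{lemma:fd} the admissible range was $\varepsilon\leq\lambda(\log m)^{-1/2}$, so for fixed $m$ sending $\varepsilon\rightarrow0$ corresponds to sending $\lambda\rightarrow0$, which is precisely when $B\rightarrow0$; thus one may regard $B=B(\varepsilon,m)$ as a bound with $B\rightarrow0$ as $\varepsilon\rightarrow0$.

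On that event, fix $\bmdelta,\bmdelta'\in\bB(\bm0,\varepsilon)$ and write $\partial f(\bmdelta')=\partial f(\bmdelta)+\bme$ with $\|\bme\|\leq B$. A one-line Cauchy--Schwarz computation gives $\partial f(\bmdelta)^T\partial f(\bmdelta')\geq\|\partial f(\bmdelta)\|^2-\|\partial f(\bmdelta)\|B$ and $\|\partial f(\bmdelta')\|\leq\|\partial f(\bmdelta)\|+B$, hence, once $B<B_l\leq\|\partial f(\bmdelta)\|$,
$$\angle[\partial f(\bmdelta),\partial f(\bmdelta')]=\frac{\partial f(\bmdelta)^T\partial f(\bmdelta')}{\|\partial f(\bmdelta)\|\,\|\partial f(\bmdelta')\|}\geq\frac{\|\partial f(\bmdelta)\|-B}{\|\partial f(\bmdelta)\|+B}\geq\frac{B_l-B}{B_l+B},$$
where the last step uses that $x\mapsto(x-B)/(x+B)$ is increasing on $(0,\infty)$ together with $\|\partial f(\bmdelta)\|\geq B_l$. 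Since the right-hand side is independent of $\bmdelta,\bmdelta'$ and $\angle[\partial f(\bmdelta),\partial f(\bmdelta')]\leq1$ always, squeezing between $(B_l-B)/(B_l+B)$ and $1$ forces $\min_{\bmdelta,\bmdelta'\in\bB(\bm0,\varepsilon)}\angle[\partial f(\bmdelta),\partial f(\bmdelta')]\rightarrow1$ as $\varepsilon\rightarrow0$.

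I expect the only genuine work lies in the bookkeeping of the first paragraph, not in any calculation. One must confirm that the oscillation constant $B$ in (\ref{Eq:chaining}) can truly be sent to $0$ with $\varepsilon$: this is exactly what the closing paragraph of the proof of Lemma \ref{lemma:fd} supplies, since there the increments $u_r,v_r$ are sub-Gaussian with sub-Gaussian norm scaling like $\varepsilon\sqrt{\log m}$, so the $\psi_2$-diameter of the index set, and hence the chaining bound, scales down with $\varepsilon$. One must also check that the lower bound of Lemma \ref{lemma:fd} and the chaining bound hold on a single common high-probability event, which is immediate by intersecting the two. Everything else is the elementary trigonometric estimate above.
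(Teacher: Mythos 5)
Your proof is correct and follows essentially the same route as the paper's: both arguments combine the chaining estimate (\ref{Eq:chaining}), whose oscillation bound $B$ vanishes as $\varepsilon\to 0$, with the uniform lower bound $\|\partial f\|\geq B_l$ from Lemma \ref{lemma:fd} to conclude via an elementary estimate that the cosine tends to $1$. The only cosmetic difference is that the paper bounds $\angle[\partial f(\bmdelta),\partial f(\bmdelta')]-1$ through the Lipschitz behavior of the normalization map, whereas you bound the inner product and norms directly, which amounts to the same calculation.
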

\begin{proof}
Once we notice that
\begin{align*}
\angle [\partial f(\bmdelta),\partial f(\bmdelta')]-1&=\Big\langle\frac{\partial f(\bmdelta)}{\|\partial f(\bmdelta)\|}-\frac{\partial f(\bmdelta')}{\|\partial f(\bmdelta')\|},\frac{\partial f(\bmdelta')}{\|\partial f(\bmdelta')\|}\Big \rangle\\
&\leq \frac{2\|\partial f(\bmdelta)-\partial f(\bmdelta')\|}{\|\partial f(\bmdelta)\|}
\end{align*}
and $\|\partial f(\bmdelta)\|\geq\cL B_l$, the proof is straightforward.
\end{proof} 
Thus, if $\eta,\varepsilon$ are smaller than some constant thresholds, then 
\begin{equation*}
\arccos\big(\min_{
\bmdelta,\bmdelta'}\angle[\partial f(\bmdelta),\partial f(\bmdelta')]\big)+\arccos\Big(\sqrt{1-\frac{\eta}{(\cL B_l)^2}}\Big)\leq \frac{\pi}{4}
\end{equation*}
stands. As a result, the trajectory can successfully avoid being stuck near local maximums.

The only case left is when $\|\Gamma(\bmdelta_t)\|\leq \sqrt{\eta}$ and $\bmdelta_t\in \Delta^{-}_\eta$. If $\bmdelta^*$ is one of the local minimums and we focus on studying the case when $\bmdelta_t$ falls at the local neighborhood belongs to $ \Delta^{-}_\eta $ corresponding to $\bmdelta^*$. 

Notice that
\begin{equation*}
\Gamma(\bmdelta_t)=\Gamma(\bmdelta^*)+\int_{0}^1\nabla\Gamma(\bmdelta^*+t(\bmdelta_t-\bmdelta^*))dt\cdot(\bmdelta_t-\bmdelta^*).
\end{equation*}
By looking up the derivative of $\Gamma(\bmdelta)$, we have the following characterization:
\begin{align}
\nabla\Gamma(\bmdelta)=\Xi(\bmdelta)-\nabla c(\bmdelta)\nabla \lambda^*(\bmdelta)^T.
\end{align}

Denote
$$N(\bmdelta)=-\nabla c(\bmdelta)\nabla \lambda^*(\bmdelta)^T.$$
If $\bmdelta\in\varepsilon \bS^{d-1}$, $\nabla c(\bmdelta)=2\bmdelta$ is parallel to the normal vector of the tangent space at $\bmdelta$, thus, we have for any $\bmv$, $N(\bmdelta)\bmv\in \cT^c(\bmdelta)$.

The fact that $\nabla\Gamma(\bmdelta)=\Xi(\bmdelta)+N(\bmdelta)$ is very important, since it gives the following extra characterization of  $\bmdelta_t$: for small enough $\eta$, if $\|\Gamma(\bmdelta_t)\|\leq \sqrt{\eta}$, then $\|\bmdelta_t-\bmdelta^*\|=O(\sqrt{\eta})$. Besides, since $\Gamma(\bmdelta^*)=0$, if $\|\Gamma(\bmdelta_t)\|\leq \sqrt{\eta}$, $\bmdelta_t$ falls into a neighborhood of $\bmdelta^*$ such that $\Xi$ has smallest eigenvalue larger or equal than $\gamma>0$ as long as $\eta$ is small enough.  Besides, we have when $\|\bmdelta_t-\bmdelta^*\|=\Omega(\sqrt{\eta})$, then $\|\Gamma(\bmdelta_t)\|=\Omega(\sqrt{\eta})$.
The previous discussion can be formalized as the following lemma.
\begin{lemma}\label{lemma:innerproduct}
For small enough $\eta$, if $\|\Gamma(\bmdelta_t)\|\leq \sqrt{\eta}$, and $\bmdelta_t$ is in the neighborhood of $\bmdelta^*$ such that $\Xi$ has smallest eigenvalue larger or equal than $\gamma$,
then 
$$\|\bmdelta_t-\bmdelta^*\|=O(\sqrt{\eta}).$$
Furthermore, we have
$$\Gamma(\bmdelta_t)^T(\bmdelta_t-\bmdelta^*)\geq\frac{\gamma}{2}\|\bmdelta_t-\bmdelta^*\|^2.$$
\end{lemma}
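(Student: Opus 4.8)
The plan is to linearize $\Gamma$ about $\bmdelta^*$ and exploit the identity $\nabla\Gamma(\bmdelta)=\Xi(\bmdelta)+N(\bmdelta)$ with $N(\bmdelta)=-\nabla c(\bmdelta)\nabla\lambda^*(\bmdelta)^T=-2\bmdelta\,\nabla\lambda^*(\bmdelta)^T$ recorded just above. Since $L$ is smooth and $\lambda^*(\bmdelta)=\langle\partial L(\bmdelta),\bmdelta\rangle/(2\|\bmdelta\|^2)$ is $C^1$ on a neighborhood of $\varepsilon\bS^{d-1}$ (the denominator stays bounded away from $0$ and $\partial L$ is $C^1$), $\Gamma$ is $C^1$ there, and from $\Gamma(\bmdelta^*)=0$ the fundamental theorem of calculus along the segment $\bmdelta_s:=\bmdelta^*+s(\bmdelta_t-\bmdelta^*)$, $s\in[0,1]$, gives
\[
\Gamma(\bmdelta_t)=\Big(\int_0^1\nabla\Gamma(\bmdelta_s)\,ds\Big)\bmu,\qquad \bmu:=\bmdelta_t-\bmdelta^*.
\]
By hypothesis $\bmdelta_t$ lies in a neighborhood of $\bmdelta^*$ on which $\Xi\succeq\gamma I$; shrinking it we may take this neighborhood to be a ball $\bB(\bmdelta^*,r_0)$, so the whole segment $\{\bmdelta_s\}$ lies in it (and in the region where $\lambda^*$ is defined and $C^1$).

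Next I would control the term coming from $N$. Because $N(\bmdelta_s)\bmu=-2\big(\nabla\lambda^*(\bmdelta_s)^T\bmu\big)\bmdelta_s$ is parallel to $\bmdelta_s$, we have $\big|\bmu^T N(\bmdelta_s)\bmu\big|\le 2\,\big|\nabla\lambda^*(\bmdelta_s)^T\bmu\big|\cdot\big|\bmu^T\bmdelta_s\big|$. The key geometric observation is that $\bmdelta_t$ and $\bmdelta^*$ both lie on the sphere $\varepsilon\bS^{d-1}$, so $\bmu^T\bmdelta^*=-\tfrac12\|\bmu\|^2$ and hence $|\bmu^T\bmdelta_s|=|s-\tfrac12|\,\|\bmu\|^2\le\tfrac12\|\bmu\|^2$ for $s\in[0,1]$; in other words $\bmu$ is quadratically close to being tangent at $\bmdelta^*$. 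Combining this with $|\nabla\lambda^*(\bmdelta_s)^T\bmu|\le L_\lambda\|\bmu\|$, where $L_\lambda:=\sup_{\bB(\bmdelta^*,r_0)}\|\nabla\lambda^*\|<\infty$ is finite because $\nabla\lambda^*$ is a fixed polynomial in $\partial f,\partial^2 f$ divided by powers of $\|\bmdelta\|$ (bounded away from $0$), all controlled by Lemmas \ref{lemma:fd} and \ref{lemma:fd2}, I obtain $|\bmu^T N(\bmdelta_s)\bmu|\le L_\lambda\|\bmu\|^3$.

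Then I would assemble the pieces. Taking the inner product of the linearization with $\bmu$ and using $\Xi(\bmdelta_s)\succeq\gamma I$ along the segment,
\[
\Gamma(\bmdelta_t)^T\bmu=\int_0^1\bmu^T\Xi(\bmdelta_s)\bmu\,ds+\int_0^1\bmu^T N(\bmdelta_s)\bmu\,ds\ \ge\ \|\bmu\|^2\big(\gamma-L_\lambda\|\bmu\|\big).
\]
Shrinking $r_0$ so that $r_0\le\gamma/(2L_\lambda)$ (still strictly positive), the factor $\gamma-L_\lambda\|\bmu\|$ is $\ge\gamma/2$ for every $\bmdelta_t\in\bB(\bmdelta^*,r_0)$, which is exactly the second assertion $\Gamma(\bmdelta_t)^T(\bmdelta_t-\bmdelta^*)\ge\tfrac{\gamma}{2}\|\bmdelta_t-\bmdelta^*\|^2$. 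The first assertion follows by Cauchy--Schwarz: $\sqrt{\eta}\ge\|\Gamma(\bmdelta_t)\|\ge \Gamma(\bmdelta_t)^T\bmu/\|\bmu\|\ge\tfrac{\gamma}{2}\|\bmu\|$, hence $\|\bmdelta_t-\bmdelta^*\|\le 2\sqrt{\eta}/\gamma=O(\sqrt{\eta})$.

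The main obstacle is precisely the cross term $\int_0^1\bmu^T N(\bmdelta_s)\bmu\,ds$: a priori $N$ and $\Xi$ are of the same order, so a crude bound would swamp the $\gamma\|\bmu\|^2$ coming from $\Xi$; what rescues the estimate is that $N(\bmdelta_s)\bmu$ points radially while $\bmu$ is quadratically close to tangent — a consequence of $\bmdelta_t$ and $\bmdelta^*$ sitting on the same sphere — which yields the extra $\|\bmu\|$ factor needed to make the term lower order. Everything else is routine bookkeeping: checking that the segment stays inside the ball where $\Xi\succeq\gamma I$ and where $\lambda^*$ is $C^1$, and the explicit finiteness of $L_\lambda$ via the bounds of Lemmas \ref{lemma:fd}--\ref{lemma:fd2}.
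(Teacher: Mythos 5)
Your proof is correct, and for the second (and central) inequality it follows essentially the paper's route: write $\Gamma(\bmdelta_t)=\int_0^1\nabla\Gamma(\bmdelta_s)\,ds\,(\bmdelta_t-\bmdelta^*)$ using $\Gamma(\bmdelta^*)=0$, split $\nabla\Gamma=\Xi+N$, use $\Xi\succeq\gamma I$ for the main term, and kill the $N$-term by observing that $N(\bmdelta)\bmv$ is radial while the chord $\bmu=\bmdelta_t-\bmdelta^*$ between two points of $\varepsilon\bS^{d-1}$ is quadratically close to tangent. Your explicit identity $\bmu^T\bmdelta_s=(s-\tfrac12)\|\bmu\|^2$ is a cleaner and more careful version of the paper's appeal to Lemma \ref{lemma:g1} (which, strictly speaking, is stated only for points on the sphere, whereas the chord points $\bmdelta_s$ are interior; your computation covers them directly). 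Where you genuinely diverge is the first assertion: the paper establishes $\|\bmdelta_t-\bmdelta^*\|=O(\sqrt{\eta})$ by a separate and rather informal construction — perturbing $\bmdelta^*$ tangentially to get $\|\Gamma(\bmdelta^{\cP}_{\bmv})\|\geq\gamma\alpha/2$ and then arguing that from any point of a region $\cR$ there is a path along which $\|\Gamma\|$ decreases — whereas you simply deduce it from the already-proved inequality via Cauchy--Schwarz, $\sqrt{\eta}\geq\|\Gamma(\bmdelta_t)\|\geq\Gamma(\bmdelta_t)^T\bmu/\|\bmu\|\geq\tfrac{\gamma}{2}\|\bmu\|$. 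Your route is shorter, avoids the path-construction heuristics entirely, and yields the explicit constant $2/\gamma$; the only (shared) caveat is that both arguments assume the coercivity $\Xi\succeq\gamma I$ and the regularity of $\lambda^*$ extend to a full-dimensional neighborhood containing the chord, which the lemma's informal hypothesis permits.
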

\begin{proof}
By Lemma \ref{lemma:g1}, we know that for $\bmdelta_t,\bmdelta^*\in\varepsilon\bS^{d-1}$
$$\|\cP_{\cT^c_{\bmdelta_t}}(\bmdelta_t-\bmdelta^*)\|\leq\frac{\|\bmdelta_t-\bmdelta^*\|^2}{2\varepsilon}.$$
As $\bmdelta^*$ is one of the minimizer on the sphere, we must have $\Gamma(\bmdelta^*)=0$. Thus, for small enough $\alpha$, if we denote $\bmdelta^{\cP}_{\bmv}=\Pi_{\bB(\bm0,\varepsilon)}(\bmdelta+\alpha\bmv)$, for any $\bmv \in \cT(\bmdelta)$ 
with norm $l$,
\begin{align*}
\|\Gamma(\bmdelta^{\cP}_{\bmv})-\Gamma(\bmdelta^*)\|&\approx\|\nabla\Gamma(\bmdelta^*)(\bmdelta^{\cP}_{\bmv}-\bmdelta^*)\|\\
&\geq \|\Xi(\bmdelta^*)(\bmdelta^{\cP}_{\bmv}-\bmdelta^*)\|-\|N(\bmdelta^*)(\bmdelta^{\cP}_{\bmv}-\bmdelta^*)\|\\
&\geq \gamma \alpha-\frac{4\alpha^2}{\varepsilon}(\|\Xi\|+\|N\|)\\
&\geq \frac{ \gamma \alpha}{2}.
\end{align*}
If we further denote $\cR$ as the region where $\bmdelta$ has the following properties:
\begin{itemize}
\item$\text{smallest eigenvalue of }~\Xi(\bmdelta)~ \text{larger or equal to}~\gamma$;
\item$\text{the distance from }~\bmdelta~\text{to one of the minimizers is at least}~\Omega(\eta),$
\end{itemize}
with abuse of notations, we want to prove for any $\bmdelta$ belongs to $\cR$, there is a path $\{\bmdelta_t\}$ to the region that $\text{the distance from }~\bmdelta~\text{to one of the minimizers is at most}~O(\eta),$ where $\bmdelta_0=\bmdelta$, such that $\|\Gamma(\bmdelta_t)\|$ is decreasing along the path. If that statement is true let $\alpha=c\sqrt{\eta}$ for some constant $c$, and $\eta/\varepsilon^4$ is small enough, then we know $\|\Gamma(\bmdelta)\|\leq \sqrt{\eta}$ implies $\bmdelta$ being very close to one of the minimizers, distance up to $O(\sqrt{\eta})$. So
\begin{align*}
\Gamma(\bmdelta_t)^T(\bmdelta_t-\bmdelta)&=(\bmdelta_t-\bmdelta^*)^T\int_{0}^1\nabla\Gamma(\bmdelta^*+t(\bmdelta_t-\bmdelta^*))dt\cdot(\bmdelta_t-\bmdelta^*)\\
&\geq \gamma\|\bmdelta_t-\bmdelta^*\|^2-O(\|\bmdelta_t-\bmdelta^*\|^3)\\
&\geq \frac{\gamma}{2}\|\bmdelta_t-\bmdelta^*\|^2.
\end{align*}

Finally, we show we can always find such path that of decreasing norm of $\Gamma$. Notice
\begin{align*}
\frac{d \|\Gamma(\bmdelta)\|^2}{2dt}&=\langle\Gamma(\bmdelta_t),\Xi\frac{d\bmdelta_t}{dt}\rangle.
\end{align*}

If $d\bmdelta_t/dt=- \varpi \Gamma(\bmdelta_t)$ for some constant $ \varpi$, then $\langle\Gamma(\bmdelta_t),\Xi\frac{d\bmdelta_t}{dt}\rangle\leq -\varpi \gamma\|\Gamma(\bmdelta_t)\|^2$, which implies the norm is decreasing along the path. Thus, for discrete version $\bmdelta_{t+1}=\Pi_{\bB(\bm0,\varepsilon)}(\bmdelta_t-\varpi \nabla L(\bmdelta_t))$, as long as $\varpi$ is small enough (can be much smaller than $\eta$), combine with Lemma \ref{lemma:geometry}, the dominating term of $  d\|\Gamma(\bmdelta_t))\|^2/dt$ will be negative, thus, the proof is complete.

\end{proof}
Now we are ready to state the convergence result when $\bmdelta_t\in\varepsilon\bS^{d-1}$ is in a neighborhood of one of the minimizers $\bmdelta^*$. We have shown that eventually for some $t$, $\|\bmdelta_t-\bmdelta^*\|\leq O(\sqrt{\eta})$. We need further to show that the trajectory will remain near $\bmdelta^*$ ever since.
\begin{lemma}\label{lemma:convergenceinneighbourhood}
For small enough $\eta$, for a $T>0$ such that $\|\Gamma(\bmdelta_T)\|\leq \sqrt{\eta}$, and is in the neighborhood of a minimizer $\bmdelta^*$ and $\Xi$ has smallest eigenvalue larger or equal than $\gamma$, for any $t\geq T$,
$$\|\bmdelta_t-\bmdelta^*\|\leq O(\sqrt{\eta}).$$
\end{lemma}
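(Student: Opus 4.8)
The plan is to track the squared distance $D_t:=\|\bmdelta_t-\bmdelta^*\|^2$ and to show that $D_t\le K\eta$ is an inductive invariant for $t\ge T$, where $K$ depends on $\gamma$, $\varepsilon$ and the polylogarithmic constants of Lemma~\ref{lemma:fd2} but not on $\eta$; this immediately gives $\|\bmdelta_t-\bmdelta^*\|=O(\sqrt\eta)$. All of the work takes place in a fixed neighborhood $\cV_0$ of $\bmdelta^*$ on which $\Xi$ has smallest eigenvalue at least $\gamma$; such a $\cV_0$ of positive radius independent of $\eta$ exists by Lemma~\ref{lemma:nearopt} together with the Lipschitz bound of Lemma~\ref{lemma:fd2}, and I shrink $\eta$ until every $\sqrt{K\eta}$-ball about $\bmdelta^*$ lies inside $\cV_0$. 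The base case $D_T=O(\eta)$ is precisely the first conclusion of Lemma~\ref{lemma:innerproduct}, so $K$ can be chosen to absorb that constant.

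First I would check that near $\bmdelta^*$ the projected step never drops into the interior of the ball. Writing $\tilde\bmdelta=\bmdelta_t-\eta\,\partial L(\bmdelta_t)$ and using $\|\bmdelta_t\|=\varepsilon$,
\begin{align*}
\|\tilde\bmdelta\|^2-\varepsilon^2=-2\eta\,\langle\bmdelta_t,\partial L(\bmdelta_t)\rangle+\eta^2\|\partial L(\bmdelta_t)\|^2=-4\eta\varepsilon^2\lambda^*(\bmdelta_t)+\eta^2\|\partial L(\bmdelta_t)\|^2,
\end{align*}
where $\lambda^*(\bmdelta_t)=\langle\bmdelta_t,\partial L(\bmdelta_t)\rangle/(2\varepsilon^2)$. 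At a minimizer on the boundary of the ball the KKT stationarity forces $\lambda^*(\bmdelta^*)\le 0$, and in fact $\lambda^*(\bmdelta^*)<0$ since $\partial L(\bmdelta^*)\ne0$ (indeed $|y-u^*|>\cL$ and $\|\partial f(\bmdelta^*)\|\ge B_l$, writing $u^*=f(\bma,\bmW,\bmdelta^*+\bmx)$); by continuity $\lambda^*<0$ on all of $\cV_0$. Hence $\|\tilde\bmdelta\|\ge\varepsilon$, so $\bmdelta_{t+1}=\cP_{\bB(\bm0,\varepsilon)}(\tilde\bmdelta)\in\varepsilon\bS^{d-1}$, and Lemma~\ref{lemma:geometry}, in the form of Eq.~(\ref{eq:geometry}), yields $\bmdelta_{t+1}=\bmdelta_t-\eta\Gamma(\bmdelta_t)+\tau_t$ with $\|\tau_t\|\le 4\eta^2/\varepsilon$.

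Then I would establish the one-step contraction. Expanding the square and using the bound $\Gamma(\bmdelta)^T(\bmdelta-\bmdelta^*)\ge\frac{\gamma}{2}\|\bmdelta-\bmdelta^*\|^2$ — which the proof of Lemma~\ref{lemma:innerproduct} in fact delivers for every $\bmdelta\in\cV_0$, the hypothesis $\|\Gamma\|\le\sqrt\eta$ there being used only to place $\bmdelta$ in $\cV_0$ — together with $\Gamma(\bmdelta^*)=0$ and $\|\nabla\Gamma\|=\|\Xi+N\|\le C_1$ polylogarithmically bounded (so $\|\Gamma(\bmdelta_t)\|^2\le C_1^2 D_t$), I obtain
\begin{align*}
D_{t+1}&=\|\bmdelta_t-\bmdelta^*-\eta\Gamma(\bmdelta_t)+\tau_t\|^2\\
&\le\big(1-\gamma\eta+2C_1^2\eta^2\big)D_t+\frac{8\eta^2}{\varepsilon}\sqrt{D_t}+\frac{32\eta^4}{\varepsilon^2}.
\end{align*}
For $\eta$ small enough that $2C_1^2\eta\le\gamma/2$ the bracket is $\le1-\gamma\eta/2$, and if moreover $D_t\le K\eta$ then $\frac{8\eta^2}{\varepsilon}\sqrt{D_t}+\frac{32\eta^4}{\varepsilon^2}\le\frac{8\sqrt K}{\varepsilon}\eta^{5/2}+\frac{32\eta^4}{\varepsilon^2}=o(\eta^2)\le\frac{\gamma K}{2}\eta^2$, so $D_{t+1}\le(1-\gamma\eta/2)K\eta+\frac{\gamma K}{2}\eta^2=K\eta$. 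This also keeps $\bmdelta_{t+1}\in\cV_0$ (for $\eta$ below the threshold fixed above), so the induction runs for all $t\ge T$ and the lemma follows.

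The hard part is really the second paragraph: the contraction estimate is available only while the iterates remain on the sphere, so one must rule out the iteration slipping into the interior of the ball near $\bmdelta^*$ — there only Lemma~\ref{lemma:inball} applies, which controls the loss, not the distance to $\bmdelta^*$, and near a minimizer the loss need not even decrease because of the $O(\eta^2/\varepsilon)$ tangency error of Lemma~\ref{lemma:geometry}. Pinning down the sign of $\lambda^*$ on a fixed neighborhood of $\bmdelta^*$, via KKT together with $\partial L(\bmdelta^*)\ne0$ and $\Xi(\bmdelta^*)\succeq\gamma I$, is the pivotal step; everything after that is the routine ``stay in the basin of a quadratically growing minimum'' argument, perturbed only by the benign error $\tau_t$.
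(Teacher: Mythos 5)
Your proposal follows essentially the same route as the paper's proof: write $\bmdelta_{t+1}=\bmdelta_t-\eta\Gamma(\bmdelta_t)+\tau_t$ via Lemma~\ref{lemma:geometry}, expand $\|\bmdelta_{t+1}-\bmdelta^*\|^2$, apply the inner-product bound $\Gamma(\bmdelta_t)^T(\bmdelta_t-\bmdelta^*)\geq\frac{\gamma}{2}\|\bmdelta_t-\bmdelta^*\|^2$ from Lemma~\ref{lemma:innerproduct}, and close an induction keeping $\|\bmdelta_t-\bmdelta^*\|^2=O(\eta)$. The one genuine addition is your KKT argument that $\lambda^*<0$ on a neighborhood of $\bmdelta^*$ so the iterates never drop into $\bB^\circ(\bm0,\varepsilon)$ — a point the paper's proof uses implicitly without justification — and that argument is correct and strengthens the proof.
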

\begin{proof}
Notice that if $\|\bmdelta_t-\bmdelta^*\|\leq c\sqrt{\eta}$ for some constant $c>0$,
\begin{align*}
\|\bmdelta_{t+1}-\bmdelta^*\|^2&=\|\bmdelta_{t}-\eta\Gamma(\bmdelta_t)+\iota_t-\bmdelta^*\|^2\\
&=\|\bmdelta_t-\bmdelta^*\|^2-2\eta\Gamma(\bmdelta_t)^T(\bmdelta_t-\bmdelta^*)+2\iota_t^T(\bmdelta_t-\bmdelta^*)+\|\eta\Gamma(\bmdelta_t)-\iota_t\|^2\\
&\leq (1-\gamma\eta)\|\bmdelta_t-\bmdelta^*\|^2+\|2\iota_t\|\|\bmdelta_t-\bmdelta^*\|+2c\eta^3+\frac{32\eta^4}{\varepsilon^2}\\
&\leq (1-\gamma\eta)\|\bmdelta_t-\bmdelta^*\|^2+\frac{8\eta^{2.5}}{\varepsilon}+o(\eta^2).
\end{align*}
Then, $\|\bmdelta_{t+1}-\bmdelta^*\|\leq\sqrt{\eta}$ for small enough $\eta$ and $\eta^{0.5}/\varepsilon=o(1)$. Further, 
$$\|\bmdelta_{t+1}-\bmdelta^*\|^2-\frac{9\eta}{\gamma}\leq (1-\gamma\eta)(\|\bmdelta_{t}-\bmdelta^*\|^2-\frac{9\eta}{\gamma}).$$
Then, the proof is straightforward.
\end{proof}
\paragraph{Shrinking step size $\eta_t$} The above discussions are all about constant $\eta$. Now, we further discuss about shrinking step size. Specifically, after $\|\Gamma(\bmdelta_t)\|$ reaches $\sqrt{\eta}$, we can shrink the learning rate with suitable $\eta_0<\eta$, and $\{\eta_s\}_{s\geq 0}$ are strictly decreasing with respect to $s\geq 0$, such that 
$$\bmdelta_{t+s+1}=\Pi_{\bB(\bm0,\varepsilon)}\big[\bmdelta_{t+s}-\eta_s\nabla L(\bmdelta_{t+s})\big]$$
By Lemma \ref{lemma:convergenceinneighbourhood}, for small enough $\eta$ and $\eta^{0.5}/\varepsilon=o(1)$, notice that $\|\bmdelta_{t+s}-\bmdelta^*\|\leq \varepsilon$, we can still have 
$$\|\bmdelta_{t+1+s}-\bmdelta^*\|^2\leq (1-\gamma\eta_s)\|\bmdelta_{t+s}-\bmdelta^*\|^2+9\eta_s^{2}.$$
For simplicity, we denote $\|\bmdelta_{t+s}-\bmdelta^*\|^2$ as $D_s$. We would show if $\eta_s\rightarrow 0$, $D_s\rightarrow 0$.
\begin{lemma}\label{lemma:shrinkingstepsize}
As long as  $\eta_s\rightarrow 0$ and  $\Pi_{i=0}^{k}(1-\gamma\eta_i/2)\rightarrow 0$, we can obtain $D_s\rightarrow 0$. Furthermore, if 
\begin{equation*}
\eta_s\Pi_{i=0}^k(1-\frac{\gamma\eta_{s+i}}{2})\leq \eta_{s+k+1}
\end{equation*}
for all $s,k\in\bN$, then for all $s\in \bN$, 
$$D_s\leq O(\eta_s).$$
Specifically, if $\eta_s=1/(s+z)$ for  large enough integer $z$, 
$$D_s\leq O(\frac{1}{z+s}).$$
\end{lemma}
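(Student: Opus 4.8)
The plan is to reduce everything to the scalar recursion established just above the statement. Once the iterate has entered a neighborhood of a minimizer $\bmdelta^*$ on which $\Xi$ has smallest eigenvalue at least $\gamma$ — which, by Lemma~\ref{lemma:convergenceinneighbourhood} and the surrounding discussion, does happen and then persists — one has, for every $s\geq 0$,
\[
D_{s+1}\;\leq\;(1-\gamma\eta_s)\,D_s+9\eta_s^2 ,
\]
with $1-\gamma\eta_s\in(0,1)$ since each $\eta_s\leq\eta_0$ is small. So the whole proof is bookkeeping for a linearly contracting recursion driven by a vanishing additive perturbation. For the qualitative claim $D_s\to 0$ I would unroll to
\[
D_s\;\leq\;\Big(\prod_{i=0}^{s-1}(1-\gamma\eta_i)\Big)D_0\;+\;9\sum_{j=0}^{s-1}\eta_j^2\prod_{i=j+1}^{s-1}(1-\gamma\eta_i).
\]
The homogeneous term vanishes because $1-\gamma\eta_i\leq 1-\gamma\eta_i/2$, so $\prod_{i=0}^{s-1}(1-\gamma\eta_i)\leq\prod_{i=0}^{s-1}(1-\gamma\eta_i/2)\to 0$ by hypothesis. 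For the accumulated-noise term I would split the sum at a large index $J$: on the tail $j\geq J$ the telescoping identity $\gamma\eta_j\prod_{i=j+1}^{s-1}(1-\gamma\eta_i)=\prod_{i=j+1}^{s-1}(1-\gamma\eta_i)-\prod_{i=j}^{s-1}(1-\gamma\eta_i)$ gives $\sum_{j\geq J}\eta_j^2\prod_{i=j+1}^{s-1}(1-\gamma\eta_i)\leq(\sup_{j\geq J}\eta_j)/\gamma$, which is arbitrarily small for $J$ large since $\eta_j\to 0$; on the head $j<J$ one gets a fixed finite sum times $\prod_{i=J}^{s-1}(1-\gamma\eta_i)$, which tends to $0$ as $s\to\infty$ because $\prod(1-\gamma\eta_i/2)\to 0$ forces $\sum\eta_i=\infty$. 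Sending $s\to\infty$ and then $J\to\infty$ yields $D_s\to 0$.

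For the rate $D_s\leq O(\eta_s)$ I would fix $C:=\max\{18/\gamma,\,D_0/\eta_0\}$ and prove $D_s\leq C\eta_s$ by induction on $s$, the base case being the choice of $C$. For the step, the recursion together with $D_s\leq C\eta_s$ gives $D_{s+1}\leq(1-\gamma\eta_s)C\eta_s+9\eta_s^2\leq C\eta_s(1-\gamma\eta_s/2)$, where the last inequality is exactly $9\eta_s^2\leq(\gamma\eta_s/2)\,C\eta_s$, i.e.\ $C\geq 18/\gamma$; then hypothesis~(\ref{eq:stepshrink}) applied with $k=0$ gives $C\eta_s(1-\gamma\eta_s/2)\leq C\eta_{s+1}$, closing the induction. (The full all-$k$ form of~(\ref{eq:stepshrink}) is convenient but not essential: iterating the one-step bound gives $D_{s+k+1}\leq C\eta_s\prod_{i=0}^{k}(1-\gamma\eta_{s+i}/2)\leq C\eta_{s+k+1}$ in one line.) For the concrete schedule I would take $\eta_s=2/(\gamma(s+z))$, exactly as in Corollary~\ref{col:main}; then $1-\gamma\eta_s/2=(s+z-1)/(s+z)$, so $\eta_s(1-\gamma\eta_s/2)=\frac{2(s+z-1)}{\gamma(s+z)^2}\leq\frac{2}{\gamma(s+z+1)}=\eta_{s+1}$ because $(s+z-1)(s+z+1)\leq(s+z)^2$, and choosing $z$ large enough that $\gamma\eta_0<1$ and $D_0\leq C\eta_0$ gives $D_s\leq C\eta_s=O(1/(z+s))$.

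The recursion manipulations above are routine; the genuinely delicate ingredient — and the thing I would make airtight — is that the iterate really does stay in the good neighborhood for all $s\geq 0$, so that the contraction factor $1-\gamma\eta_s$ and the cubic-remainder estimates feeding the constant $9$ are legitimately available at every step. That is what is being invoked when we appeal to Lemma~\ref{lemma:convergenceinneighbourhood}, and it is precisely why the standing hypotheses $\eta_0<\eta$ with $\eta$ below the thresholds of Theorem~\ref{thm:main} and $\eta^{1/2}/\varepsilon=o(1)$ are imposed; once those are in force, no further obstacle appears.
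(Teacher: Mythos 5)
Your proof is correct, and it starts from the same place as the paper's --- the one-step recursion $D_{s+1}\leq(1-\gamma\eta_s)D_s+9\eta_s^2$ and the threshold $18\eta_s/\gamma$ --- but the bookkeeping is genuinely different. For the qualitative claim the paper argues by cases on whether $D_s$ eventually stays above $18\eta_s/\gamma$ (pure contraction thereafter) or drops below it infinitely often (and is then controlled by the decreasing threshold); you instead unroll the recursion and bound the homogeneous term and the accumulated noise separately, with a telescoping split-the-sum estimate on the tail. Your version is the more standard stochastic-approximation argument and avoids the slightly loose step in the paper where it is asserted without elaboration that $D_{s+1+i}<\varepsilon$ for all $i\geq 0$ once $D_s<18\eta_s/\gamma$. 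For the rate, the paper works with $D_{s+1}\leq\max\{(1-\gamma\eta_s/2)D_s,\,18\eta_s/\gamma\}$ and invokes the full all-$k$ form of the step-size condition, whereas your induction with $C=\max\{18/\gamma,\,D_0/\eta_0\}$ only ever uses the $k=0$ instance $\eta_s(1-\gamma\eta_s/2)\leq\eta_{s+1}$; this is a genuine simplification, and your choice of $C$ also handles the base case explicitly, which the paper's bare claim $D_s\leq 18\eta_s/\gamma$ glosses over (at $s=0$ one only knows $D_0=O(\eta)$ while $\eta_0$ may be much smaller). Your verification of the concrete schedule matches the paper's (note the lemma statement writes $\eta_s=1/(s+z)$ while the proof and Corollary \ref{col:main} use $2/(\gamma s+\gamma z)$, which is the version you check), and you correctly isolate the one ingredient not proved inside this lemma --- that the iterate remains in the good neighborhood --- as the content of Lemma \ref{lemma:convergenceinneighbourhood}.
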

\begin{proof}
First, if there exists $S\in \bN^+$, such that for all $s\geq S$, we have 
$$D_s\geq \frac{18\eta_s}{\gamma}, $$
then we have 
$$D_{s+1}\leq (1-\gamma\eta_s)D_s+9\eta_s^{2}\leq (1-\frac{\gamma\eta_s}{2})D_s.$$
As a result, $\forall s\geq S$, and $k\in \bN^{+}$
$$D_{s+k+1}\leq \Pi_{i=s}^{s+k}(1-\frac{\gamma\eta_i}{2})D_s.$$
On the other hand, if  there does not exist such $S$, there exists infinitely many $s$, such that 
$$D_s<\frac{18\eta_s}{\gamma}.$$
Moreover, if $D_s<18\eta_s/\gamma$
\begin{align*}
D_{s+1}&\leq (1-\gamma\eta_s)D_s+9\eta_s^{2}\\
&\leq  (1-\gamma\eta_s)\frac{18\eta_s}{\gamma}+9\eta_s^{2}\\
&\leq \frac{18\eta_s}{\gamma}.
\end{align*}

So, for any $\varepsilon>0$, we can choose large enough $s$, such that $D_s< \frac{18\eta_s}{\gamma}$, and $D_{s+1+i}<\varepsilon$ for any $i\geq 0$. So, we will always have 
$D_t\rightarrow 0$ as $t\rightarrow \infty$ as long as $\Pi_{i=0}^{k}(1-\gamma\eta_i/2)\rightarrow_{k\rightarrow\infty} 0$ and $\eta_k\rightarrow_{k\rightarrow\infty}0$.

Besides, we have 
$$D_{s+1}\leq \max\Big\{ (1-\frac{\gamma\eta_s}{2})D_s, \frac{18\eta_s}{\gamma}\Big\}.$$
Notice if 
\begin{equation}\label{eq:condition}
\eta_s\Pi_{i=0}^k(1-\frac{\gamma\eta_{s+i}}{2})\leq \eta_{s+k+1}
\end{equation}
for all $s,k\in\bN$, then for all $s\in \bN$, we can obtain
$$D_{s}\leq \frac{18\eta_s}{\gamma}.$$
For example, if $\eta_s=2/(\gamma s+\gamma z)$ for  large enough integer $z$, Eq. \ref{eq:condition} is satisfied by simple algebra and we have
$$D_s\leq O(\frac{1}{z+s}).$$

\end{proof}

Now we are ready to state the proof of our main theorem.
\paragraph{[Proof of Theorem \ref{thm:main} and and Corollary \ref{col:main}]}
Under the assumptions, we have the following properties hold simultaneously:
\begin{itemize}
\item[a]. $$ B_l\leq\Big\|\frac{\partial f(\bma,\bmW,\bmdelta+\bm\bmx) }{\partial \bmdelta}\Big\|\leq B_u\sqrt{\log d},$$
for all $\bmdelta\in\bB(\bm0,\varepsilon)$, where $B_l, B_u$ is of order  $\Theta (1)$. 
\item[b]. $$\Big\|\frac{\partial^2 f(\bma,\bmW,\bmdelta+\bm\bmx) }{\partial \bmdelta^2}\Big\|\leq M_u\log m~,\Big\|\frac{\partial^2 f(\bma,\bmW,\bmdelta+\bm\bmx) }{\partial \bmdelta^2}-\frac{\partial^2 f(\bma,\bmW,\bmdelta'+\bm\bmx) }{\partial \bmdelta^2}\Big\|\leq K_u(\log m)^{3/2}\|\bmdelta-\bmdelta'\|,$$
for all $\bmdelta\in\bB(\bm0,\varepsilon)$, where $M_u, K_u$ are of order  $\Theta (1)$. 
\item[c.] $$\arccos\big(\min_{
\bmdelta,\bmdelta'}\angle[\partial f(\bmdelta),\partial f(\bmdelta')]\big)+\arccos\Big(\sqrt{1-\frac{\eta}{(\cL B_l)^2}}\Big)\leq \frac{\pi}{4}.$$
\item[d.]There exists a threshold $\varepsilon_{\max}(m)=\Theta((\log m)^{-1})$, if $\varepsilon<\varepsilon_{\max}$, with high probability, there exists universal constants $\phi,\gamma>0$, for any  $\bmdelta\in\varepsilon\bS^{d-1}$, such that
$$\angle[\partial f(\bmdelta),\bmdelta]\geq \phi,$$
then for all $\|\bmv\|=1$,
$$\text{sgn}\Big((y-u)\bmdelta^T\partial f(
\bmdelta)\Big)\cdot\bmv^T\Xi\bmv\geq \gamma.$$
\end{itemize}
As a result, whenever $\bmdelta_{t+1}\in\bB^\circ(\bm0,\varepsilon)$
\begin{equation*}
L(\bmdelta_{t+1})-L(\bmdelta_t)\leq -\Omega(\eta).
\end{equation*}
Whenever $\bmdelta_t\in\varepsilon\bS^{d-1}$,  condition $c$ will ensure $\bmdelta_t\notin \Delta^{-}_{\eta}$ so that he trajectory will not stuck near local maximums. Besides, we have $\|\Gamma(\bmdelta)\|\geq \sqrt{\eta}$ if $\bmdelta\notin \Delta^{+}_{\eta}$ for $\bmdelta\in\varepsilon\bS^{d-1}$. That can ensure $L(\bmdelta_{t+1})-L(\bmdelta_{t})\leq-\Omega(\eta^2)$ for $\bmdelta_{t},\bmdelta_{t+1}\in \varepsilon\bS^{d-1}$. 

From the above discussions, we divide the ball into three regions. Let $\cR_1=\bB^{\circ}(\bm0,\varepsilon)$ be the interior of the ball. Let $\cR_2=\Delta^{+}_\eta$ and $\cR_3=\bB(\bm0,\varepsilon)\cap (\cR_1\cup \cR_2)^c$. Since there exists $\cL, \cU>0$ such that $\cL<|y-f(\bma,\bmW,\bmdelta+\bm\bmx)|<\cU$ for all $\bmdelta\in\bB(\bm0,\varepsilon)$, we claim at most $O(\eta^{-2})$ iterations, the trajectory will arrive at $\cR_2$. That is because each step will have at least $O(\eta^2)$  progress in decreasing the value of loss if  $\bmdelta_t\notin \cR_2$.

 Lastly, when $\|\Gamma(\bmdelta)\|\leq \sqrt{\eta}$, the results follow by applying Lemma \ref{lemma:convergenceinneighbourhood} and \ref{lemma:shrinkingstepsize}.

\subsection{Proof of Theorem \ref{thm:smallepsilon}}

\paragraph{Formal Statement of Theorem \ref{thm:smallepsilon}} Recall from Lemma \ref{lemma:nearopt}, for $m=\Omega(d^{5/2})$, there exists a threshold $\varepsilon_{\max}(m)=\Theta((\log m)^{-1})$, if $\varepsilon<\varepsilon_{\max}$, with high probability, there exists universal constants $\phi,\gamma>0$, for any  $\bmdelta\in\varepsilon\bS^{d-1}$, such that
$$\angle[\partial f(\bmdelta),\bmdelta]\geq \phi,$$
then for all $\|\bmv\|=1$,
$$\text{sgn}\Big((y-u)\bmdelta^T\partial f(
\bmdelta)\Big)\cdot\bmv^T\Xi\bmv\geq \gamma.$$
Based on that, there also exists a threshold $\tau_\varepsilon>0$, such that when $\varepsilon < \tau_\varepsilon$,
$$\min_{
\bmdelta,\bmdelta'}\angle[\partial f(\bmdelta),\partial f(\bmdelta')]\big)\geq \phi,$$
and there is only one minimum on the sphere in that case.
\begin{proof}
Notice 
$\min_{
\bmdelta,\bmdelta'}\angle[\partial f(\bmdelta),\partial f(\bmdelta')]\big)\rightarrow 1$ as $\varepsilon\rightarrow 0$, so we know if $\varepsilon$ is small enough, we will have $\min_{
\bmdelta,\bmdelta'}\angle[\partial f(\bmdelta),\partial f(\bmdelta')]\big)\geq \phi.$ That means the solid cone formed by $\partial f(\bmdelta)$ is included in the corresponding solid cone of $\Lambda^{+}=\{\bmdelta:\bmv^T\Xi(\bmdelta)\bmv\geq \gamma,\bmdelta\in\varepsilon\bS^{d-1}\}$.

Assume there are two local minimums, actually in $\Lambda^{+}$ the local minimums are strict , then there exists a path on the sphere such that there is a local maximum on this path. However, that is impossible since the Hessian approximate is positive definite.
\end{proof}

\section{More About Experiments}\label{sec:experimental-settings}
\subsection{Implementation Details}
\label{subsec:implementation-details}
{\bf Loss landscapes on simulated data.} In our experiments, we use a two-layer neural network with the hidden size of $16$ and the initialization is as Sec. \ref{subsec:setup}.  We first randomly choose an two-dimensional input $x$ with a norm smaller than the input scale. The epsilon $\epsilon$ is the product of the perturbation ratio $r$ and the input scale.  We then randomly choose $10000$ perturbations in the epsilon ball. The adversarial losses of these perturbations on the input $x$ are shown in Figure \ref{fig:trajectories}. The choice of the input $x$ is not important in our experiments and the landscapes based on another random choice is shown in  Sec. \ref{subsec:additional-results}. The impact of the width of the hidden layer is also shown in Sec. \ref{subsec:additional-results}. \\
{\bf Trajectories on simulated data.} We use the same settings for neural networks as those in the landscapes. We choose the perturbation with the maximal loss among the $10000$ random sampled perturbations as our local maxima. To show the trajectories, we conduct PGD $10$ times with the best learning rate from 1e$-6$, 1e$-5$, 1e$-4$, 1e$-3$, 1e$-2$, 1e$-1$, 1, 1e
$1$, 1e$2$, 1e$3$. \\
{\bf Trajectories on real-world data.} Our experiments in Fig. \ref{fig:trajectory-real} are based on a real-world dataset MNIST. We use the same multiple-layer CNN architecture except the dropout in \url{https://github.com/pytorch/examples/tree/master/mnist}. We change the original ten-class classification to binary classification to distinguish odd and even numbers.  Because the inputs are high-dimensional ($28 \times 28$), we instead show the loss of PGD from the local maxima to the local minima. We first randomly sample an image from MNIST as our input $x$. We then start with a random perturbation and use PGD to find the local maxima. After tuning the hyperparameters, we find that running $1000$ epochs of PGD with a learning rate of $1.0$ can achieve good enough local maxima. After that, we run $1000$ epochs of PGD with a learning rate of $1.0$ to show the adversarial loss of each point on the trajectory from the local maxima to the local minima.  \\
{\bf Dynamics of trajectories on real-world data.} We use the same setting as that in Fig. \ref{fig:trajectory-real}. To train the CNN model, we randomly sample $100$ images with odd numbers and $100$ figures with even numbers from MNIST as our training data. We set the learning rate of adversarial training as $0.01 \times r$, where $r$ is the perturbation ratio.  
\subsection{Additional Results}
\label{subsec:additional-results}
\begin{figure*}[t]
		\centering
		
		\subfigure[Input scale=0.01, hidden size=16]{
			\centering
			\includegraphics[scale=0.3]{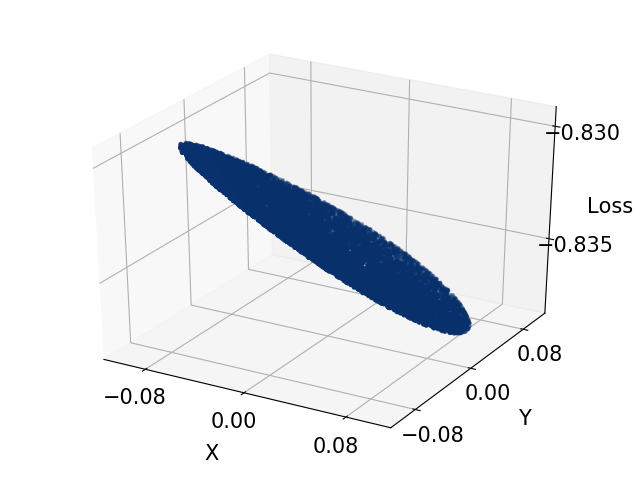}
			\label{fig:l-0.01-10-16}}
        \hspace{0.01in}
        \subfigure[Input scale=0.01, hidden size=128]{
		\centering
		\includegraphics[scale=0.3]{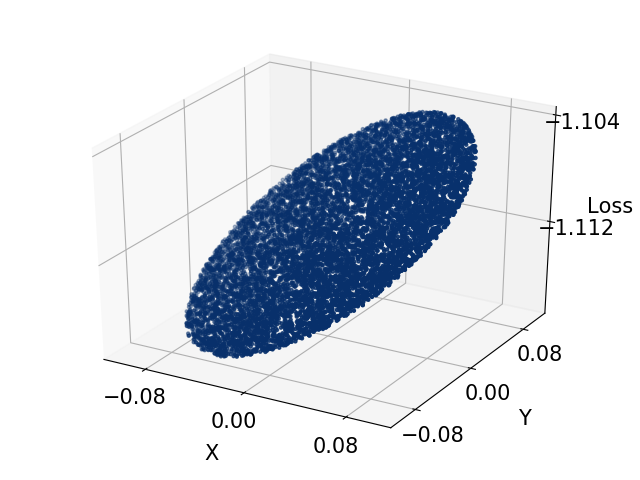}
		\label{fig:l-0.01-10-128}}
        \hspace{0.01in}
        \subfigure[Input scale=0.01, hidden size=1024]{
		\centering
		\includegraphics[scale=0.3]{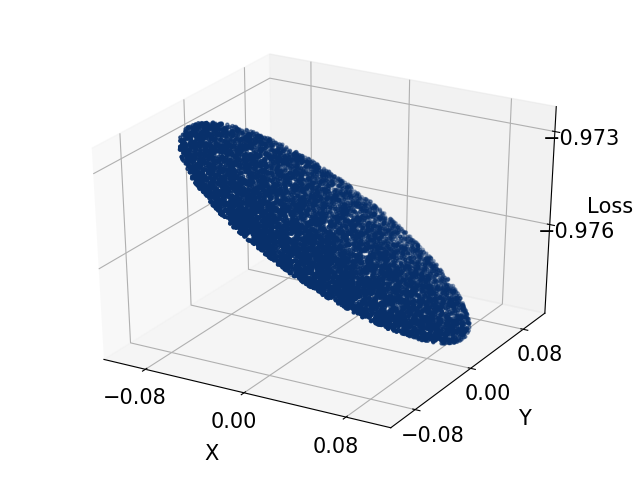}
		\label{fig:l-0.01-10-1024}}
        \hspace{0.01in}
        
        \subfigure[Input scale=1.0, hidden size=16]{
			\centering
			\includegraphics[scale=0.3]{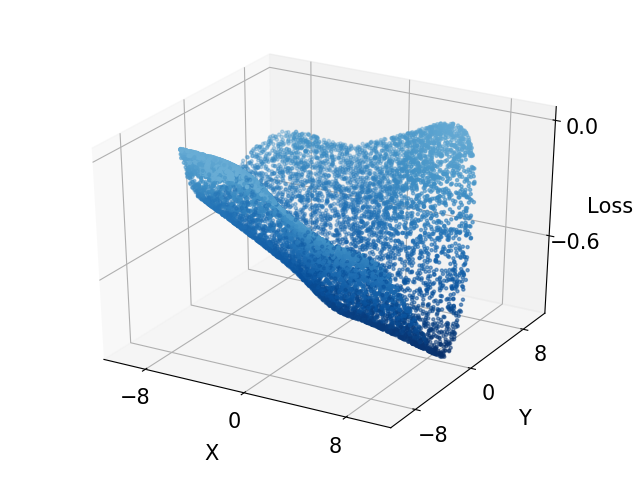}
			\label{fig:l-1.0-10-16}}
        \hspace{0.01in}
        \subfigure[Input scale=1.0, hidden size=128]{
		\centering
		\includegraphics[scale=0.3]{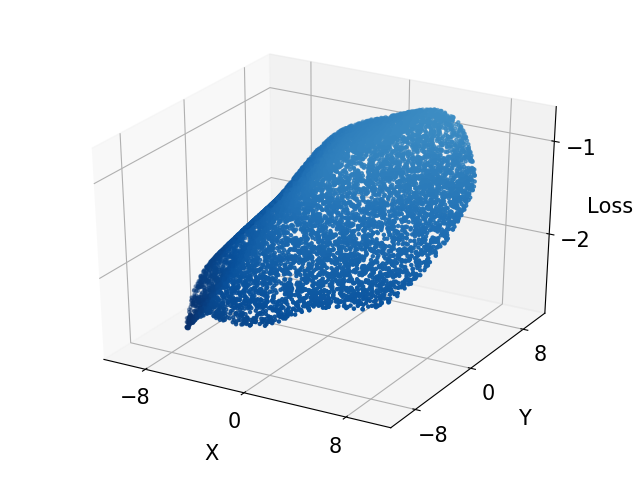}
		\label{fig:l-1.0-10-128}}
        \hspace{0.01in}
        \subfigure[Input scale=1.0, hidden size=1024]{
		\centering
		\includegraphics[scale=0.3]{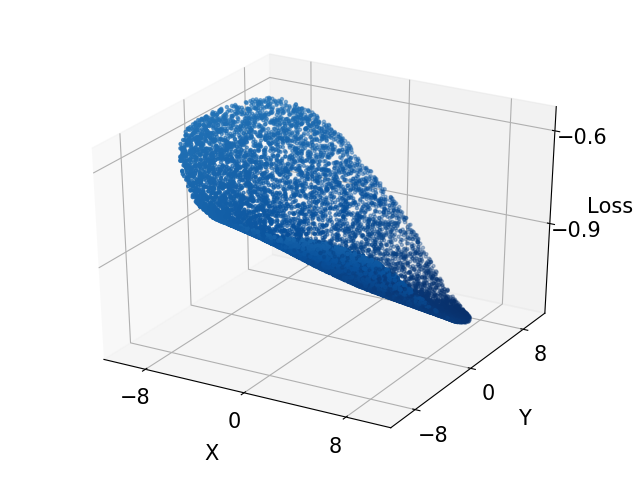}
		\label{fig:l-1.0-10-1024}}
        \hspace{0.01in}
     
        \subfigure[Input scale=100, hidden size=16]{
			\centering
			\includegraphics[scale=0.3]{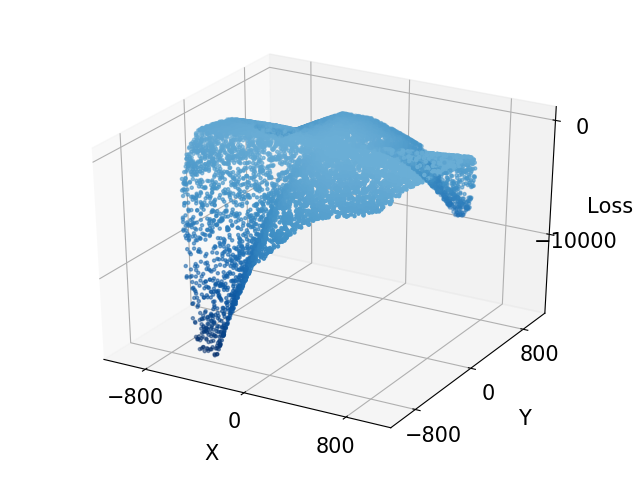}
			\label{fig:l-100-10-16}}
        \hspace{0.01in}
        \subfigure[Input scale=100, hidden size=128]{
		\centering
		\includegraphics[scale=0.3]{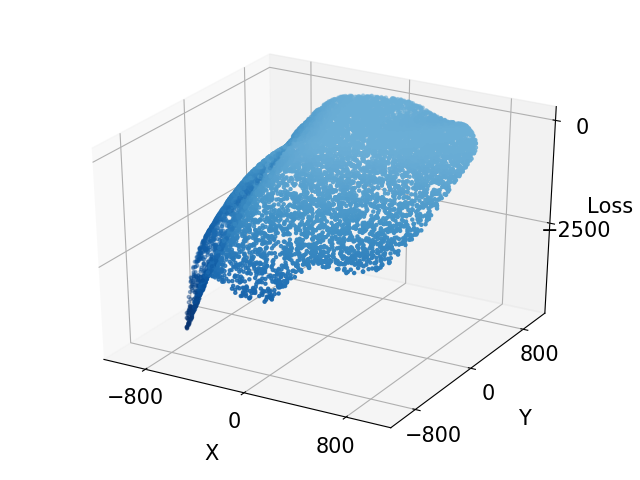}
		\label{fig:l-100-10-128}}
        \hspace{0.01in}
        \subfigure[Input scale=100, hidden size=1024]{
		\centering
		\includegraphics[scale=0.3]{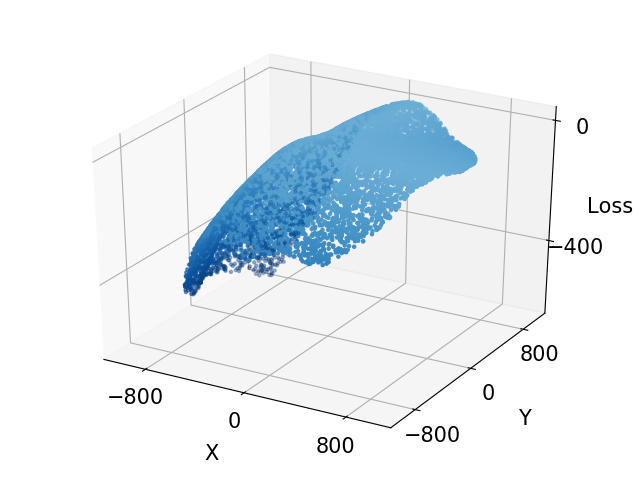}
		\label{fig:l-100-10-1024}}
        \hspace{0.01in}
        
        \caption{Landscapes of adversarial losses on simulated data with different hidden sizes and different input scales. We here fix the perturbation ratio as $10$. We find that wider neural networks lead to more regular landscapes in general. }
		\label{fig:landscapes-hidden-size}
\end{figure*}

\begin{figure*}[t]
		\centering
		
		\subfigure[Input scale=0.01, ratio=0.1]{
			\centering
			\includegraphics[scale=0.3]{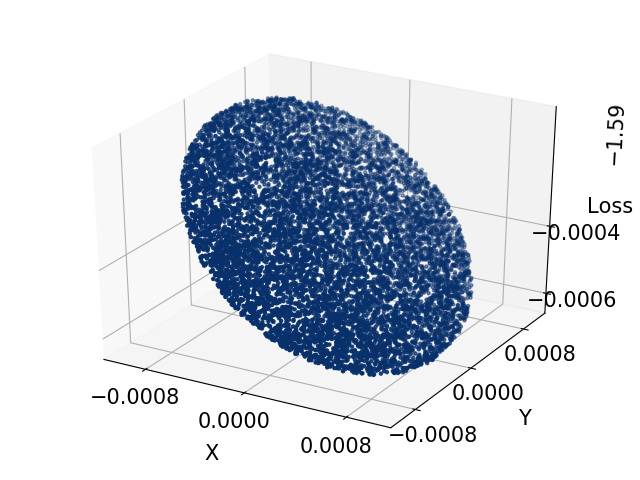}
			\label{fig:l-0.01-0.1-6666}}
        \hspace{0.01in}
        \subfigure[Input scale=0.01, ratio=1.0]{
		\centering
		\includegraphics[scale=0.3]{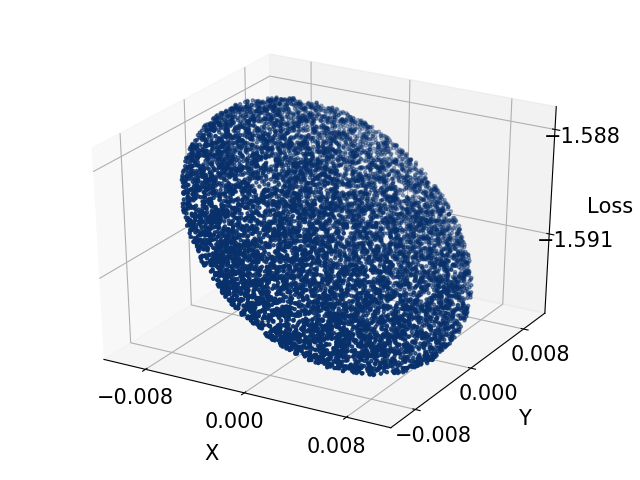}
		\label{fig:l-0.01-1.0-6666}}
        \hspace{0.01in}
        \subfigure[Input scale=0.01, ratio=10]{
		\centering
		\includegraphics[scale=0.3]{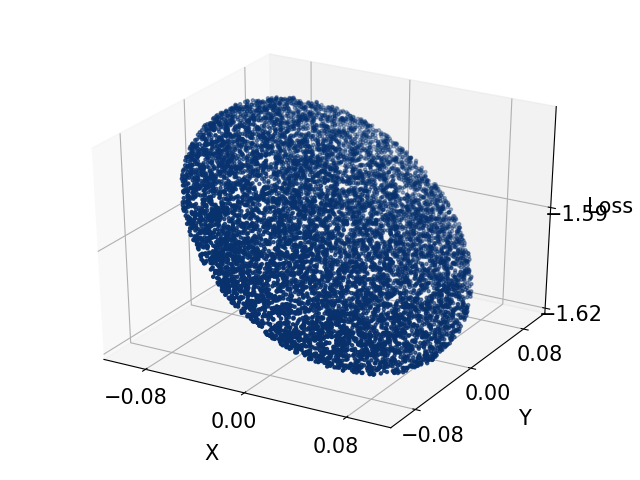}
		\label{fig:l-0.01-10-6666}}
        \hspace{0.01in}
        
        \subfigure[Input scale=1.0, ratio=0.1]{
			\centering
			\includegraphics[scale=0.3]{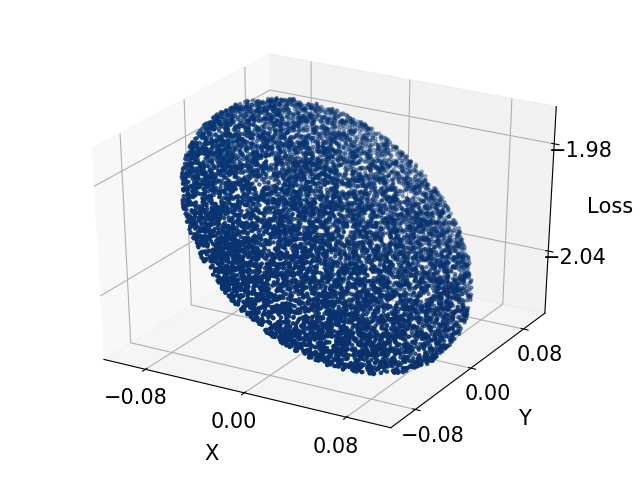}
			\label{fig:l-1.0-0.1-6666}}
        \hspace{0.01in}
        \subfigure[Input scale=1.0, ratio=1.0]{
		\centering
		\includegraphics[scale=0.3]{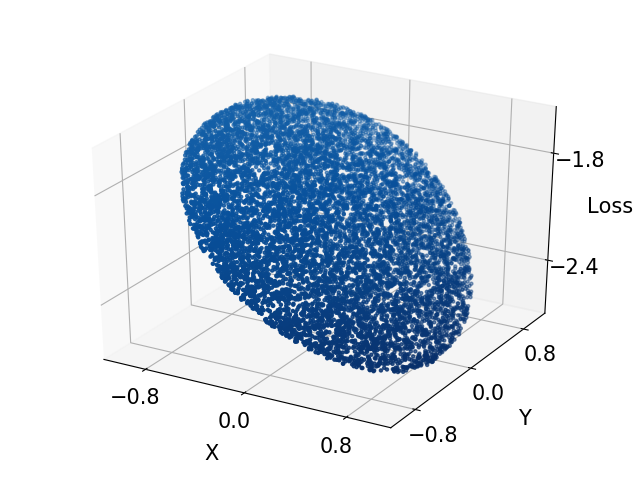}
		\label{fig:l-1.0-1.0-6666}}
        \hspace{0.01in}
        \subfigure[Input scale=1.0, ratio=10]{
		\centering
		\includegraphics[scale=0.3]{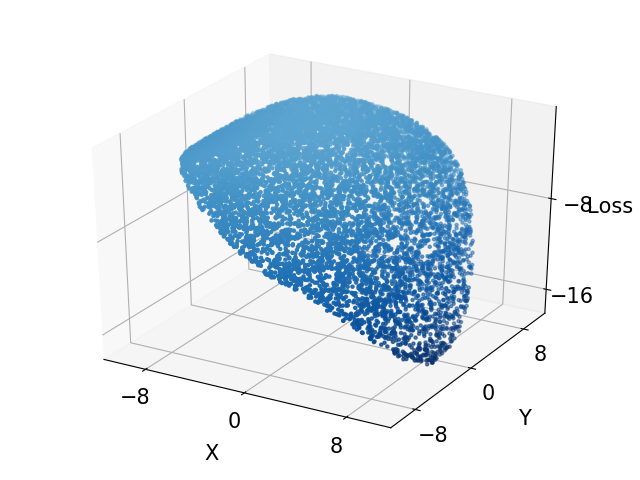}
		\label{fig:l-1.0-10-6666}}
        \hspace{0.01in}
     
        \subfigure[Input scale=100, ratio=0.1]{
			\centering
			\includegraphics[scale=0.3]{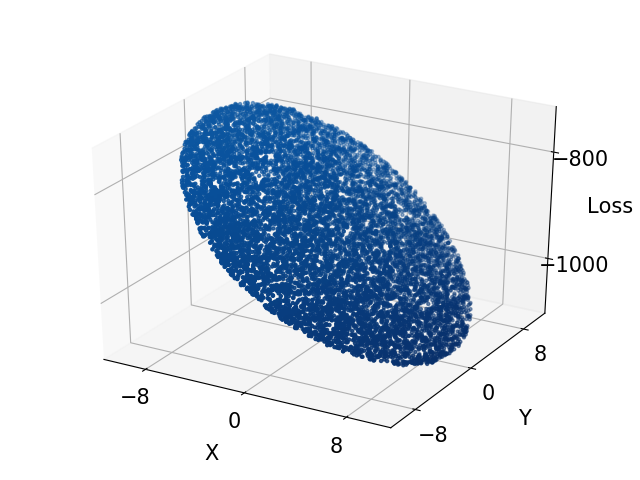}
			\label{fig:l-100-0.1-6666}}
        \hspace{0.01in}
        \subfigure[Input scale=100, ratio=1.0]{
		\centering
		\includegraphics[scale=0.3]{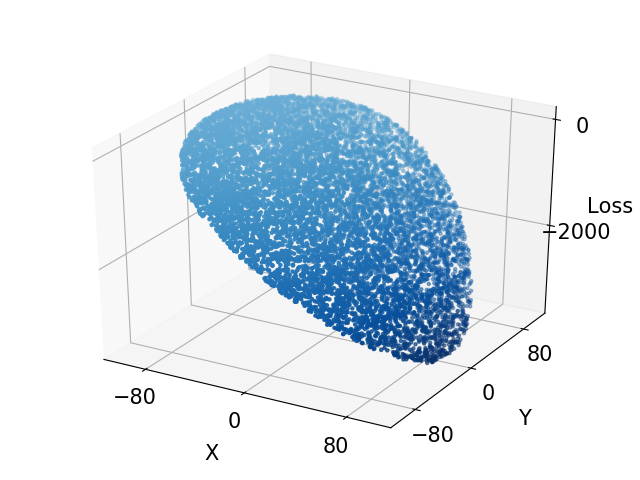}
		\label{fig:l-100-1.0-6666}}
        \hspace{0.01in}
        \subfigure[Input scale=100, ratio=10]{
		\centering
		\includegraphics[scale=0.3]{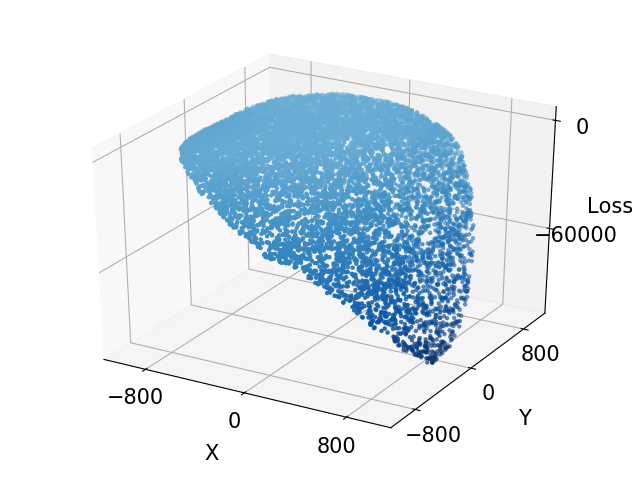}
		\label{fig:l-100-10-6666}}
        \hspace{0.01in}
        
        \caption{Landscapes of adversarial losses on simulated data with another random seed. }
		\label{fig:landscapes-6666}
\end{figure*}

We further analyze the impact of the hidden layer's width on the landscapes in Fig \ref{fig:landscapes-hidden-size} and the landscapes with a different random seed  are shown in Fig \ref{fig:landscapes-6666}. We try several different random seeds and find that the results are all consistent with our analysis. More details can be found in the code.


\end{document}